\newenvironment{pde}{\left\{\begin{array}{rll} } {\end{array}\right.}
\theoremstyle{plain}
\newtheorem{theorem}{Theorem}[]
\newtheorem*{theorem*}{Theorem}
\newtheorem*{"theorem"}{``Theorem''}
\newtheorem{lemma}[theorem]{Lemma}
\newtheorem*{theorem4}{Theorem \ref{theorem strongly convex}}
\theoremstyle{definition}
\theoremstyle{remark}
\newtheorem{remark}[theorem]{Remark}
\DeclarePairedDelimiterX{\inp}[2]{\langle}{\rangle}{#1, #2}
\newcommand{\N}{\mathbb{N}}
\newcommand{\R}{\mathbb{R}}
\renewcommand{\L}{{\mathscr L}}
\let \Ra = \Rightarrow
\let \LRa = \Leftrightarrow
\DeclareMathOperator*{\argmin}{argmin} 
\newcommand{\F}{{\mathcal F}}
\newcommand{\E}{\mathbb E}
\newcommand{\df}{\nabla f(x'_n)}
\let\eps = \varepsilon
\newcommand{\Risk}{\mathcal{R}}
\renewcommand{\d}{\mathrm{d}}
\newcommand{\dt}{\d t}
\newcommand\norm[1]{\left\Vert#1\right\Vert}
\renewcommand{\P}{\mathbb{P}}
\title{Nesterov acceleration despite very noisy gradients}
\author{%
  Kanan Gupta\\
  Department of Mathematics\\
  University of Pittsburgh\\
  \texttt{kanan.g@pitt.edu}\\
  \And
  Jonathan W. Siegel \\
  Department of Mathematics\\
  Texas A\&M University\\
  \texttt{jwsiegel@tamu.edu} 
  \And
  Stephan Wojtowytsch \\
Department of Mathematics\\
  University of Pittsburgh\\
  \texttt{s.woj@pitt.edu} 
}
\begin{document}
\maketitle

\begin{abstract}
  We present a generalization of Nesterov's accelerated gradient descent algorithm. Our algorithm (AGNES) provably achieves acceleration for smooth convex and strongly convex minimization tasks with noisy gradient estimates if the noise intensity is proportional to the magnitude of the gradient at every point. Nesterov's method converges at an accelerated rate if the constant of proportionality is below $1$, while AGNES accommodates any signal-to-noise ratio. The noise model is motivated by applications in overparametrized machine learning. AGNES requires only two parameters in convex and three in strongly convex minimization tasks, improving on existing methods. We further provide clear geometric interpretations and heuristics for the choice of parameters.
\end{abstract}

\section{Introduction}
    
    The recent success of deep learning \citep{lecun2015deep} is built on stochastic first order optimization methods such as stochastic gradient descent \citep{lecun1998gradient} and ADAM \citep{kingma2014Adam}, which have enabled the large-scale training of neural networks. While such tasks are generally non-convex, accelerated first order methods for convex optimization have proved practically useful. Specifically, \cite{nesterov_original}'s accelerated gradient descent 
    has become a standard training method \citep{sutskever2013importance}.

    Modern neural networks tend to operate in the {\em overparametrized} regime, i.e.\ the number of model parameters exceeds the number of data points to be fit \citep{belkin2021fit}. In this setting, minibatch gradient estimates are exact (namely, exactly 0) on the set of global minimizers since data can be interpolated exactly. Motivated by such applications, \citet{vaswani2019fast} proved that \citet{nesterov2012efficiency}'s accelerated coordinate descent method (ACDM) achieves acceleration in (strongly) convex optimization with {\em multiplicative noise}, i.e.\ when assuming stochastic gradient estimates for which the noise intensity scales linearly with the magnitude of the gradient. Conversely, \citet{liu2018accelerating} show that the original version of \citet{nesterov_original}'s method generally does not achieve acceleration in this setting.
    
    Another algorithm with a similar goal is the continuized Nesterov method (CNM), which has been studied by \cite{even2021continuized, berthier2021continuized} in convex optimization (deterministic or with additive noise) and with multiplicative noise for overparametrized linear least squares regression. For a more extensive discussion of the context of our work in the literature, please see Section \ref{section lit review}.
    
    \citet{vaswani2019fast}'s algorithm is a four parameter scheme in the strongly convex case, which reduces to a three parameter scheme in the convex case. \citet{liu2018accelerating} introduce a simpler three parameter scheme, but only prove that it achieves acceleration for overparametrized {\em linear problems}. 
    In this work, we demonstrate that it is possible to achieve the same theoretical guarantees as \citet{vaswani2019fast} with  a simpler scheme, which can be considered as a reparametrized version of \citet{liu2018accelerating}'s Momentum-Added Stochastic Solver (MaSS) method. More precisely, we prove the following:

%    \textbf{Contributions}
    \begin{enumerate}
        \item We show that Nesterov's accelerated gradient descent achieves an accelerated convergence rate, but \textit{only with noise which is {strictly} smaller than the gradient {in the $L^2$-sense}}. We also show numerically that when the noise is larger than the gradient, the algorithm diverges for a choice of step size for which gradient descent remains convergent.
        
        \item Motivated by this, we introduce a generalization of Nesterov’s method, which we call Accelerated Gradient descent with Noisy EStimators (AGNES), which provably achieves acceleration \textit{no matter how large the noise is relative to the gradient, both in the convex and strongly convex cases}.

        \item When moving from NAG to AGNES, the learning rate `bifurcates' to two parameters in order to accommodate stochastic gradient estimates. The extension 
        requires three hyperparameters in the strongly convex case and two in the convex case.

        \item We provide a transparent geometric interpretation of the AGNES parameters in terms of their scaling with problem parameters (Appendix \ref{appendix momentum parameters}) and the continuum limit models for various scaling  regimes (Appendix \ref{appendix continuous time}).

        \item We build strong intuition for the choice of hyperparameters for machine learning applications and empirically demonstrate that AGNES improves the training of CNNs relative to SGD with momentum and Nesterov's accelerated gradient descent.
    \end{enumerate}

\section{Literature Review}\label{section lit review}

     \paragraph{Accelerated first order methods.}
    Accelerated first order methods have been extensively studied in convex optimization. Beginning with the conjugate gradient (CG) algorithm introduced by \citet{hestenes1952methods}, the Heavy ball method of \citet{polyak1964some}, and \citet{nesterov_original}'s seminal work on accelerated gradient descent, many authors have developed and analyzed accelerated first order methods for convex problems, including \citet{beck2009fast,nesterov2012efficiency,nesterov2013gradient,chambolle2015convergence,kim2018another} to name just a few. 
    
    An important line of research is to gain an understanding of how accelerated methods work.
    After \citet{polyak1964some} derived the original Heavy ball method as a discretization of an ordinary differential equation, \citet{alvarez2002second,su2014differential,wibisono2016variational,zhang2018direct,siegel2019accelerated,shi2019acceleration,muehlebach2019dynamical,wilson2021lyapunov,shi2021understanding, suh2022continuous, attouch2022first, doi:10.1137/21M1403990, aujol2,dambrine2022stochastic} studied accelerated first order methods from the point of view of ODEs. {This perspective has facilitated the use of Lyapunov functional analysis to quantify the convergence properties. We remark that in addition to the intuition provided by differential equations, \cite{joulani2020simpler} and \cite{gasnikov2018universal} have also proposed interesting ideas for explaining and deriving accelerated first-order methods. In addition, there has been a large interest in deriving adaptive accelerated first order methods, see for instance \cite{levy2018online,cutkosky2019anytime,kavis2019unixgrad}.}

    \paragraph{Stochastic optimization.} \citet{robbins1951stochastic} first introduced optimization algorithms where gradients are only estimated by a stochastic oracle. For convex optimization, \citet{nemirovski2009robust,ghadimi2012optimal} obtained minimax-optimal convergence rates with additive stochastic noise.
    
    In deep learning, stochastic algorithms are ubiquitous in the training of deep neural networks, see \citep{lecun1998gradient,lecun2015deep,goodfellow2016deep,bottou2018optimization}. 
    Here, the additive noise assumption not usually appropriate. As \citet{wojtowytsch2021stochasticdiscrete, wualignment} show, the noise is of low rank and degenerates on the set of global minimizers. \citet{2019arXiv190704232S, 10.5555/3455716.3455953, DBLP:journals/corr/abs-1811-02564, gower2019sgd, damian2021label,wojtowytsch2021stochasticdiscrete, 10.5555/3495724.3497511} consider various non-standard noise models and \citep{wojtowytsch2021stochasticcontinuous, 10.5555/3495724.3497511, li2022what} study the continuous time limit of stochastic gradient descent. These include noise assumptions for degenerate noise due to \cite{DBLP:journals/corr/abs-1811-02564,  damian2021label, wojtowytsch2021stochasticdiscrete, wojtowytsch2021stochasticcontinuous}, low rank noise studied by \cite{damian2021label, li2022what} and noise with heavy tails explored by \cite{10.5555/3495724.3497511}. 

    \paragraph{Acceleration with stochastic gradients.} \citet{kidambi2018insufficiency} prove that there are situations in which it is impossible for any first order oracle method to improve upon SGD due to information-theoretic lower bounds. More generally, lower bounds in the stochastic first order oracle (SFO) model were presented by \cite{nemirovski2009robust} (see also \citep{ghadimi2012optimal}). 
    
    A partial improvement on the state of the art is given by \cite{jain2018accelerating}, who present an {accelerated stochastic gradient} method motivated by a particular low-dimensional and strongly convex problem. \citet{laborde2020lyapunov} obtain faster convergence of an accelerated method under an additive noise assumption by a Lyapunov function analysis. \cite{bollapragada2022fast} study an accelerated gradient method for the optimization of a strongly convex quadratic objective function with minibatch noise.

    Closest to our work are \cite{liu2018accelerating, vaswani2019fast, even2021continuized, berthier2021continuized} who study generalizations of Nesterov's method in stochastic optimization. \citet{liu2018accelerating, even2021continuized} obtain guarantees with noise of approximately multiplicative noise in overparametrized linear least squares problems and for general convex objective functions with additive noise and in deterministic optimization. \citet{vaswani2019fast} obtain comparable guarantees for the more complicated method of \citet{nesterov2013gradient}.

	\section{Algorithm and Convergence Guarantees} \label{section algorithm}
	
	\subsection{Assumptions}\label{section assumptions}
		
	In the remainder of this article, we consider the task of minimizing an objective function $f:\mathbb R ^m \rightarrow \mathbb R$ using stochastic gradient estimates $g$. We assume that $f$, $g$ and the initial condition $x_0$ satisfy:
    \begin{enumerate}
    \item The initial condition $x_0$ is a (potentially random) point such that $\E[f(x_0) + \|x_0\|^2]< \infty$.
    \item $f$ is $L-$smooth, i.e.\ $\nabla f$ is $L-$Lipschitz continuous with respect to the Euclidean norm.
    \item There exists a probability space $(\Omega,\mathcal A, \P)$ and a gradient estimator, i.e.\ a measurable function $g:\mathbb R^m\times \Omega\rightarrow \mathbb{R}^m$ such that for all $x\in\R^m$ the properties
    \begin{itemize}{
		\item $\E_\omega[g(x,\omega)] = \nabla f(x)$ (unbiased gradient oracle) and
		\item $\E_\omega\big[\Vert g(x,\omega)-\nabla f(x)\Vert^2  \big] \leq \sigma^2 \,\Vert\nabla f(x)\Vert^2$ (multiplicative noise scaling) hold.}
	\end{itemize}
	\end{enumerate}
    {A justification of the multiplicative noise scaling is given in Section \ref{section setting}. In the setting of machine learning, the space $\Omega$ is given by the random subsampling of the dataset.} A rigorous discussion of the probabilistic foundations is given in Appendix \ref{appendix auxiliary}.

    \subsection{Nesterov's Method with Multiplicative Noise}
     First we analyze \cite{nesterov_original}'s accelerated gradient descent algorithm {(NAG)} in the setting of multiplicative noise. {NAG} is given by {the initialization $x_0 = x_0'$ and the two-step iteration}
    \begin{align}
    \begin{split}
    x_{n+1} = x'_n - \eta g'_n, \qquad
    x'_{n+1} = x_{n+1} + \rho_n \big(x_{n+1}-x_n\big) = x_{n+1} + \rho_n (x_n' - \eta g'_n - x_n)
    \label{eq nesterov}
    \end{split}
    \end{align}
    where $g'_n = g(x'_n,\omega_n)$ and the variables $\omega_n$ are iid samples from the probability space $\Omega$, i.e.\ $g'_n$ is an unbiased estimate of $\nabla f(x'_n)$. We write $\rho$ instead of $\rho_n$ in cases where a dependence on $n$ is not required. We show that this scheme achieves an $O(1/n^2)$ convergence rate for convex functions but \textit{only in the case that $\sigma<1$}. To the best of our knowledge, this analysis is optimal.

    \begin{restatable}[{NAG}, convex case]{theorem}{nesterovconv}\label{theorem nesterov convex}
    Suppose that $x_n$ and $x'_n$ are generated by the time-stepping scheme (\ref{eq nesterov}), $f$ and $g$ satisfy the conditions laid out in Section \ref{section assumptions}, $f$ is convex, and $x^*$ is a point such that $f(x^*) = \inf_{x\in\R^m} f(x)$. If $\sigma < 1$ and the parameters are chosen such that
    \[ 0<\eta \leq \frac{1-\sigma^2}{L(1+\sigma^2)}, \quad \text{and} \quad \rho_n = \frac{n}{n+3},\qquad\text{
    then}\qquad
     \E[f(x_n) - f(x^*)] \leq  \frac{2\E[\norm{x_0 - x^*}^2]}{{\eta n^2}}.\] The expectation on the right hand side is over the random initialization $x_0$.
    \end{restatable}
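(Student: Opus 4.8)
The plan is to prove convergence via a Lyapunov (potential function) argument, the standard modern technique for analyzing accelerated methods. I would introduce a discrete energy of the form
\[
\mathcal{E}_n = a_n\,\E[f(x_n) - f(x^*)] + \frac{1}{2}\,\E\big[\norm{z_n - x^*}^2\big],
\]
where $a_n$ is an increasing weight sequence (for the $O(1/n^2)$ rate one expects $a_n \sim \eta n^2/2$, consistent with the claimed bound) and $z_n$ is an auxiliary ``momentum'' sequence defined so that the scheme \eqref{eq nesterov} can be recast in the three-sequence form $x_n, x'_n, z_n$. The goal is to show $\mathcal{E}_{n+1} \le \mathcal{E}_n$, so that telescoping gives $a_n\,\E[f(x_n)-f(x^*)] \le \mathcal{E}_n \le \mathcal{E}_0 = \frac{1}{2}\E[\norm{x_0-x^*}^2]$, which rearranges to the stated estimate once the constants in $a_n$ are pinned down.

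First I would rewrite the update. Conditioning on the filtration generated by $\omega_0,\dots,\omega_{n-1}$, I use unbiasedness $\E_{\omega_n}[g_n] = \df$ to replace $g_n$ by $\df$ in all first-order (linear) terms, and I use the multiplicative noise bound $\E_{\omega_n}[\norm{g_n - \df}^2] \le \sigma^2\norm{\df}^2$ to control the second-order terms that arise when I expand squared norms like $\norm{z_{n+1}-x^*}^2$ and $\norm{x_{n+1}-x'_n}^2 = \eta^2\norm{g_n}^2$. The key identity here is $\E_{\omega_n}[\norm{g_n}^2] = \norm{\df}^2 + \E_{\omega_n}[\norm{g_n-\df}^2] \le (1+\sigma^2)\norm{\df}^2$.

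Next I would assemble the one-step change $\mathcal{E}_{n+1}-\mathcal{E}_n$. The two essential analytic inputs are: the $L$-smoothness descent inequality, which after substituting $x_{n+1} = x'_n - \eta g_n$ and taking expectations yields $\E[f(x_{n+1})] \le \E[f(x'_n)] - \eta\,\E[\norm{\df}^2] + \frac{L\eta^2}{2}\E[\norm{g_n}^2] \le \E[f(x'_n)] - \eta\big(1 - \frac{L\eta(1+\sigma^2)}{2}\big)\E[\norm{\df}^2]$; and the convexity inequalities $f(x^*) \ge f(x'_n) + \langle \df, x^* - x'_n\rangle$ and $f(x_n) \ge f(x'_n) + \langle \df, x_n - x'_n\rangle$, used to bound the cross terms generated when expanding $\norm{z_{n+1}-x^*}^2$. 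Collecting everything, the negative gradient-norm term from the descent step must dominate the positive $\frac{L\eta^2(1+\sigma^2)}{2}\norm{\df}^2$ term from expanding the momentum distance; this is exactly where the stepsize restriction enters, and requiring the net coefficient of $\norm{\df}^2$ to be nonpositive forces $\eta \le \frac{1-\sigma^2}{L(1+\sigma^2)}$ and explains why one needs $\sigma < 1$ (for $\sigma \ge 1$ the admissible stepsize degenerates to zero).

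The main obstacle I anticipate is bookkeeping: choosing the auxiliary sequence $z_n$ and the weights $a_n$ consistently so that the cross terms cancel exactly and the leftover squared-gradient terms combine into a single sign-definite expression. Concretely, one must verify an algebraic relation tying $\rho_n = n/(n+3)$ to the ratio $a_{n+1}/a_n$ so that the momentum coefficient matches the potential weights; getting $\rho_n$ and $a_n \sim \eta n(n+2)/2$ to be mutually compatible is the delicate step, and it is where the specific choice $\rho_n = n/(n+3)$ comes from rather than being arbitrary. Once the monotonicity $\mathcal{E}_{n+1} \le \mathcal{E}_n$ is established, telescoping and the initial condition $\mathcal{E}_0 = \frac{1}{2}\E[\norm{x_0-x^*}^2]$ (since $x_0 = x'_0$ and the initial function-value weight vanishes) yield the claimed bound directly.
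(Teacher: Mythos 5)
Your proposal is correct and follows essentially the same route as the paper's proof, which runs exactly this Lyapunov argument with $\L_n = \eta n(n+2)\,\E[f(x_n)-f(x^*)] + \tfrac12\,\E\big[\norm{n(x_n'-x_n)+2(x_n'-x^*)}^2\big]$ (your $\mathcal{E}_n$ up to a factor of $4$), uses the same conditional-expectation treatment of the noise via $\E[\norm{g_n}^2] \le (1+\sigma^2)\E[\norm{\df}^2]$, the same descent and convexity inequalities, and obtains the stepsize restriction exactly as you predict: balancing the $(1+\sigma^2)$-inflated squared-gradient term from the momentum expansion against the descent term forces, in the limit $n\to\infty$, the inequality $\tfrac{1+\sigma^2}{2}\le 1-\tfrac{L(1+\sigma^2)\eta}{2}$, i.e.\ $\eta\le\tfrac{1-\sigma^2}{L(1+\sigma^2)}$ and hence $\sigma<1$. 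The only bookkeeping correction is that matching the stated constant requires $a_n = \eta n(n+2)/4$ rather than $\sim\eta n^2/2$, with the compatibility condition $\rho_n = n/(n+3)$ arising from the paper's choice $b(n)=n$, $a(n)\equiv 2$ via $\rho_n = b(n)/(b(n+1)+a(n+1))$, just as you anticipate.
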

    
    The proof of Theorem \ref{theorem nesterov convex} is given in Appendix \ref{appendix convex}. Note that the constant $1/\eta$ blows up as $\sigma\nearrow 1$ and the analysis yields no guarantees for $\sigma>1$. This mirrors numerical experiments in Section \ref{section numerical}.
    
    \begin{restatable}[{NAG}, strongly convex case]{theorem}{nesterovstrongly}\label{theorem nesterov strongly}
	In addition to the assumptions in Theorem \ref{theorem nesterov convex}, suppose that $f$ is $\mu$-strongly convex and the parameters are chosen such that
	\[ 0 < \eta \leq \frac{1-\sigma^2}{L(1+\sigma^2)} \; \text{and} \; \rho = \frac{1-\sqrt{\mu\eta}}{1+\sqrt{\mu\eta}},	\; \text{then} \; 
	 \E[f(x_n)-f(x^*)] \leq  2(1 - \sqrt{\mu\eta})^n \,\E \left[ f(x_0) - f(x^*) %+ \frac\mu2 \norm{x_0 - x^*}^2 
 \right]. \]
    \end{restatable}
    Just like in the convex case, the step size $\eta$ decreases to zero as $\sigma \nearrow 1$, and we fail to obtain convergence guarantees for $\sigma\geq 1$. We argue in the proof of Theorem \ref{theorem nesterov strongly}, given in Appendix \ref{appendix strongly convex}, that it is not possible to modify the Lyapunov sequence analysis to obtain a better rate of convergence. This motivates our introduction of the more general AGNES method below.

    Notably, there cannot be a diverging lower bound for NAG since gradient descent arises in the special case $\rho=0$, and gradient descent converges for small stepsize with multiplicative noise \citep{wojtowytsch2021stochasticdiscrete}. On the other hand, \citet{liu2018accelerating} show that NAG does not achieve accelerated convergence with multiplicative type noise even for quadratic strongly convex functions.

	\subsection{AGNES Descent algorithm}\label{descent}
	
	The proofs of Theorems \ref{theorem nesterov convex} and \ref{theorem nesterov strongly} suggest that the momentum step in (\ref{eq nesterov}) is quite sensitive to the step size used for the gradient step, which severely restricts the step size $\eta$.  We propose the Accelerated Gradient descent with Noisy EStimators (AGNES) scheme, which addresses this problem by introducing an additional parameter $\alpha$ in the momentum step:
	\begin{align}
        \begin{split} \label{eq agnesterov}
		x_0 &= x'_0, \qquad x_{n+1} = x'_n - \eta g'_n, \qquad
        x'_{n+1} = x_{n+1} + \rho_n \big(x'_n -\alpha g'_n -x_n\big),
        \end{split}
	\end{align} 
	where $g'_n = g(x'_n,\omega_n)$ as before. {Equivalently, AGNES can be formulated as a three-step scheme with an auxiliary {velocity} variable $v_n$}, initialized as $v_0=0$: 
	 \begin{align}
    \begin{split}
	 	%v_0 = 0, \quad
		x_n' = x_n +\alpha v_n, \qquad
		x_{n+1} = x_n' - \eta g'_n, \qquad
		v_{n+1} = \rho_n(v_n - g'_n).
    \end{split} \label{eq agnes}
	\end{align} 
	We show that the two formulations of AGNES are equivalent in Appendix \ref{appendix equiv}. However, we find \eqref{eq agnes} more intuitive (see Appendix \ref{appendix continuous time} for a continuous time interpretation) and easier to implement as an algorithm without storing past values of $x_n$. The pseudocode and a set of suggested {default} parameters are given in Algorithm \ref{descent algorithm}.

\begin{algorithm}
\caption{Accelerated Gradient descent with Noisy EStimators (AGNES)}\label{descent algorithm}
\Input{$f$ (objective{/loss} function), $x_0$ (initial point), $\alpha=10^{-3}$ (learning rate), ${\eta=10^{-2}}$ (correction step size), $\rho=0.99$ ({momentum}), $N$ (number of iterations)}
$n \gets 0$ \\
$v_0 \gets 0$ \\
\While{$n < N$}{
	$g_{n} \gets \nabla_xf(x_{n})$ \DontPrintSemicolon \Comment*[r]{gradient estimate}
	$v_{n+1} \gets \rho(v_{n} - g_{n})$ \\
	$x_{n+1} \gets x_{n} +\alpha v_{n+1} - \eta g_{n}$ \\
	$n \gets n+1$ }
$g_{N} \gets \nabla_x f(x_N)$\\
$x_N \gets x_N - \eta g_n$\\
\Return{$x_N$}
\end{algorithm}
 
%\subsection{Convergence guarantees}\label{section guarantees}	
	From \eqref{eq nesterov} and \eqref{eq agnesterov}, we note that NAG is AGNES with the special choice $\alpha = \eta$. Allowing $\alpha$ and $\eta$ to be different helps AGNES achieve an accelerated rate of convergence for both convex and strongly convex functions, no matter how large $\sigma$ is.
	While for gradient descent, only the product $L(1+\sigma^2)$ has to be considered, this is not the case for momentum-based schemes. We consider first the convergence rate in the convex case.
    \begin{restatable}[AGNES, convex case]{theorem}{convex}\label{theorem convex}
	Suppose that $x_n$ and $x'_n$ are generated by the time-stepping scheme (\ref{eq agnes}), $f$ and $g'_n = g(x'_n, \omega_n)$ satisfy the conditions laid out in Section \ref{section assumptions}, $f$ is convex, and $x^*$ is a point such that $f(x^*) = \inf_{x\in\R^m} f(x)$. If the parameters are chosen such that
	\begin{align*}
	0<\eta < \frac1{L(1+\sigma^2)}, \quad \alpha = \frac{\eta}{1+\sigma^2},  \quad \rho_n = \frac{n}{n+1+a_0}, \quad
	\text{for} \quad a_0 \geq \frac{2(1-\eta L)}{1-\eta L(1+\sigma^2)}, \quad \text{then} \end{align*}
	\[
	\E\big[ f(x_n) - f(x^*)\big] \leq \frac{a_0^2\,\E\big[\,\|x_0-x^*\|^2\big]}{2\,\alpha\,n^2}.
	\]
     \end{restatable}	

In particular, if $\eta \leq \frac1{L(1+2\sigma^2)}$, we may make the universal choice $a_0 = 4$, i.e.\ $\rho_n = \frac{n}{n+5}$. Only the parameters $\eta, \alpha$ depend on the specific problem.
 The proof of Theorem \ref{theorem convex} is given in Appendix \ref{appendix convex}. There, we also present an alternative version of Theorem \ref{theorem convex} for a different choice of parameters 
\[
\eta \leq \frac1{L(1+\sigma^2)}, \quad \alpha < \frac\eta{1+\sigma^2}, \quad \rho_n = \frac{n+n_0}{n+n_0+3}
\]
for a potentially large $n_0 \geq \frac{2\eta\sigma^2}{\eta - \alpha(1+\sigma^2)}\geq  2\sigma^2$. The convergence guarantees are similar in both cases.

The benefit of the accelerated scheme is an improvement from a decay rate of $O(1/n)$ to the rate $O(1/n^2)$, which is optimal under the given assumptions even in the deterministic case.
While the noise can be orders of magnitude larger than the quantity we want to estimate, it only affects the constants in the convergence, not the rate. We get an analogous result for strongly convex functions.

\begin{restatable}[AGNES, strongly convex case]{theorem}{agnesstrongly}\label{theorem strongly convex}
	In addition to the assumptions in Theorem \ref{theorem convex}, suppose that $f$ is $\mu$-strongly convex and the parameters are chosen such that
	\begin{align*}
        0<\eta \leq \frac{1}{L(1+\sigma^2)}, \qquad
        \rho = \frac{1 - \sqrt{\frac{\mu\eta}{1+\sigma^2}}} {1 + \sqrt{\frac{\mu\eta}{1+\sigma^2}}}, \qquad
        \text{and} \qquad
        \alpha = \frac{1-\sqrt{\frac\mu L}}{1-\sqrt{\frac\mu L}+\sigma^2}\,\eta \qquad \text{then}
        \end{align*}
	\[ \E[f(x_n) - f(x^*)] \leq 2\left(1 - \sqrt{\frac{\mu\eta}{1+\sigma^2}}\right)^n \E[f(x_0) - f(x^*)].
    \]
\end{restatable}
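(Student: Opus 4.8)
The plan is to run a Lyapunov-sequence argument in direct analogy with the convex case of Theorem \ref{theorem convex}, replacing the polynomially growing weights by a constant geometric contraction factor $r := 1 - \sqrt{\mu/L}\,/(1+\sigma^2)$. The first ingredient is the descent estimate coming from $L$-smoothness together with the multiplicative noise bound. Since $x_{n+1} = x_n' - \eta g_n$, smoothness gives $f(x_{n+1}) \le f(x_n') - \eta\inp{\nabla f(x_n')}{g_n} + \tfrac{L\eta^2}{2}\norm{g_n}^2$; taking the conditional expectation over $\omega_n$ and using $\E_n[g_n] = \nabla f(x_n')$ and $\E_n\norm{g_n}^2 \le (1+\sigma^2)\norm{\nabla f(x_n')}^2$, the choice $\eta = 1/(L(1+\sigma^2))$ collapses this to
\[
\E_n[f(x_{n+1})] \le f(x_n') - \tfrac\eta2\,\norm{\nabla f(x_n')}^2.
\]

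The second ingredient is a reference sequence that absorbs the velocity variable. Setting $z_n := x_n + \tfrac{\alpha}{1-\rho}\,v_n$ and combining $x_{n+1} = x_n + \alpha v_n - \eta g_n$ with $v_{n+1} = \rho(v_n - g_n)$, one checks that $z_n$ satisfies the plain recursion $z_{n+1} = z_n - \gamma g_n$ with $\gamma = \eta + \tfrac{\alpha\rho}{1-\rho}$, and that $z_n - x_n' = \tfrac{\alpha\rho}{1-\rho}v_n$. Expanding the square and taking $\E_n$ yields
\[
\tfrac\mu2\E_n\norm{z_{n+1}-x^*}^2 = \tfrac\mu2\norm{z_n-x^*}^2 - \mu\gamma\inp{\nabla f(x_n')}{z_n-x^*} + \tfrac{\mu\gamma^2}2\E_n\norm{g_n}^2,
\]
again bounding $\E_n\norm{g_n}^2 \le (1+\sigma^2)\norm{\nabla f(x_n')}^2$. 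I would then take the Lyapunov function $\mathcal E_n := \E[f(x_n)-f(x^*)] + \tfrac\mu2\E\norm{z_n - x^*}^2$ and aim to prove the one-step contraction $\E_n[\mathcal E_{n+1}] \le r\,\mathcal E_n$.

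To establish the contraction I would rewrite $f(x_n')-f(x^*)$ and $f(x_n')-f(x_n)$ via strong convexity, namely $f(x_n')-f(x^*) \le \inp{\nabla f(x_n')}{x_n'-x^*} - \tfrac\mu2\norm{x_n'-x^*}^2$ and, using $x_n' - x_n = \alpha v_n$, also $f(x_n')-f(x_n) \le \alpha\inp{\nabla f(x_n')}{v_n} - \tfrac{\mu\alpha^2}2\norm{v_n}^2$. After substituting $z_n - x^* = (x_n'-x^*) + \tfrac{\alpha\rho}{1-\rho}v_n$, the difference $\E_n[\mathcal E_{n+1}] - r\,\mathcal E_n$ becomes a quadratic form in the three vectors $\nabla f(x_n')$, $v_n$ and $x_n'-x^*$. \textbf{The main obstacle is precisely to show that this quadratic form is negative semidefinite}, and this is exactly what forces the stated values of $\rho$ and $\alpha$: one must balance the gradient-norm coefficient (the $-\eta/2$ gain from the descent step against the $+\tfrac{\mu\gamma^2(1+\sigma^2)}2$ loss from the noisy expansion of the reference term), match the $\inp{\nabla f(x_n')}{\,\cdot\,}$ cross terms, and dominate the $\norm{v_n}^2$ and $\norm{x_n'-x^*}^2$ contributions. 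The factor $1/(1+\sigma^2)$ that degrades the deterministic rate $1-\sqrt{\mu/L}$ enters the rate through this balance.

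Finally, iterating the contraction gives $\mathcal E_n \le r^n\mathcal E_0$. Since $v_0 = 0$ we have $z_0 = x_0$, so $\mathcal E_0 = \E[f(x_0)-f(x^*)] + \tfrac\mu2\E\norm{x_0-x^*}^2$, and strong convexity at the minimizer, $\tfrac\mu2\norm{x_0-x^*}^2 \le f(x_0)-f(x^*)$, yields $\mathcal E_0 \le 2\,\E[f(x_0)-f(x^*)]$. Dropping the nonnegative distance term on the left gives $\E[f(x_n)-f(x^*)] \le \mathcal E_n \le 2\,r^n\,\E[f(x_0)-f(x^*)]$, which is the claimed bound; the factor $2$ is exactly this conversion constant, and the choice of the coefficient $\mu$ in $\mathcal E_n$ is dictated by it.
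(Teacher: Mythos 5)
Your high-level architecture is the same as the paper's: a Lyapunov quantity of the form $f(x_n)-f(x^*)+\frac\mu2\norm{z_n-x^*}^2$, a one-step contraction at rate $r=1-\sqrt{\mu/L}\,/(1+\sigma^2)$, and the final conversion $\mathcal E_0\le 2\,\E[f(x_0)-f(x^*)]$ via Lemma \ref{lemma mu-L bound}. The gap is in the one ingredient you chose yourself: the reference point $z_n = x_n + \frac{\alpha}{1-\rho}v_n$. You selected this weight so that $z_{n+1}=z_n-\gamma g_n$ becomes a pure gradient recursion, but that is not the property the argument needs, and with this weight the one-step contraction you aim for is simply \emph{false} — so the quadratic form you hope to make negative semidefinite cannot be. Concretely, take $\sigma=0$, $\mu=1$, $L=4$, $f(s)=s^2/2$ (which is $1$-strongly convex and $4$-smooth), so that $\eta=\alpha=\tfrac14$, $\rho=\tfrac13$, $r=\tfrac12$, and $\frac{\alpha}{1-\rho}=\gamma=\tfrac38$. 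At the state $x_n=\tfrac34$, $v_n=1$ one gets $x_n'=1$, $g_n=1$, hence $x_{n+1}=\tfrac34$, $v_{n+1}=0$, $z_n=\tfrac98$, $z_{n+1}=\tfrac34$, and
\[
\mathcal E_{n+1} \;=\; \frac9{32}+\frac12\Big(\frac34\Big)^2 \;=\; \frac{144}{256}
\;>\; \frac{117}{256} \;=\; \frac12\left(\frac9{32}+\frac12\Big(\frac98\Big)^2\right) \;=\; r\,\mathcal E_n.
\]
Since a one-step Lyapunov argument must hold at whatever state the iteration currently occupies (this is exactly what the induction uses, and such overshooting configurations do occur along trajectories), your proof cannot be completed. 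The structural reason is visible in your own expansion: after applying strong convexity the coefficient of $\norm{x_n'-x^*}^2$ is exactly zero, while the cross term $\inp{v_n}{x_n'-x^*}$ enters $\mathcal E_{n+1}$ with coefficient $\mu\delta$ (where $\delta=\frac{\alpha\rho}{1-\rho}$) but enters $r\,\mathcal E_n$ only with coefficient $r\mu\delta$, leaving an uncancelled positive excess $(1-r)\mu\delta\,\inp{v_n}{x_n'-x^*}$ with nothing available to absorb it.

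The repair is the paper's choice of reference point. The paper's Lyapunov term is $\frac12\norm{b(x_n'-x_n)+a(x_n'-x^*)}^2$ with $a=\sqrt\mu$ and $b=1/\psi=\sqrt L(1+\sigma^2)$, which in your variables reads $\frac\mu2\norm{x_n-x^*+\frac{2\alpha}{1-\rho}v_n}^2$: the velocity weight must be $\frac{2\alpha}{1-\rho}$, exactly twice yours. With that weight the cross terms $\inp{v_n}{x_n'-x^*}$ in $\L_{n+1}$ and in $r\L_n$ match identically — this is precisely the paper's condition \eqref{str5}, which forces $b=1/\psi$ — so no positive remainder survives. The corrected $z$-sequence then satisfies $z_{n+1}=z_n-\alpha v_n-\gamma' g_n$, i.e.\ a gradient step \emph{plus} a contraction toward $x_n$, mirroring the continuous-time identity $\frac{\d}{\dt}\big(x+\dot x/\sqrt\mu\big) = -\sqrt\mu\big(z-x\big)-\nabla f(x)/\sqrt\mu$ for the heavy-ball ODE rather than a pure gradient flow; in the counterexample above it contracts by the factor $0.4\le r$, as required. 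Your remaining steps (the smoothness/noise descent estimate, unbiasedness, and the final factor of $2$) are correct and coincide with the paper's, but they rest on the contraction that fails.
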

{Choosing $\eta$ too small can be interpreted as overestimating $L$ or $\sigma$. Choosing $\alpha$ too small (with respect to $\eta$) can be interpreted as overestimating $\sigma$. Since every $L$-Lipschitz function is $L'$-Lipschitz for $L'>L$, and since the multiplicative noise bound with constant $\sigma$ implies the same bound with $\sigma'>\sigma$, exponential convergence still holds at a generally slower rate.}

We note that since $|\nabla f(x) |^2 \leq 2L (f(x) - \inf f)$ (Lemma \ref{lemma mu-L bound} in Appendix \ref{appendix auxiliary}), Theorems \ref{theorem convex} and \ref{theorem strongly convex} lead to analogous convergence results for $\mathbb E[\nabla f(x_n)]$ as well. Due to the summability of the sequences $n^{-2}$ and {$r^n$ for $r<1$}, %$\left(1 - \sqrt{\frac\mu L} \frac1{1+\sigma^2}\right)^n$,
we get not only convergence in expectation but also almost sure convergence. The proof is given in Appendix \ref{appendix convex}.

\begin{restatable}{corollary}{almostsurely}\label{corollary almost surely}
	In the setting of Theorems \ref{theorem convex} and \ref{theorem strongly convex}, $f(x_n) \to \inf f$ with probability 1.
\end{restatable}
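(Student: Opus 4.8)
The plan is to upgrade the convergence-in-expectation bounds of Theorems~\ref{theorem convex} and~\ref{theorem strongly convex} to almost sure convergence by exploiting the \emph{summability} of the two rates. Set $a_n = f(x_n) - \inf f$. Since $x^*$ realizes the infimum we have $a_n \ge 0$, and Assumption~1 ($\E[f(x_0)+\|x_0\|^2]<\infty$) guarantees that the constants appearing on the right-hand sides of the two theorems are finite. Concretely, the convex case gives $\E[a_n] \le C\,n^{-2}$ for a fixed constant $C$, while the strongly convex case gives $\E[a_n] \le 2\,r^n\,\E[a_0]$ with $r = 1 - \frac{1}{1+\sigma^2}\sqrt{\mu/L} \in (0,1)$.

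The key observation is that both $\sum_n n^{-2}$ and $\sum_n r^n$ are convergent series. Because the $a_n$ are nonnegative and measurable, Tonelli's theorem allows interchanging the expectation and the (countable) sum, so that $\E\big[\sum_n a_n\big] = \sum_n \E[a_n] < \infty$ in either regime.

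A nonnegative random variable with finite expectation is finite almost surely; hence $\sum_n a_n < \infty$ on an event of probability one. The general term of a convergent series must tend to zero, so on this event $a_n \to 0$, that is, $f(x_n) \to \inf f$ with probability one.

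I do not expect a genuine obstacle here: the argument is a routine Borel--Cantelli-type consequence of summability. The only points requiring care are confirming that the rate constants are finite (which follows from Assumption~1) and emphasizing that it is the \emph{summability} of $\{\E[a_n]\}$, rather than mere decay to zero, that is responsible for promoting convergence in expectation to almost sure convergence.
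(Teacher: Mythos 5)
Your proof is correct, and it takes a genuinely different (and slightly slicker) route than the paper. The paper fixes $\eps>0$, bounds $\P\left(\limsup_{n\to\infty} |f(x_n)-\inf f| > \eps\right)$ by the tail sum $\sum_{n\geq N} \P\left(|f(x_n)-\inf f|>\eps\right)$ via countable subadditivity, controls each term by Markov's inequality, and lets $N\to\infty$ --- in effect re-deriving the first Borel--Cantelli lemma inline. You instead interchange sum and expectation by Tonelli, conclude that $\sum_n \big(f(x_n)-\inf f\big) < \infty$ almost surely because an integrable nonnegative random variable is a.s.\ finite, and read off convergence of the general term. Both arguments hinge on exactly the observation the paper itself highlights --- summability of the rates $n^{-2}$ and $r^n$ --- and both need finiteness of the constants, which you correctly trace to Assumption 1 (together with the fact that $f(x^*)=\inf f$ is finite, so $\E[f(x_0)-\inf f]<\infty$, and $\E\big[\|x_0-x^*\|^2\big] \leq 2\,\E\big[\|x_0\|^2\big]+2\|x^*\|^2 < \infty$). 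As for what each approach buys: yours is shorter, avoids the $\eps$--$\limsup$ bookkeeping, and actually yields the stronger conclusion that the errors $f(x_n)-\inf f$ are a.s.\ summable, not merely null; the paper's Markov-based argument is quantitative, giving an explicit tail estimate of the form $\P\left(\sup_{n\geq N}\big(f(x_n)-\inf f\big)>\eps\right) \leq \frac{C}{\eps}\sum_{n\geq N} n^{-2}$, i.e.\ a bound on the probability of ever exceeding $\eps$ after step $N$, which the Tonelli route does not directly provide.
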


    In the deterministic case $\sigma = 0$, we have $\alpha = \eta$ in both Theorems \ref{theorem convex} and \ref{theorem strongly convex}. In Theorem \ref{theorem strongly convex}, the parameters coincide with the usual choice for NAG, while we opted for a simple statement in Theorem \ref{theorem convex} which does not exactly recover the standard choice $\eta = 1/L$ and $\rho_n = n/(n+3)$. The proofs below easily cover these special cases as well.
  {If $0<\sigma<1$, both AGNES and NAG converge with the same rate $n^{-2}$ in the convex case, but the constant of NAG is always larger. In the strongly convex case, even the decay rate of NAG is slower than AGNES for $\sigma\in(0,1)$ since $1-\sigma^2 < (1+\sigma^2)^{-1}$.} We see the real power of AGNES in the stochastic setting where it converges for very high values of $\sigma$ when Nesterov's method may diverge. For the optimal choice of parameters, we summarize the results in terms of the time-complexity of SGD and AGNES in Figure \ref{figure time complexity}. For the related guarantee for SGD, see Theorems \ref{theorem convex sgd} and \ref{theorem gd strongly convex} in Appendices \ref{appendix convex} and \ref{appendix strongly convex} respectively.
 
    \begin{remark}[Batching]
    Let us compare AGNES with two families of gradient estimators:
    \begin{enumerate}
        \item $g'_n = g(x_n', \omega_n)$ as studied in Theorems \ref{theorem convex} and \ref{theorem strongly convex}.
        \item A gradient estimator $g'_n:= \frac1{n_b} \sum_{j=1}^{n_b} g(x_n', \omega_{n, j})$ which averages multiple independent estimates to reduce the variance. 
    \end{enumerate}
    The second gradient estimator falls into the same framework with $\tilde \Omega = \Omega^{n_b}$ and $\tilde \sigma^2 = \sigma^2/n_b$. {Assuming vector additions cost negligible time, optimizer steps are only as expensive as gradient evaluations.} In this setting -- which is often realistic in deep learning -- it is appropriate to compare $\E[f(x_{n_bn})]$ ($n_b\cdot n$ iterations using $g'_n$) and $\E[f(X_n)]$ ($n$ iterations with $g'_n$). 
    For the strongly convex case, we note that $\left(1- \sqrt\frac\mu L\,\frac1{1+\sigma^2}\right)^{n_b} \leq 1- \sqrt\frac\mu L\,\frac1{1+\sigma^2/n_b}$ if and only if
    \begin{align*}
        n_b \geq \frac{\log\left(1- \sqrt\frac\mu L\,\frac1{1+\sigma^2/n_b}\right)}{\log\left(1- \sqrt\frac\mu L\,\frac1{1+\sigma^2}\right)}
        \approx \frac{\sqrt\frac\mu L\,\frac{1}{1+\sigma^2/n_b}}{\sqrt\frac\mu L\,\frac{1}{1+\sigma^2}}
        = \frac{1+\sigma^2}{1+\sigma^2/n_b} =\frac{1+\sigma^2}{n_b+\sigma^2}\,n_b.
    \end{align*}
    The approximation is well-justified in the important case that $\mu\ll L$. In particular, the upper bound for non-batching AGNES is {\em always} favorable compared to the batching version as $n_b \in\N_{\geq 1}$, and the two only match for the optimal batch size $n_b=1$. 
    {The optimal batch size for minimizing $f$ is the largest one that can be processed in parallel without increasing the computing time for a single step.} A similar argument holds for the convex case.
    \end{remark}
    With a slight modification, the proof of Theorem \ref{theorem convex} extends to the situation of convex objective functions which do not have minimizers. Such objectives arise for example in linear classification with the popular cross-entropy loss function and linearly separable data.
	
    \begin{restatable}[Convexity without minimizers] {theorem} {weaklyconvex}\label{theorem weakly convex}
    Let $f$ be a convex objective function satisfying the assumptions in Section \ref{section assumptions} and $x_n$ be generated by the time-stepping scheme \eqref{eq agnes}. Assume that $\eta, \alpha$ and $\rho_n$ are as in Theorem \ref{theorem convex}. Then $\liminf_{n\to\infty} \E[f(x_n)] = \inf_{x\in\R^m}f(x).$
    \end{restatable}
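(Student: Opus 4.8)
The plan is to leverage Theorem \ref{theorem convex} directly and pass to the limit. The essential difficulty is that Theorem \ref{theorem convex} is stated in terms of a minimizer $x^*$ and the quantity $\|x_0 - x^*\|^2$, which is exactly the object that fails to exist here. So I cannot invoke the final bound verbatim; instead I must re-examine the proof and isolate the role that $x^*$ plays. My expectation is that the Lyapunov/energy argument behind Theorem \ref{theorem convex} only ever uses the defining inequality $f(x^*) \le f(y)$ for all $y$ (convexity plus optimality of $x^*$) and the finiteness of $\|x_0 - x^*\|$. The strategy is therefore to replace the true minimizer by a \emph{near}-minimizer.

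Concretely, I would argue as follows. Fix $\eps > 0$. Since $\inf_{x} f(x)$ is finite (it is bounded below because $f$ is convex and, by $L$-smoothness together with the assumption $\E[f(x_0)] < \infty$, the relevant quantities are controlled), choose a point $x_\eps$ with $f(x_\eps) \le \inf f + \eps$. Now run the \emph{same} convergence estimate from the proof of Theorem \ref{theorem convex}, but tracking the Lyapunov sequence built around $x_\eps$ rather than an exact minimizer. The key observation is that the proof does not genuinely need $\nabla f(x_\eps) = 0$; it only uses the convexity inequality $f(x_n') - f(x_\eps) \le \inp{\nabla f(x_n')}{x_n' - x_\eps}$ and the value gap $f(x_n) - f(x_\eps)$. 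Carrying $x_\eps$ through the telescoping argument yields a bound of the shape
\[
\E\big[ f(x_n) - f(x_\eps)\big] \le \frac{C\big(\alpha, n_0, \E[f(x_0)-f(x_\eps)], \E\|x_0 - x_\eps\|^2\big)}{(n+n_0)^2},
\]
where $C$ is the same constant expression as in Theorem \ref{theorem convex} with $x^*$ replaced by $x_\eps$. Both $\E[f(x_0) - f(x_\eps)]$ and $\E\|x_0 - x_\eps\|^2$ are finite by assumption 1 in Section \ref{section assumptions}, so the right-hand side tends to $0$ as $n\to\infty$.

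Taking the limit superior in $n$ then gives $\limsup_{n\to\infty} \E[f(x_n)] \le f(x_\eps) \le \inf f + \eps$. Since $\eps > 0$ was arbitrary, $\limsup_{n\to\infty}\E[f(x_n)] \le \inf f$. On the other hand $\E[f(x_n)] \ge \inf f$ trivially, which forces $\liminf_{n\to\infty}\E[f(x_n)] \ge \inf f$ as well; combined with the limsup bound this pins the $\liminf$ exactly at $\inf f$. I would state this last step as the $\liminf$ rather than the full $\lim$ because the constant $C$ depends on $x_\eps$ and blows up as $\eps\to 0$ in general (for instance $\|x_0 - x_\eps\|$ may diverge), so one only controls the limit inferior along the sequence, which is precisely what the theorem claims.

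The main obstacle, and the one place where care is genuinely required, is verifying that the proof of Theorem \ref{theorem convex} really is robust to replacing the minimizer by a near-minimizer \emph{without} incurring any error term that does not vanish. I would double-check that every step using $x^*$ is an application of convexity of the form $f(y) \ge f(x^*) + \inp{\nabla f(x^*)}{y - x^*}$ \emph{only through} the weaker consequence $f(y) \ge f(x^*)$, or else an algebraic identity involving $\|x_n - x^*\|^2$; the former survives with $x_\eps$ at the cost of the additive $\eps$, and the latter survives with the finite constant $\E\|x_0 - x_\eps\|^2$. If any step secretly relied on the gradient vanishing at $x^*$, the argument would need an extra estimate, but I expect the Lyapunov analysis to avoid this, since accelerated-method proofs of this type are built precisely on the pointwise convexity inequality at the iterates $x_n'$ and the optimality gap, not on first-order optimality at the comparison point.
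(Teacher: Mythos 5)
Your strategy is genuinely different from the paper's proof and, for objectives that are bounded below, it is essentially sound --- indeed it proves more than the theorem claims. The paper argues by contradiction: assuming $\liminf_{n}\E[f(x_n)] > \inf f$, it selects a near-minimizer $x^*$ and makes a case distinction at each step --- either the expected objective value drops below $f(x^*)$ (then the descent estimate of Lemma \ref{lemma gd decrease} finishes that branch), or it stays above $f(x^*)$ on a whole range of indices, in which case the Lyapunov argument of Theorem \ref{theorem convex} applies verbatim; restarting from arbitrary indices produces infinitely many iterates below the presumed lower limit, a contradiction. Your proposal instead quantifies the damage caused by a near-minimizer directly, and your instinct about where minimality enters is correct: the proof of Theorem \ref{theorem convex} never uses $\nabla f(x^*)=0$, only inequalities of the form $\E[f(x_{n+1})]-f(x^*)\geq 0$. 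Concretely, this happens exactly once per iteration, in Step 1, where $P(n+1)$ is replaced by $P(n+1)+k$ with $k=2\eta-\alpha>0$; the other occurrence (the sign of $\E[\nabla f(x_n')\cdot(x_n'-x^*)]$ in Step 3) is harmless because the paper's coefficient choices make its prefactor exactly zero. With $x_\eps$ in place of $x^*$ the Step 1 inequality can fail, but only by $\eps$, since $\E[f(x_{n+1})]\geq\inf f\geq f(x_\eps)-\eps$. Consequently your claimed intermediate bound is not quite right: the Lyapunov sequence centered at $x_\eps$ is \emph{not} monotone, and what one actually gets is
\[
\L_{n+1}\leq \L_n + k\eps, \qquad\text{hence}\qquad \E\big[f(x_n)-f(x_\eps)\big]\ \leq\ \frac{\L_0+nk\eps}{\alpha(n+n_0)^2}\ \leq\ \frac{C}{(n+n_0)^2}+\frac{k\eps}{\alpha(n+n_0)},
\]
with $C$ the constant you wrote. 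The extra term still vanishes as $n\to\infty$, so your conclusion $\limsup_n\E[f(x_n)]\leq f(x_\eps)\leq \inf f+\eps$ survives. Moreover, since $\eps$ is sent to zero only \emph{after} $n\to\infty$, the blow-up of $C(\eps)$ is immaterial, and your stated reason for retreating to the $\liminf$ is unfounded: when $\inf f>-\infty$, your argument actually yields the full limit $\lim_n\E[f(x_n)]=\inf f$, which is strictly stronger than the theorem and settles, in that case, the first half of the conjecture the paper states immediately after its proof.

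The one substantive gap is your claim that $\inf f$ is automatically finite: convexity, $L$-smoothness and $\E[f(x_0)]<\infty$ do \emph{not} imply boundedness below (take $f(x)=\inp{w}{x}$, which is convex and $L$-smooth for every $L$, with $\inf f=-\infty$). The theorem's hypotheses do not exclude this case, and there your error-accumulation argument breaks down at its root: the per-step error is controlled by $k\,(f(x_\eps)-\inf f)$, which is precisely the quantity that is now infinite. The paper's contradiction mechanism is robust in this respect --- if $\liminf_n\E[f(x_n)]=\ell>-\infty$, pick $x^*$ with $f(x^*)<\ell-2$ and run the same case distinction, which nowhere needs $f$ to be bounded below. (To be fair, the paper also writes ``assume for the moment that $\inf f>-\infty$'' and never discharges that assumption, but its mechanism extends to the unbounded case, whereas yours does not.) To cover the statement as written, you should either graft the paper's case distinction onto your argument when $\inf f=-\infty$, or explicitly restrict your direct argument to objectives bounded below and treat the remaining case separately.
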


	The proof and more details are given in Appendix \ref{appendix convex}.
    For completeness, we consider the case of non-convex optimization in Appendix \ref{appendix non-convex}. 
	As a limitation, we note that multiplicative noise is well-motivated in machine learning for global minimizers, but not at generic critical points.

    \begin{figure}
    \begin{center}
        \centering\renewcommand{\arraystretch}{2.0}
        \begin{tabular}{r| c|c}
         Time complexity & Convex & $\mu$-strongly convex\\ \hline
         {SGD} & $O\left(\frac{L}\eps(1+\sigma^2)\right)$ & $O\left(\frac {L}\mu\,(1+\sigma^2) |\log \eps|\right)$ \\ \hline
         {AGNES} & $O\left( \sqrt{\frac{L}\eps}(1+\sigma^2) \right)$ &   $O\left(\sqrt{\frac {L}\mu}\,(1+\sigma^2) |\log \eps|\right)$
        \end{tabular}
    \end{center}
        \caption{
        The minimal $n$ for AGNES and SGD such that $\E[f(x_n) - \inf f]<\eps$ when minimizing an $L$-smooth function with multiplicative noise intensity $\sigma$ in the gradient estimates and under a convexity assumption. The SGD rate of the $\mu$-strongly convex case is achieved more generally under a PL condition with PL-constant $\mu$. 
        While SGD requires the optimal choice of one variable to achieve the optimal rate, AGNES requires three (two in the determinstic case).
        }
        \label{figure time complexity}
    \end{figure}

    \subsection{Geometric Interpretation}

    Let us briefly discuss the parameter choices in Theorem \ref{theorem strongly convex}. As we consider larger $\sigma$ for fixed $\mu$ and $L$, the decay factor $\rho$ moves closer to $1$. This slows the `forgetting' of past gradients in $v_n$, allowing us to better average out stochastic noise. The price we pay is computing with more outdated gradients, slowing convergence. Our choice balances these effects.
    
    In AGNES, $\rho$ inadvertently also governs magnitude of the momentum variable $v_n$, which scales as $(1-\rho)^{-1}$ for objective functions with constant gradient and $n\gg1$. To compensate, we choose $\alpha$ smaller compared to $\eta$ when $\sigma$ (and thus $(1-\rho)^{-1}$) is large. Nevertheless, the effect of the momentum step does not decrease. For further details, see Appendix \ref{appendix momentum parameters}.
    
    For further interpretability, we obtain several ODE and SDE continuous time descriptions of AGNES in Appendix \ref{appendix continuous time}. 

    \section{Motivation for Multiplicative Noise}\label{section setting}
    \begin{figure*}
        \centering
        \includegraphics[width = .42\textwidth]{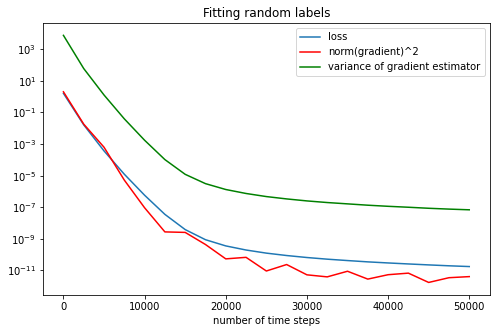}\hspace{2mm}
        \includegraphics[width = .42\textwidth]{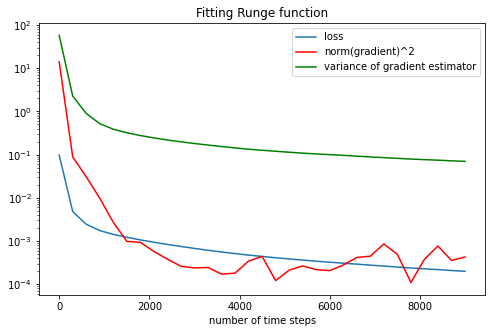}\hfill
        \caption{To be able to quantify the gradient noise exactly, we choose relatively small models and data sets. {\bf Left:} A ReLU network with four hidden layers of width 250 is trained by SGD to fit random labels $y_i$ (drawn from a 2-dimensional standard Gaussian) at $1,000$ random data points $x_i$ (drawn from a 500-dimensional standard Gaussian). The variance $\sigma^2$ of the gradient estimators is $\sim 10^5$ times larger than the loss function and $\sim 10^6$ times larger than the parameter gradient. This relationship is stable over approximately ten orders of magnitude. {\bf {Right}:} A ReLU network with two hidden layers of width 50 is trained by SGD to fit the Runge function $1/(1+x^2)$ on equispaced data samples in the interval $[-8,8]$. Also here, the variance in the gradient estimates is proportional to both the loss function and the magnitude of the gradient.}
        \label{figure noise scaling}
    \end{figure*}
    In supervised learning applications, the learning task often corresponds to minimizing a risk or loss function $\Risk(w) = {\frac1N} \sum_{i=1}^{N} \ell\big(h(w,x_i), y_i\big) =: {\frac1N}\sum_{i=1}^{N}\ell_i(w)$, where
    $
    h:\R^m \times \R^d \to \R^k$, $(w,x) \mapsto h(w,x)$ and $\ell:\R^k\times\R^k \to [0,\infty)$
    are a parametrized function of weights $w$ and data $x$ and a loss function measuring compliance between $h(w,x_i)$ and $y_i$ respectively.\footnote{\ Both $\ell$ and $\Risk$ are commonly called a `loss function' in the literature. To distinguish between the two, we will borrow the terminology of statistics and refer to $\Risk$ as the risk functional and $\ell$ as the loss function. The notation $L$, which is often used in place of $\Risk$, is reserved for the Lipschitz constant in this work.}\ \cite{DBLP:conf/icml/SafranS18,NEURIPS2018_a1afc58c, du2018gradient} show that working in the overparametrized regime $m\gg N$ simplifies the optimization process and \cite{belkin2019reconciling, belkin2020two} illustrate that it facilitates generalization to previously unseen data. \cite{cooper2019the} shows that fitting ${N}$ constraints with $m$ parameters typically leads to an ${m-N}$-dimensional submanifold $\mathcal M$ of the parameter space $\R^m$ such that all given labels $y_i$ are fit exactly by $h(w,\cdot)$ at the data points $x_i$ for $w\in\mathcal M$, i.e.\ $\Risk\equiv 0$ on the smooth set of minimizers $\mathcal M = \Risk^{-1}(\{0\})$.

    If $N$ is large, it is computationally expensive to evaluate the gradient {$\nabla\Risk(w) = \frac1N \sum_{i=1}^N \nabla \ell_i$} of the risk function $\Risk$ exactly and we commonly resort to stochastic  estimates
    \begin{align*}
    g  &= \frac1{n_b} \sum_{i\in I_b} \nabla \ell_i(w)= \frac1{n_b} \sum_{i\in I_b} \sum_{j=1}^{k} (\partial_{h_j}\ell)\big(h(w,x_i), y_i\big)\, \nabla_w h_j(w,x_i),
    \end{align*}
    where $I_b\subseteq \{1,\dots,{N}\}$ is a subsampled collection of $n_b$ data points (a batch or minibatch).
    Minibatch gradient estimates are very different from the stochasticity we encounter e.g.\ in statistical mechanics:
    \begin{enumerate}
        \item The covariance matrix $\Sigma = \frac1{{N}}\sum_{i=1}^{N}\big(\nabla \ell_i - \nabla \Risk) \otimes (\nabla \ell_i-\nabla \Risk)$ of the gradient estimators $\nabla \ell_i$ has low rank $N\ll m$.

        \item Assume specifically that $\ell$ is a loss function which satisfies $\ell(y,y) = 0$ for all $y\in\R^k$, such as the popular $\ell^2$-loss function $\ell(h,y) = \|h-y\|^2$. Then $\nabla \ell_i(w) =0$ for all $i \in \{1,\dots, {N}\}$ and all $w\in \mathcal M= \Risk^{-1}(0)$. In particular, minibatch gradient estimates are exact on $\mathcal M$.
    \end{enumerate}

    The following Lemma makes the second observation precise in the overparameterized regime and bounds the stochasticity of mini-batch estimates more generally.

	\begin{restatable}[Noise intensity]{lemma}{noise}\label{lemma noise scaling}
        Assume that $\ell(h,y) = \|h-y\|^2$ and ${h}:\R^m\times\R^d\to\R^k$ satisfies 
        $
        \|\nabla_w h(w,x_i)\|^2 \leq C\big(1+\|w\|\big)^p
        $
        for some $C, p>0$ and all $w\in \R^m$ and $i= 1,\dots, {N}$. Then for all $w\in\R^m$:
        \[
        \frac1{{N}} \sum_{i=1}^{N} \big\|\nabla \ell_i - \nabla \Risk\big\|^2 \:\leq\: 4C^2\,(1+\|w\|)^{2p}\:\Risk(w).
        \]
    \end{restatable}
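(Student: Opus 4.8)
The plan is to reduce the empirical variance on the left-hand side to a plain second-moment bound, and then to exploit the special structure of the squared loss, whereby the gradient of each $\ell_i$ factors as a residual times the Jacobian of $h$. The whole point is that for $\ell(h,y)=\norm{h-y}^2$ the size of the gradient is controlled by the value of the loss itself, which is precisely the multiplicative (rather than additive) mechanism the section is advertising.

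First I would compute the per-sample gradient. Since $\ell_i(w) = \norm{h(w,x_i) - y_i}^2$, the chain rule gives
\[
\nabla \ell_i = 2\sum_{j=1}^k \big(h_j(w,x_i) - (y_i)_j\big)\,\nabla_w h_j(w,x_i).
\]
The residual vector appearing here satisfies $\sum_{j}(h_j(w,x_i)-(y_i)_j)^2 = \norm{h(w,x_i)-y_i}^2 = \ell_i(w)$, which is the identity that ties the gradient magnitude back to the loss. Second, I would discard the centering: writing $a_i := \nabla \ell_i$ and $\bar a := \nabla \Risk = \frac1N\sum_i a_i$, the elementary identity $\frac1N\sum_i \norm{a_i - \bar a}^2 = \frac1N\sum_i \norm{a_i}^2 - \norm{\bar a}^2$ shows $\frac1N\sum_i\norm{\nabla\ell_i-\nabla\Risk}^2 \leq \frac1N\sum_i\norm{\nabla\ell_i}^2$, so it suffices to bound the raw second moment of the $\nabla\ell_i$.

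Third, I would estimate each $\norm{\nabla\ell_i}$ by the triangle inequality followed by Cauchy--Schwarz on the sum over $j$:
\[
\norm{\nabla \ell_i} \leq 2\Big(\sum_{j=1}^k (h_j(w,x_i) - (y_i)_j)^2\Big)^{1/2}\Big(\sum_{j=1}^k \norm{\nabla_w h_j(w,x_i)}^2\Big)^{1/2} = 2\sqrt{\ell_i(w)}\,\norm{\nabla_w h(w,x_i)},
\]
so that $\norm{\nabla \ell_i}^2 \leq 4\,\ell_i(w)\,\norm{\nabla_w h(w,x_i)}^2$. Inserting the growth bound on $\norm{\nabla_w h(w,x_i)}$, averaging over $i$, and using $\frac1N\sum_i \ell_i(w) = \Risk(w)$ then closes the estimate and produces the factor stated in the claim.

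I expect the only genuine step, as opposed to bookkeeping, to be the Cauchy--Schwarz bound in the third paragraph: one has to recognize the squared-loss gradient as a residual-weighted combination of the rows of the Jacobian and observe that the sum of squared residuals reproduces $\ell_i(w)$ exactly. Everything else—the gradient computation, the variance-versus-second-moment reduction, and the substitution of the growth hypothesis—is routine. The one point to watch is that the norm convention for $\norm{\nabla_w h}$ (the Frobenius norm of the Jacobian, matching $\sum_j\norm{\nabla_w h_j}^2$) must be the same in the Cauchy--Schwarz step and in the hypothesis, so that the constant and the exponent of $(1+\norm{w})$ line up with the stated conclusion.
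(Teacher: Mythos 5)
Your proof is correct and follows essentially the same route as the paper's: reduce the centered sum to the raw second moment (your explicit identity $\frac1N\sum_i\|a_i-\bar a\|^2=\frac1N\sum_i\|a_i\|^2-\|\bar a\|^2$ is the same fact the paper invokes as ``the mean minimizes mean square discrepancy''), then apply Cauchy--Schwarz over the output coordinates $j$ to get $\|\nabla\ell_i\|^2\leq 4\,\ell_i(w)\,\|\nabla_w h(w,x_i)\|^2$, and finish with the growth bound and $\frac1N\sum_i\ell_i=\Risk$. The one wrinkle you half-notice at the end is real but not yours: substituting the hypothesis $\|\nabla_w h\|^2\leq C(1+\|w\|)^p$ literally yields $4C(1+\|w\|)^p\,\Risk(w)$ rather than the stated $4C^2(1+\|w\|)^{2p}\,\Risk(w)$, a squared-versus-unsquared mismatch between the lemma's hypothesis and conclusion that the paper's own proof glosses over in exactly the same way.
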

    
     Lemma \ref{lemma noise scaling} is proved in Appendix \ref{appendix noise}. It is a modification of \cite[Lemma 2.14]{wojtowytsch2021stochasticdiscrete} for function models which are locally, but not globally Lipschitz-continuous in the weights $w$, such as deep neural networks with smooth activation function. The exponent $p$ may scale with network depth.

    Lemma \ref{lemma noise scaling} describes the variance of a gradient estimator which uses a random index $i\in \{1,\dots, {N}\}$ and the associated gradient $\nabla \ell_i$ is used to approximate $\nabla \Risk$. If a batch $I_b$ of $n_b$ indices is selected randomly with replacement, then the variance of the estimates scales in the usual way:
    \begin{equation}\label{eq noise scaling}
    \E_{I_b} \left[\left\|\frac1{n_b} \sum_{i\in I_b} \nabla \ell_i - \nabla \Risk\right\|^2\right] \leq \frac{4C^2(1+ \|w\|)^{2p}}{n_b}\,\Risk(w).
    \end{equation}

    As noted by \cite{wu2019global, wu2022adaloss}, $\Risk$ and $\|\nabla\Risk\|^2$ often behave similarly in overparametrized deep learning. We illustrate this in Figure \ref{figure noise scaling} together with Lemma \ref{lemma noise scaling}. Heuristically, we therefore replaced \eqref{eq noise scaling} by a more manageable assumption akin to $\E[\frac1N\sum_{i=1}^N\|\nabla \ell_i - \nabla \Risk\|^2] \leq \sigma^2\|\nabla \Risk\|^2$ in Section \ref{section assumptions}. The setting where the signal-to-noise ratio (the quotient of estimate variance and true magnitude) is $\Omega(1)$ is often referred to as `multiplicative noise', as it resembles the noise generated by estimates of the form $g = (1+\sigma Z)\nabla \Risk$, where $Z\sim \mathcal N(0,1)$. When the objective function is $L$-smooth and satisfies a PL condition (see e.g.\ \citep{karimi2016linear}), both scaling assumptions are equivalent.
    	
	\section{Numerical Experiments}\label{section numerical}

        \subsection{Convex optimization}

            \begin{figure}
    \begin{center}
    \includegraphics[width = 0.4\textwidth]{./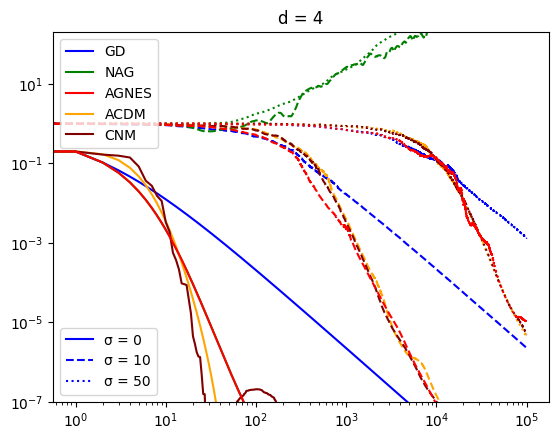}
    \includegraphics[width = 0.4\textwidth]{./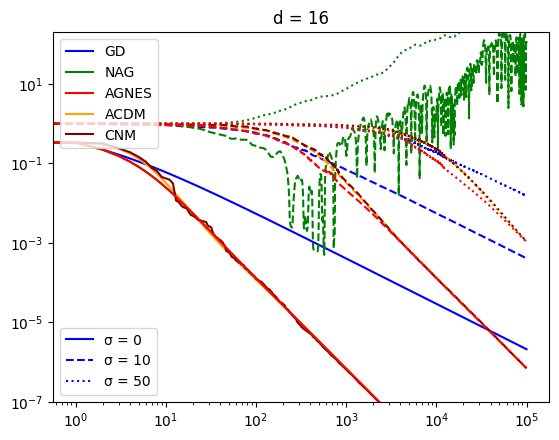}
    \end{center}    

    \caption{\label{figure convex}
    We plot $\E\big[f_{d}(x_n)\big]$ on a loglog scale for SGD (blue), AGNES (red), NAG (green), ACDM (orange) and CNM (maroon) with $d=4$ (left) and $d=16$ (right) for noise levels $\sigma=0$ (solid line), $\sigma=10$ (dashed) and $\sigma =50$ (dotted). The initial condition is $x_0=1$ in all simulations. Means are computed over 200 runs.
    After an initial plateau, AGNES, CNM and ACDM significantly outperform SGD in all settings, while NAG (green) diverges if $\sigma$ is large. 
    The length of the initial plateau increases with $\sigma$. 
    }
    \end{figure}

    We compare the optimization algorithms for the family of objective functions
    \[
    f_{d}:\R\to\R, \qquad f_{d}(x) = \begin{cases} |x|^{d} & \text{if }|x|<1\\ 1 + {d}(|x|-1) &\text{else}\end{cases}
    \]
    for $d\geq 2$ with gradient estimators $g = (1+ \sigma N) f'(x)$, where $N$ is a unit normal random variable. The functions are convex and their derivatives are Lipschitz-continuous with $L=d(d-1)$. Various trajectories are compared for different values of ${d}$ and $\sigma$ in Figure \ref{figure convex}.
    We run AGNES with the parameters $\alpha = \frac{\eta}{1+\sigma^2}$, $\eta=\frac{1}{L(1+2\sigma^2)}$, $\rho_n=\frac{n}{n+5}$ derived above and SGD with the optimal step size $\eta = \frac{1}{L(1+\sigma^2)}$ (see Lemmas \ref{lemma gd decrease} and \ref{theorem convex sgd}). For NAG, we select $\eta = \frac1{L(1+\sigma^2)}$ and $\rho_n = \frac n{n+3}$. We present a similar experiment in the strongly convex case in Appendix \ref{appendix simulations}.

    We additionally compare to two other methods of accelerated gradient descent which were recently proposed for multiplicative noise models: The ACDM method of \cite{nesterov2012efficiency, vaswani2019fast}, and the continuized Nesterov method (CNM) of \cite{even2021continuized, berthier2021continuized} with the proposed parameters. In this simple setting where all constants are known, AGNES, ACDM and CNM perform comparably in the long run and on average.

\subsection{Neural network regression}\label{section regression} %: Student-teacher model} 

We generated $n= 100,000$ 12-dimensional random vectors. Using a fixed, randomly initialized neural network $f^*$ (with 10 hidden layers, each with width 10, and output dimension 1), we produced labels $y_i=f^*(x_i)$. The resulting dataset was split into 90\% training and 10\% testing data. We then trained identically initialized copies of a larger neural network (15 hidden layers, each with width 15) using Adam, NAG, SGD with momentum, and AGNES to minimize the mean-squared error (MSE) loss. 

We selected the learning rate \(10^{-3}\) for Adam as it performed poorly at higher or lower rates \(10^{-2}\) and \(10^{-4}\). For AGNES, NAG, and SGD, based on initial exploratory experiments, we used a learning rate of \(10^{-4}\), a momentum value of 0.99, and for AGNES, a correction step size \(\eta = 10^{-3}\). The experiment was repeated 10 times each for batch sizes 100, 50, and 10, and run for 45,000 optimizer steps each time.
The average loss and standard deviation for each algorithm are reported in Figure \ref{figure regression}. The results show that AGNES performs better than SGD and NAG for all batch sizes. With large batch size, Adam performs well with default hyperparameters. The performance of AGNES relative to other algorithms especially improves as the batch size decreases.

\begin{figure}
    \centering
    \includegraphics[clip = true, trim = 10mm 8mm 15mm 8mm, width = 0.32\textwidth]{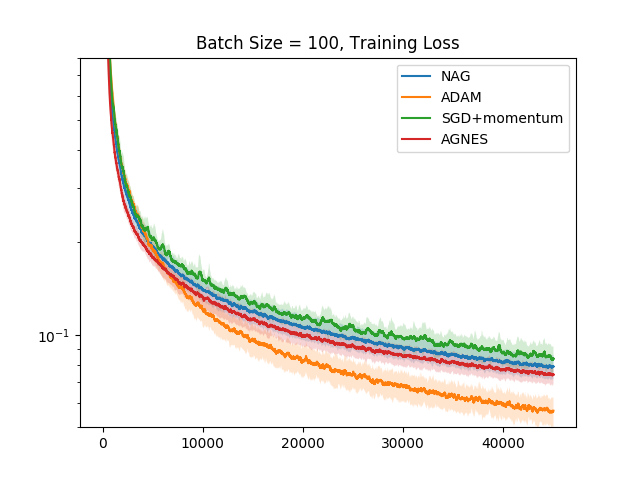}
    \includegraphics[clip = true, trim = 10mm 8mm 15mm 8mm, width = 0.32\textwidth]{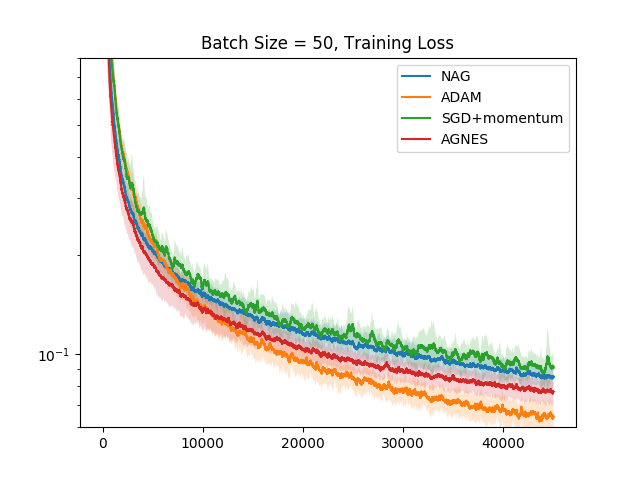}
    \includegraphics[clip = true, trim = 10mm 8mm 15mm 8mm, width = 0.32\textwidth]{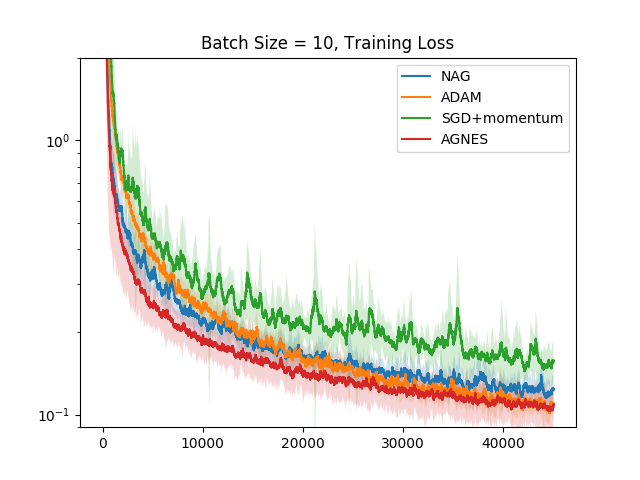}
    \includegraphics[clip = true, trim = 10mm 8mm 15mm 8mm, width = 0.32\textwidth]{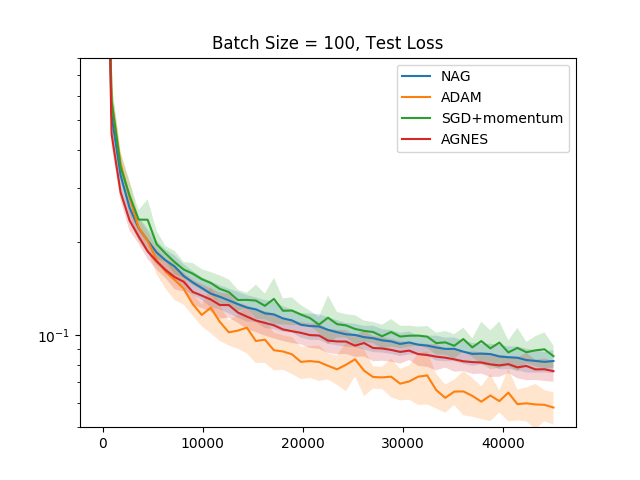}
    \includegraphics[clip = true, trim = 10mm 8mm 15mm 8mm, width = 0.32\textwidth]{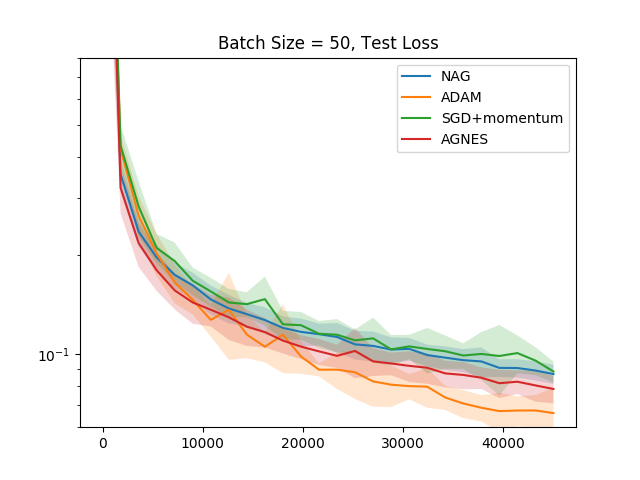}
    \includegraphics[clip = true, trim = 10mm 8mm 15mm 8mm, width = 0.32\textwidth]{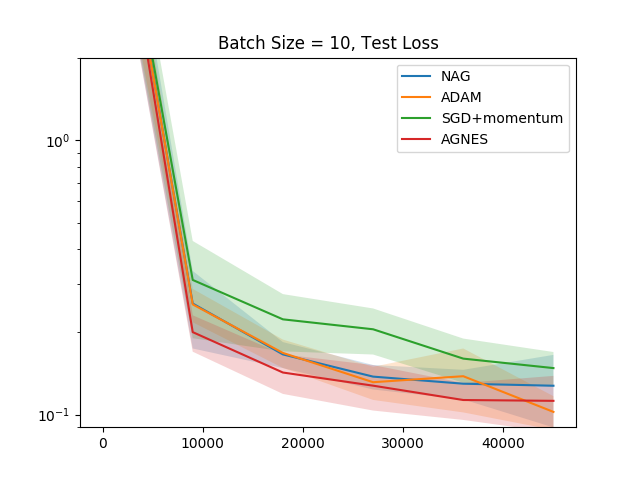}
    \caption{We report the training loss as a running average with decay rate 0.99 (top row) and test loss (bottom row) for batch sizes 100 (left column), 50 (middle column), and 10 (right column) in the setting of Section \ref{section regression}. The horizontal axis represents the number of optimizer steps. The performance gap between AGNES and other algorithms widens for smaller batch sizes, where the gradient estimates are more stochastic and the two different parameters $\alpha, \eta$ add the most benefit.}
    \label{figure regression}
\end{figure}
    
\subsection{Image classification}\label{section image classification}

    \begin{figure}
    \centering
    \includegraphics[clip = true, trim = 10mm 8mm 15mm 8mm, width = 0.32\textwidth]{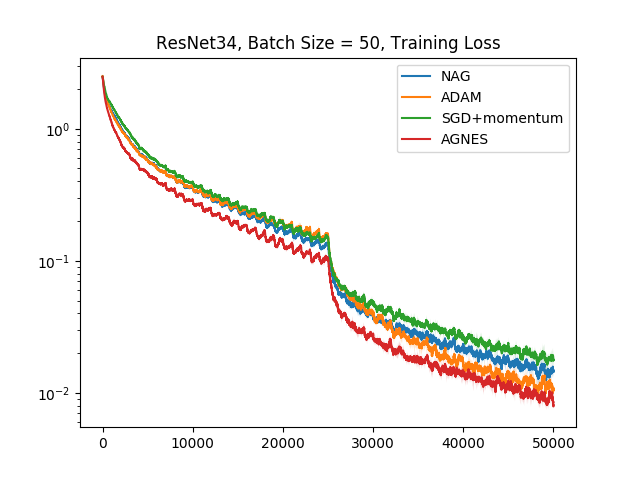}
    \includegraphics[clip = true, trim = 10mm 8mm 15mm 8mm, width = 0.32\textwidth]{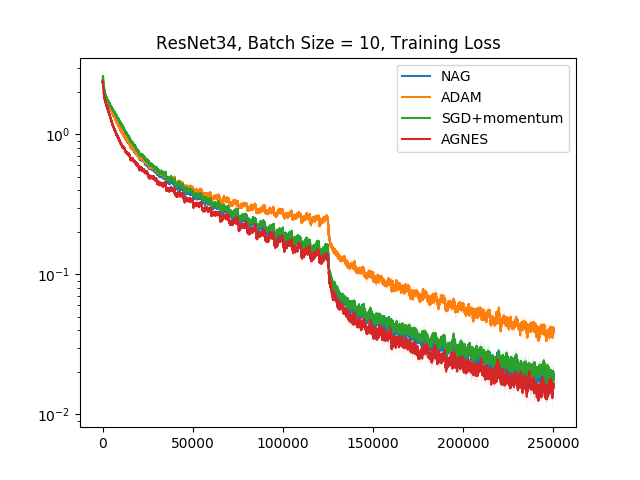}
    \includegraphics[clip = true, trim = 10mm 8mm 15mm 8mm, width = 0.32\textwidth]{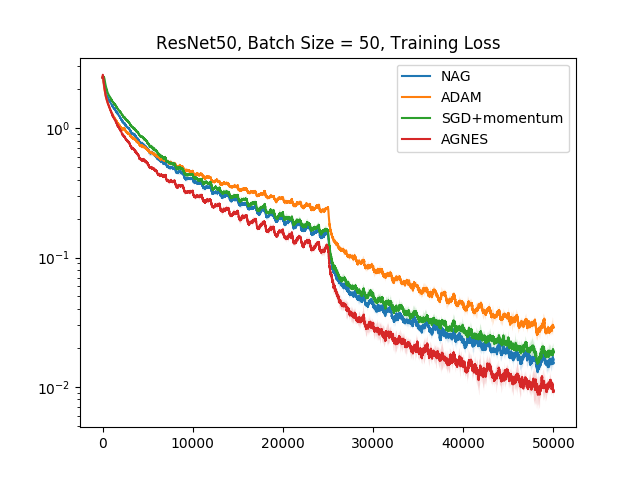}
    \includegraphics[clip = true, trim = 10mm 8mm 15mm 8mm, width = 0.32\textwidth]{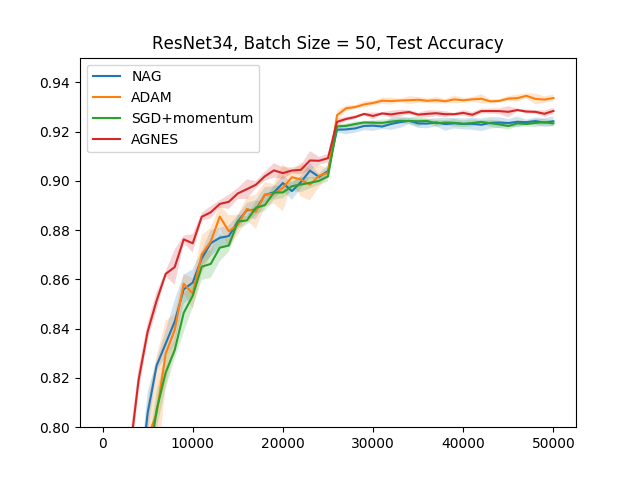}
    \includegraphics[clip = true, trim = 10mm 8mm 15mm 8mm, width = 0.32\textwidth]{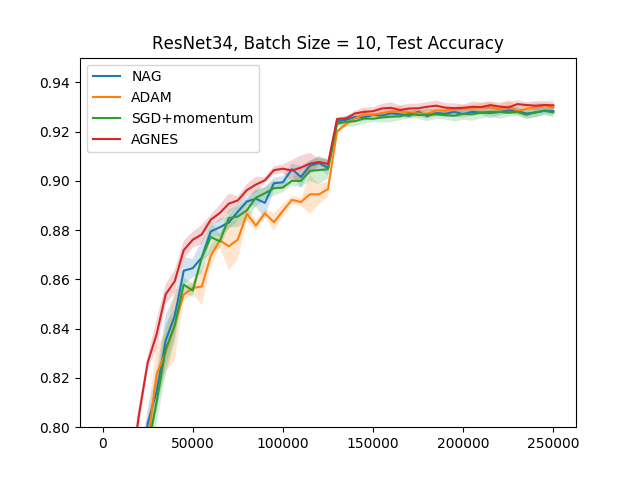}
    \includegraphics[clip = true, trim = 10mm 8mm 15mm 8mm, width = 0.32\textwidth]{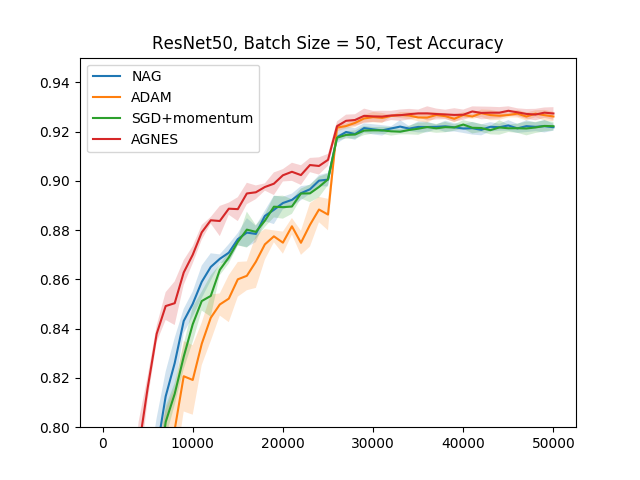}
    \caption{We report the training loss as a running average with decay rate 0.99 (top row) and test accuracy (bottom row) for ResNet-34 trained on CIFAR-10 with batch sizes 50 (left column) and 10 (middle column), and ResNet-50 trained with batch size 50 (right column). The performance of AGNES with the proposed hyperparameters is stable over the changes in model and batch size.}
    \label{figure cnn}
\end{figure}

    We trained ResNet-34 \citep{he2016deep} with batch sizes 50 and 10, and ResNet-50 with batch size 50 on the CIFAR-10 image dataset \citep{krizhevsky2009learning} with standard data augmentation (normalization, random crop, and random flip) using Adam, SGD with momentum, NAG, and AGNES. The model implementations were based on \citep{kuangliu}. Each algorithm was provided an identically initialized model and the experiment was repeated 5 times for 50 epochs each. The averages and standard deviations of training loss and test accuracy are reported in Figure \ref{figure cnn}. We used the same initial learning rate $10^{-3}$ for all the algorithms, which was dropped to $10^{-4}$ after 25 epochs. A momentum value of 0.99 was used for SGD, NAG, and AGNES and a constant correction step size $\eta = 10^{-2}$ was used for AGNES.
    
    AGNES reliably outperforms SGD and NAG both in terms of training loss and test accuracy. The gap in performance appears to increase as model size increases or batch size decreases, suggesting that AGNES primarily excels in situations where gradients are harder to estimate accurately. For the sake of completeness, we include Adam with default hyperparameters as a comparison.
 
    In congruence with convergence guarantees from convex optimization, grid search suggests that $\alpha$ is the primary learning rate and $\eta$ should be chosen larger than $\alpha$. We tried NAG and Adam with higher learning rates $10^{-2}$ and $10^{-1}$ as well to ensure a fair comparison with AGNES, but found that they become unstable or perform worse for larger learning rates in our experiments. The AGNES default parameters $\alpha = 10^{-3}, \eta = 10^{-2}, \rho = 0.99$ in Algorithm \ref{descent algorithm} give consistently strong performance on different models but can be further tuned to improve performance. While the numerical experiments we performed support our theoretical predictions, we acknowledge that our focus lies on theoretical guarantees and we did not test these predictions over a broad set of benchmark problems.

    We present a more thorough comparison of NAG and AGNES with various parameter selections in Figure \ref{figure hyperparameter exploration} in Appendix \ref{appendix simulations}. With default parameters or minimal parameter tuning, AGNES reliably achieves superior performance compared to NAG (training loss) and smoother curves, suggesting more stable behavior (test accuracy).

    \subsection{Hyperparameter comparison}\label{hyperparameter comparison}
           \begin{figure*}    	
    	\includegraphics[clip = true, trim = 8mm 8mm 15mm 12mm, width = 0.33\textwidth]{./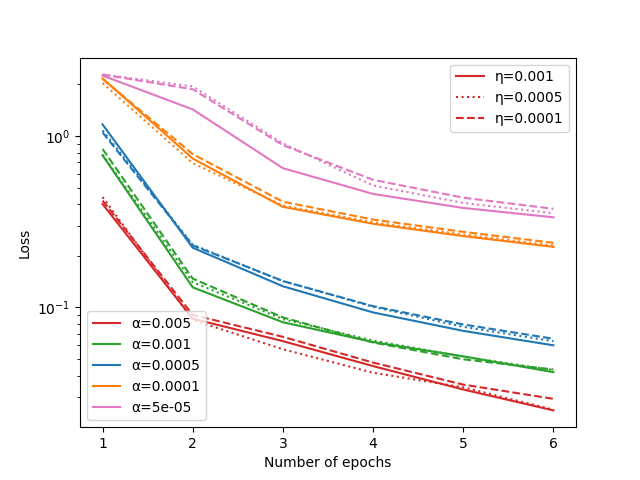}
    	\includegraphics[clip = true, trim = 8mm 8mm 15mm 12mm, width = 0.33\textwidth]{./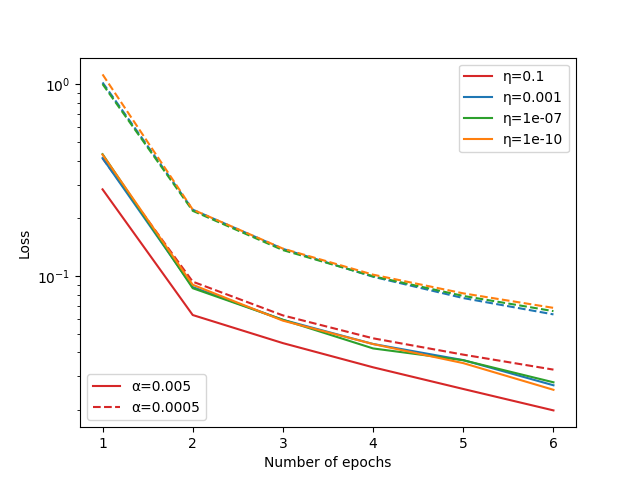}
    	\includegraphics[clip = true, trim = 8mm 8mm 15mm 12mm, width = 0.33\textwidth]{./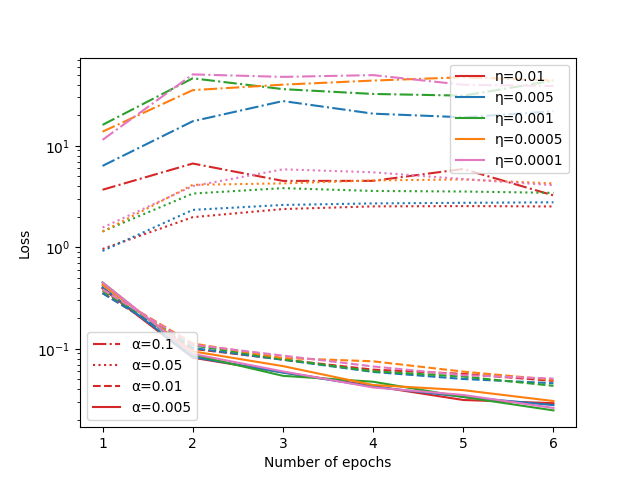} 
    	\caption{
    	\label{figure alpha vs eta} We report the average training loss after each epoch for six epochs for training LeNet-5 on MNIST with AGNES for various combinations of the hyperparameters $\alpha$ and $\eta$ to illustrate that $\alpha$ is the algorithm's primary learning rate. \textbf{Left:} 
      For a given $\alpha$ (color coded), the difference in the trajectory for the three values of $\eta$ (line style) is marginal.
      On the other hand, choosing $\alpha$ well significantly affects performance.
      \textbf{Middle:} 
      For any given $\alpha$, the largest value of $\eta$ performs much better than the other three values which have near-identical performance. Nevertheless, the worst performing value of $\eta$ with well chosen $\alpha = 5\cdot 10^{-3}$ performs better than the best performing value of $\eta$ with $\alpha = 5\cdot 10^{-4}$. 
      \textbf{Right:} 
      When $\alpha$ is too large, the loss increases irrespective of the value of $\eta$.
    	}
    	
    \end{figure*}
    We tried various combinations of AGNES hyperparameters $\alpha$ and $\eta$ to train LeNet-5 on the MNIST dataset to determine which hyperparameter has a greater impact on training. With a fixed batch size of 60 and a momentum value $\rho=0.99$, we trained independent copies of the model for 6 epochs for each combination of the hyperparameters. The average training loss over the epoch was recorded after each epoch. The results are reported in Figure \ref{figure alpha vs eta}. We see that $\alpha$ has the largest impact on the rate of decay of the loss, which establishes it as the `primary learning rage'. 
    If $\alpha$ is too small, the algorithm converges slowly and if $\alpha$ is too large, it diverges. If $\alpha$ is chosen correctly, a good choice of the correction step size $\eta$ (which can be orders of magnitude larger than $\alpha$) further accelerates convergence, 
    but $\eta$ cannot compensate for a poor choice of $\alpha$. 

    \newpage
    \subsection*{Acknowledgements}
    Portions of this research were conducted with the advanced computing resources provided by Texas A\&M High Performance Research Computing. This research was also supported in part by the University of Pittsburgh Center for Research Computing, RRID:SCR\_022735, through the resources provided. Specifically, this work used the H2P cluster, which is supported by NSF award number OAC-2117681.
    \bibliographystyle{plainnat}
    \bibliography{Agnes.bib}

    \newpage
    %\onecolumn
	\appendix
    \part*{Appendix}
 \startcontents[appendix]
  \printcontents[appendix]{l}{1}{\setcounter{tocdepth}{2}}
	\newpage
\section{Additional experiments}\label{appendix simulations}
	\subsection{Numerical experiments for AGNES in smooth strongly convex optimization}
    We compare SGD and AGNES for the family of objective functions
    \[
    f_{\mu,L}:\R^2\to\R, \qquad f_{\mu,L}(x) = \frac\mu2\,x_1^2 + \frac L2\,x_2^2.
    \]
    We considered several stochastic estimators with multiplicative gradient scaling such as 
    \begin{itemize}
        \item collinear noise $g = (1+ \sigma N) \nabla f(x)$, where $N$ is one-dimensional standard normal.
        \item isotropic noise $g= \nabla f(x) + \frac{\sigma\,\|\nabla f(x)\|}{\sqrt d}\, N$, where $N$ is a $d$-dimensional unit Gaussian.
        \item Gaussian noise with standard variation $\sigma\|\nabla f(x)\|$ only in the direction orthogonal to $\nabla f(x)$. 
        \item  Gaussian noise with standard variation $\sigma\|\nabla f(x)\|$ only in the direction of the fixed vector $v=(1,1)/\sqrt2$.
        \item Noise of the form $\nabla f(x)+ \sqrt{1+ \sigma^2\,\|\nabla f(x)\|^2}\, N\,v$ where $v= (1, 1)/\sqrt 2$ and a variable $N$ which takes values $1$ or $-1$ with probability  $\frac12\,\frac{\sigma^2\,\|\nabla f(x)\|^2}{1+ \sigma^2\,\|\nabla f(x)\|^2}$ each; $N=0$ otherwise. In this setting, the noise remains macroscopically large at the global minimum, but the probability of encountering noise becomes small.
    \end{itemize}
    Numerical results were found to be comparable for all settings on a long time-scale, but the geometry of trajectories may change in the early stages of optimization depending on the noise structure.
    
    For collinear and isotropic noise, the results obtained for $f_{\mu,L}$ on $\R^2$ were furthermore found comparable (albeit not identical) to simulations with  a quadratic form on $\R^d$ with $d=10$ and
    \begin{itemize}
        \item $(d-1)$ eigenvalues $=\mu$ and one eigenvalue $=L$
        \item $(d-1)$ eigenvalues $=L$ and one eigenvalue $=\mu$
        \item eigenvalues equi-spaced between $\mu$ and $L$.
    \end{itemize}
     The evolution of $\E[f(x_n)]$ for different values of $\sigma$ and $L\geq \mu \equiv 1$ is considered for both SGD and AGNES in Figure \ref{figure strongly convex}. 

    The objective functions are $\mu=1$-convex and $L$-smooth. We use the optimal parameters $\alpha,\eta,\rho$ derived above for AGNES and the optimal step size $\eta = \frac{1}{L(1+\sigma^2)}$ for SGD. The mean of the objective value at each iteration is computed over 1,000 samples for each optimizer.

    \begin{figure}
    \includegraphics[width = 0.49\textwidth]{./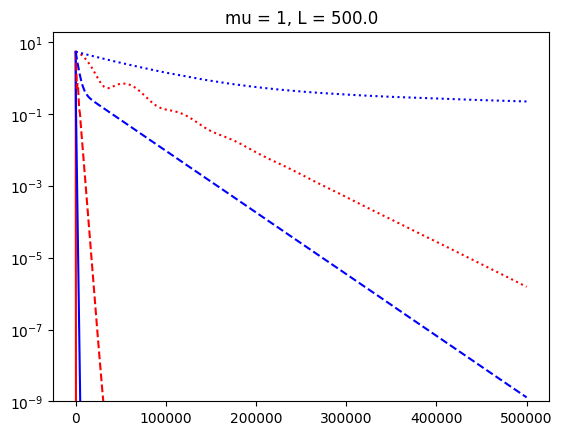}\hfill
    \includegraphics[width = 0.49\textwidth]{./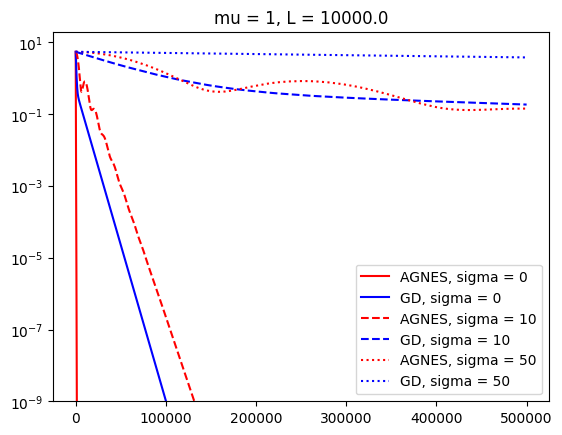}

    \caption{\label{figure strongly convex}
    We compare AGNES (red) and SGD (blue) for the optimization of $f_{\mu,L}$ with $\mu=1$ and $L=500$ (left) / $L=10^4$ (right) for different noise levels $\sigma=0$ (solid line), $\sigma=10$ (dashed) and $\sigma=50$ (dotted). In all cases, AGNES improves significantly upon SGD. The noise model used is isotropic Gaussian, but comparable results are obtained for different versions of multiplicatively scaling noise.
    }
    \end{figure}

    \subsection{Extensively comparing against NAG}
    We ran additional experiments testing a much wider range of hyperparameters for NAG for the task of classifying images from CIFAR-10. The results, presented in Figure \ref{figure hyperparameter exploration} indicate that AGNES outperforms NAG with little fine-tuning of the hyperparameters.

    We trained ResNet-34 using batch size of 50 for 40 epochs using NAG with learning rate in \{$8\cdot 10^{-5}, 10^{-4}, 2\cdot 10^{-4}, 5\cdot 10^{-4}, 8\cdot 10^{-4}, 10^{-3}, 2\cdot 10^{-3}, 5\cdot 10^{-3}, 8\cdot 10^{-3}, 10^{-2}, 2\cdot 10^{-2}, 5\cdot 10^{-2}, 8\cdot 10^{-2}, 10^{-1}, 2 \cdot 10^{-1}, 5\cdot 10^{-1}$\} and momentum value in \{$0.2, 0.5, 0.8, 0.9, 0.99$\}. These 80 combinations of hyperparameters for NAG were compared against AGNES with the default hyperparameters suggested $\alpha = 10^{-3}$ (learning rate), $\eta = 10^{-2}$ (correction step), and $\rho = 0.99$ (momentum) as well as AGNES with a slightly smaller learning rate $5\cdot 10^{-4}$ (with the other two hyperparameters being the same).
    
    AGNES consistently achieved a lower training loss as well as a better test accuracy faster than any combination of NAG hyperparameters tested. The same random seed was used each time to ensure a fair comparison between the optimizers. Overall, AGNES remained more stable and while other versions of NAG occasionally achieved a higher classification accuracy in certain epochs.

    \begin{figure}
    \centering
    \includegraphics[clip = true, trim = 15mm 8mm 15mm 3mm, width=0.65\linewidth]{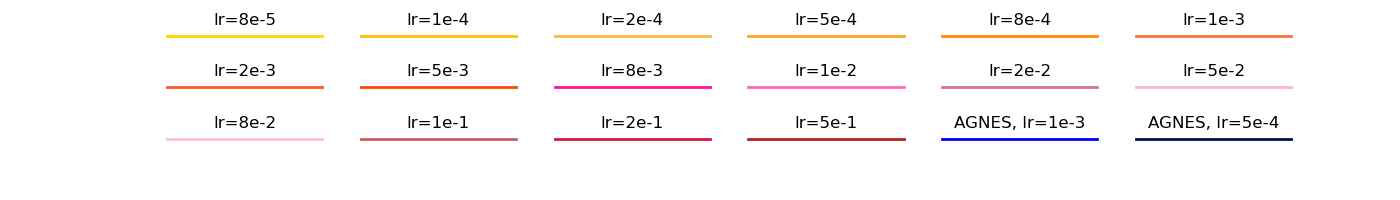}
    
    \includegraphics[clip = true, trim = 10mm 8mm 15mm 8mm, width=0.32\linewidth]{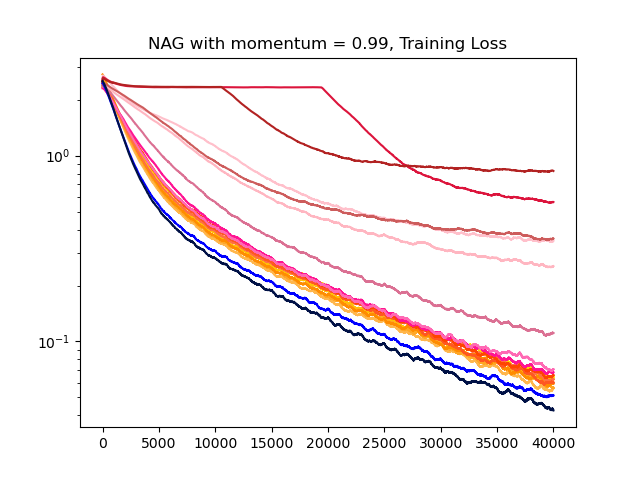}\hspace{3mm}
    \includegraphics[clip = true, trim = 10mm 8mm 15mm 8mm, width=0.32\linewidth]{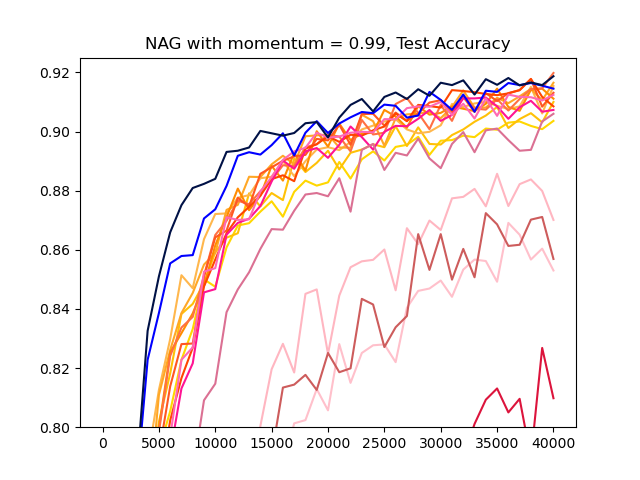}
    \vspace{2mm}
    
    \includegraphics[clip = true, trim = 10mm 8mm 15mm 8mm, width=0.32\linewidth]{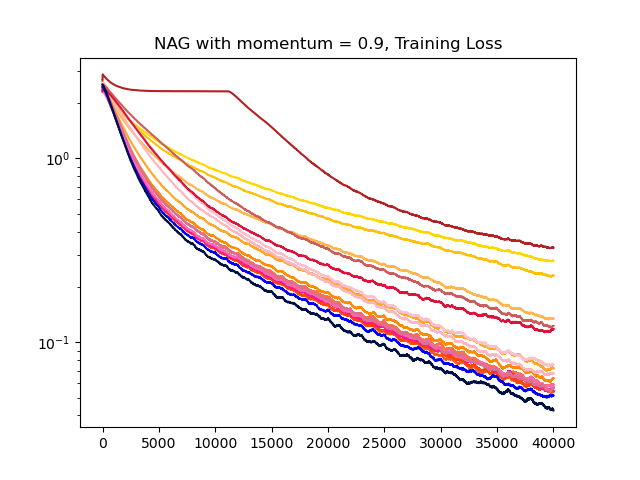}
    \hspace{3mm}
    \includegraphics[clip = true, trim = 10mm 8mm 15mm 8mm, width=0.32\linewidth]{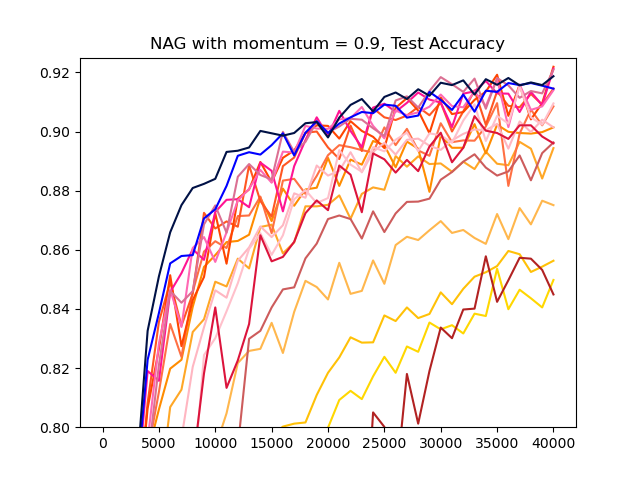}
    \vspace{2mm}
    
    \includegraphics[clip = true, trim = 10mm 8mm 15mm 8mm, width=0.32\linewidth]{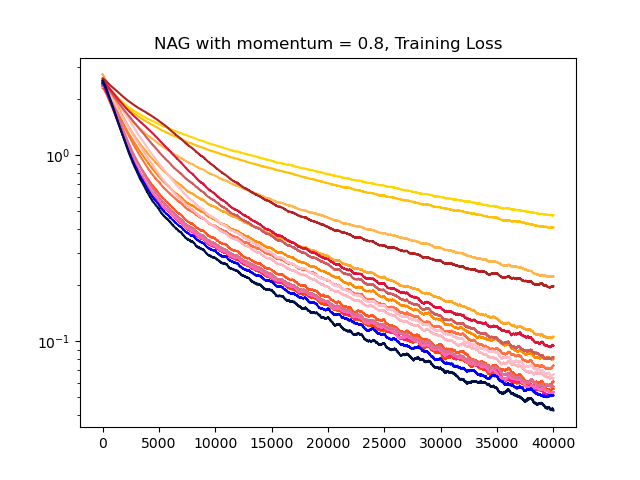}
    \hspace{3mm}
    \includegraphics[clip = true, trim = 10mm 8mm 15mm 8mm, width=0.32\linewidth]{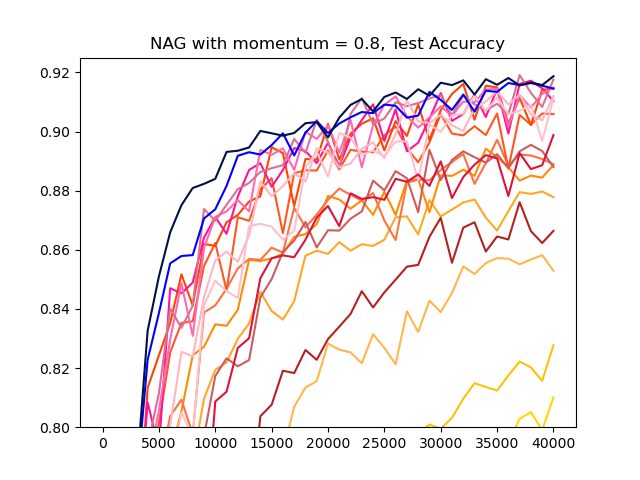}
    \vspace{2mm}
    
    \includegraphics[clip = true, trim = 10mm 8mm 15mm 8mm, width=0.32\linewidth]{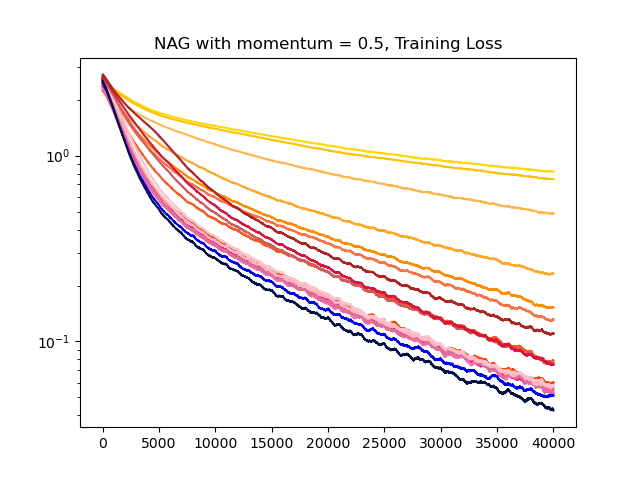}
    \hspace{3mm}
    \includegraphics[clip = true, trim = 10mm 8mm 15mm 8mm, width=0.32\linewidth]{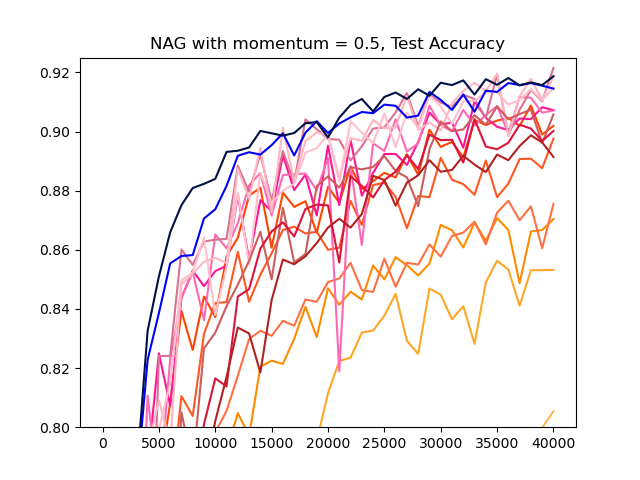}
    \vspace{2mm}
    
    \includegraphics[clip = true, trim = 10mm 8mm 15mm 8mm, width=0.32\linewidth]{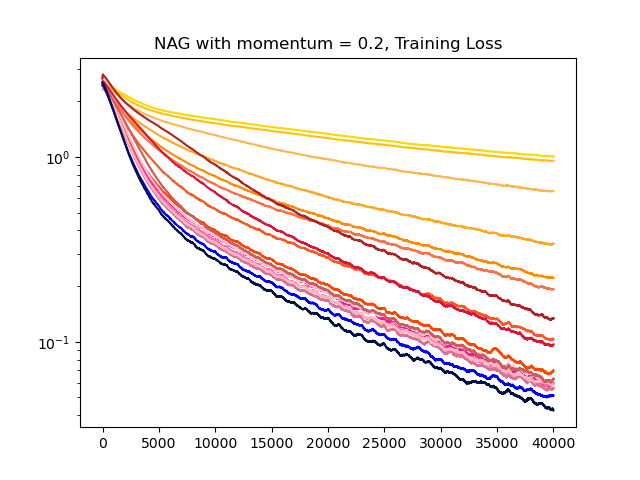}
    \hspace{3mm}
    \includegraphics[clip = true, trim = 10mm 8mm 15mm 8mm, width=0.32\linewidth]{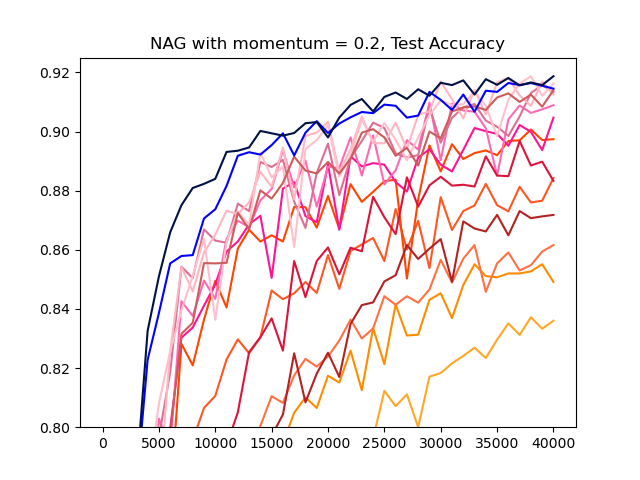}
    \caption{\label{figure hyperparameter exploration}
    We trained ResNet34 on CIFAR-10 with batch size 50 for 40 epochs using NAG. Training losses are reported as a running average with decay rate 0.999 in the left column and test accuracy after every epoch is reported in the right column. Each row represents a specific value of momentum used for NAG (from top to bottom: 0.99, 0.9, 0.8, 0.5, and 0.2) with learning rates ranging from $8\cdot 10^{-5}$ to $0.5$.  These hyperparameter choices for NAG were compared against AGNES with the default hyperparameters suggested $\alpha = 10^{-3}$ (learning rate), $\eta = 10^{-2}$ (correction step), and $\rho = 0.99$ (momentum) as well as AGNES with a slightly smaller learning rate $5\cdot 10^{-4}$ (with $\rho = 0.99$, $\eta = 10^{-2}$ as well). The same two training trajectories with AGNES are shown in all the plots in shades of blue. The horizontal axes represent the number of optimizer steps.}
\end{figure}

    \section{Multiple versions of the AGNES scheme}
	\subsection{Equivalence of the two formulations of AGNES}\label{appendix equiv}
	\begin{lemma}
		The two formulations of the AGNES time-stepping scheme (\ref{eq agnesterov}) and (\ref{eq agnes}) produce the same sequence of points.
	\end{lemma}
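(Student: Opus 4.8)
The plan is to prove the equivalence by induction on $n$, exhibiting an explicit correspondence between the auxiliary velocity variable $v_n$ in \eqref{eq agnes} and the displacement $x_n' - x_n$ appearing in \eqref{eq agnesterov}. The guiding identity is $\alpha v_n = x_n' - x_n$, equivalently $x_n' = x_n + \alpha v_n$, which is precisely the first line of \eqref{eq agnes}. I would show that if both schemes are driven by the same sequence of gradient estimates $g_n$ and this identity holds at step $n$, then both schemes produce the same $x_{n+1}$ and $x_{n+1}'$, and the identity propagates to step $n+1$.

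First I would establish the base case. In \eqref{eq agnes} the initialization $v_0 = 0$ gives $x_0' = x_0 + \alpha \cdot 0 = x_0$, which matches the initialization $x_0 = x_0'$ in \eqref{eq agnesterov}; the identity $\alpha v_0 = x_0' - x_0 = 0$ then holds trivially. For the inductive step, the gradient step is immediate: both schemes set $x_{n+1} = x_n' - \eta g_n$, and since the points $x_n'$ agree by hypothesis, so do the new points $x_{n+1}$.

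The crux is the momentum step. In \eqref{eq agnes} one has
\[
x_{n+1}' = x_{n+1} + \alpha v_{n+1} = x_{n+1} + \alpha \rho_n (v_n - g_n),
\]
while \eqref{eq agnesterov} prescribes $x_{n+1}' = x_{n+1} + \rho_n(x_n' - \alpha g_n - x_n)$. Substituting the inductive hypothesis $\alpha v_n = x_n' - x_n$ into the first expression yields $\alpha \rho_n(v_n - g_n) = \rho_n(\alpha v_n - \alpha g_n) = \rho_n(x_n' - x_n - \alpha g_n)$, which is exactly the increment in \eqref{eq agnesterov}. Hence the two schemes agree on $x_{n+1}'$, and reading off $\alpha v_{n+1} = x_{n+1}' - x_{n+1}$ shows the identity is preserved, closing the induction.

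The argument is essentially a bookkeeping exercise with no analytic obstacle; the only point requiring care is to confirm that the common gradient estimate $g_n = g(x_n', \omega_n)$ is evaluated at the same point in both formulations, which is guaranteed once the $x_n'$ sequences are shown to coincide. I would also note that the reverse direction (recovering \eqref{eq agnes} from \eqref{eq agnesterov}) requires no separate argument: defining $v_n := (x_n' - x_n)/\alpha$ from the iterates of \eqref{eq agnesterov} and running the same induction produces the velocity recursion $v_{n+1} = \rho_n(v_n - g_n)$, so the correspondence is a genuine bijection between the two parametrizations.
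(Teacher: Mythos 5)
Your proof is correct and follows essentially the same route as the paper's: both hinge on the identity $\alpha v_n = x_n' - x_n$ and the algebraic observation $\alpha\rho_n(v_n - g_n) = \rho_n(x_n' - \alpha g_n - x_n)$, with your version merely making the induction explicit where the paper presents it as a direct substitution eliminating $v_n$.
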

	\begin{proof}
		We consider the three-step formulation (\ref{eq agnes}),
		\[
		v_0 = 0, \quad
		x_n' = x_n +\alpha v_n, \qquad
		x_{n+1} = x_n' - \eta g'_n, \qquad
		v_{n+1} = \rho_n(v_n - g'_n),
		\] and use it to derive (\ref{eq agnesterov}) by eliminating the velocity variable $v_n$. If $v_0=0$, then $x'_0 = x_0$. From the definition $x'_n$, we get $\alpha v_n = x'_n - x_n$. Substituting this into the definition of $v_{n+1}$,
		\[ v_{n+1} = \rho_n \left(\frac{x'_n - x_n}{\alpha} - g'_n\right). \]
		Then using this expression for $v_{n+1}$ to compute $x'_{n+1}$,
		 \begin{align*}
		 	x'_{n+1} &= x_{n+1} + \alpha v_{n+1}\\
		 	& = x_{n+1} + \alpha\rho_n\left( \frac{x'_n - x_n}{\alpha} - g'_n \right) \\
		 	& = x_{n+1} + \rho_n(x'_n - \alpha g'_n - x_n).
		 \end{align*}
		Together with the definition of $x_{n+1}$ and the initialization $x'_0 = x_0$, this is exactly the two-step formulation (\ref{eq agnesterov}) of AGNES.		
	\end{proof}

    \subsection{Equivalence of AGNES and MaSS}
    After the completion of this work, we learned of 
    \cite{liu2018accelerating}'s Momentum-Added Stochastic Solver (MaSS) method, which generates sequences according to the iteration
\[
x_{n+1} = x_n' - \eta_1 g'_n, \qquad x_{n+1}' = (1+\gamma) x_{n+1} - \gamma x_{n} +\eta_2 g'_n
\]
where $g'_n$ is an estimate for $\nabla f(x_n')$. This is a version of AGNES with the choice $\eta=\eta_1$ and the momentum step
\begin{align*}
x_{n+1}' &= x_{n+1} + \rho(x_n' - \alpha g'_n - x_n)\\
    &= x_{n+1} + \rho(x_{n+1} + \eta g'_n -\alpha g'_n - x_n) \\
    &= (1+\rho)x_{n+1} - \rho x_n + (\eta-\alpha)\rho g'_n,
\end{align*}
i.e.\ MaSS coincides with AGNES for $\gamma = \rho$ and $\eta_2 = (\eta-\alpha)\rho$.

 \section{Continuous time interpretation of AGNES}\label{appendix continuous time}
	
	For better interpretability, we consider the continuous time limit of the AGNES algorithm. Similar ODE analyses of accelerated first order methods have been considered by many authors, including \cite{su2014differential,siegel2019accelerated,wilson2021lyapunov,attouch2022first,aujol2,zhang2018direct,dambrine2022stochastic}.

    Consider the time-stepping scheme
    \begin{equation}\label{eq agnes general}
	 	v_0 = 0, \quad
		x_n' = x_n + \gamma_1 v_n, \qquad
		x_{n+1} = x_n' - \eta g'_n, \qquad
		v_{n+1} = \rho_n(v_n - \gamma_2g'_n),
	\end{equation} 
    which reduces to AGNES as in \eqref{eq agnes} with the choice of parameters $\gamma_1 = \alpha, \gamma_2=1$. For the derivation of continuous time dynamics, we show that the same scheme arises with the choice $\gamma_1 = \gamma_2 = \sqrt\alpha$.

    \begin{restatable}{lemma}{invariance}\label{lemma alpha gamma}
	Let $\rho\in(0,1)$ and $\eta>0$ parameters.
	Assume that $\gamma_1, \gamma_2$ and $\tilde\gamma_1, \tilde\gamma_2$ are parameters such that $\tilde\gamma_1\tilde\gamma_2= \gamma_1\gamma_2$. Consider the sequences $(\tilde x_n, \tilde x_n', \tilde v_n)$ and $(x_n, x_n', v_n)$ generated by the time stepping scheme \eqref{eq agnes general} with parameters $(\rho, \eta,\tilde \gamma_1, \tilde \gamma_2)$ and $(\rho, \eta, \gamma_1, \gamma_2)$ respectively. If $x_0 = \tilde x_0$ and $\gamma_1 v_0 = \tilde \gamma_1\,\tilde v_0$, then $x_n = \tilde x_n$, $x_n' = \tilde x_n'$ and $\tilde\gamma_1\,\tilde v_n = \gamma_1\,v_n$ for all $n\in \N$.
	\end{restatable}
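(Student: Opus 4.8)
The plan is to identify the change of variables that collapses the dependence of the dynamics on $\gamma_1$ and $\gamma_2$ into a dependence on the product $\gamma_1\gamma_2$ alone, and then to run a direct induction. The natural choice is to introduce the rescaled velocity $w_n := \gamma_1 v_n$ (and correspondingly $\tilde w_n := \tilde\gamma_1 \tilde v_n$). Multiplying the velocity update in \eqref{eq agnes general} by $\gamma_1$ and using $x_n' = x_n + w_n$, the scheme becomes
\[
x_n' = x_n + w_n, \qquad x_{n+1} = x_n' - \eta g_n, \qquad w_{n+1} = \rho_n\big(w_n - \gamma_1\gamma_2\, g_n\big),
\]
where $g_n = g(x_n', \omega_n)$. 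Crucially, the right-hand sides now involve $\gamma_1$ and $\gamma_2$ only through the product $\gamma_1\gamma_2$, together with the shared parameters $\eta$ and $\rho_n$.

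With this reformulation in hand, I would prove by induction on $n$ that $x_n = \tilde x_n$, $x_n' = \tilde x_n'$, and $w_n = \tilde w_n$ simultaneously. The base case is exactly the hypotheses of the lemma: $x_0 = \tilde x_0$ by assumption and $w_0 = \gamma_1 v_0 = \tilde\gamma_1 \tilde v_0 = \tilde w_0$. For the inductive step, assume $x_n = \tilde x_n$ and $w_n = \tilde w_n$. Then $x_n' = x_n + w_n = \tilde x_n + \tilde w_n = \tilde x_n'$, so the two trajectories query the gradient oracle at the same point with the same sample $\omega_n$, giving $g_n = g(x_n', \omega_n) = g(\tilde x_n', \omega_n) = \tilde g_n$. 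Substituting into the updates above and using the hypothesis $\gamma_1\gamma_2 = \tilde\gamma_1\tilde\gamma_2$ yields $x_{n+1} = \tilde x_{n+1}$ and $w_{n+1} = \tilde w_{n+1}$, closing the induction. Unwinding $w_n = \gamma_1 v_n$ then gives the stated conclusion $\tilde\gamma_1 \tilde v_n = \gamma_1 v_n$.

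There is no serious obstacle here: the entire content is the observation that $w_n = \gamma_1 v_n$ is the correct invariant quantity, after which everything is a mechanical verification. The only point requiring minor care is that the equality of gradients $g_n = \tilde g_n$ rests on the two schemes evaluating $g$ at the identical argument $x_n'$ with the same $\omega_n$; this is why it is essential to carry the three equalities $x_n = \tilde x_n$, $x_n' = \tilde x_n'$, and $w_n = \tilde w_n$ together through the induction rather than attempting to establish them in isolation.
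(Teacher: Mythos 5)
Your proof is correct and follows essentially the same route as the paper: an induction on $n$ tracking the equalities $x_n = \tilde x_n$ and $\gamma_1 v_n = \tilde\gamma_1 \tilde v_n$, with the observation that the velocity update, once multiplied by $\gamma_1$, depends on the scaling parameters only through the product $\gamma_1\gamma_2$. Naming the invariant $w_n = \gamma_1 v_n$ and noting explicitly that $g_n = \tilde g_n$ because both schemes query the oracle at the same point with the same sample $\omega_n$ are clean presentational touches, but the substance is identical to the paper's argument.
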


    \begin{proof}
	We proceed by mathematical induction on $n$. For $n=0$, the claim holds by the hypotheses of the lemma. For the inductive hypothesis, we suppose that $x_n = \tilde x_n$ and $\gamma_1 v_n = \tilde \gamma_1 \tilde v_n$ and prove the claim for $n+1$. Note that since $x'_n = x_n + \gamma_1 v_n$, it automatically follows that $x'_n = \tilde x'_n$. This implies that \[ x_{n+1} = x'_n - \eta g'_n = \tilde x'_n - \eta g'_n  = \tilde x_{n+1}.\] Considering the velocity term,
	\begin{align*}
	    \gamma_1 v_{n+1} &= \rho_{n} (\gamma_1 v_n - \gamma_1 \gamma_2 g'_n) 
	                   = \rho_n (\tilde \gamma_1 \tilde v_n - \tilde \gamma_1 \tilde \gamma_2 g'_n) 
	                   = \tilde \gamma_1 \rho_n (\tilde v_n - \tilde \gamma_2 g'_n)
	                   = \tilde \gamma_1 \tilde v_{n+1}.
	\end{align*}
	Thus $x_{n+1} = \tilde x_{n+1}$ and $\gamma_1 v_{n+1} = \tilde\gamma_1 \tilde v_{n+1}$. The induction can therefore be continued.
	\end{proof}

    Consider the choice of parameters in Theorem \ref{theorem strongly convex} by
    \[
    \eta = \frac1{L(1+\sigma^2)},\quad 
    \alpha = \frac{1 -\sqrt{\mu/L}}{1-\sqrt{\mu/L}+\sigma^2}\eta \approx \frac \eta{1+\sigma^2}, \quad 
    \rho = \frac{\sqrt{L}(1+\sigma^2) - \sqrt\mu}{\sqrt{L}(1+\sigma^2)+\sqrt\mu}
    \]
    if $\mu \ll L$.
    We denote $h:= \frac1{\sqrt L(1+\sigma^2)}$ and note that 
    \[
    \gamma_1 = \gamma_2 = \sqrt\alpha \approx h, \qquad \eta = (1+\sigma^2) h^2 = \frac{h}{\sqrt L}
    \]
    and\[
    \rho = 1 - 2 \frac{\sqrt \mu}{\sqrt{L}(1+\sigma^2) +\sqrt\mu} = 1- 2\sqrt\mu\,\frac{\sqrt L(1+\sigma^2)}{\sqrt L(1+\sigma^2) + \sqrt\mu} h\approx 1- 2\sqrt\mu h.
    \]
    Depending on which interpretation of $\eta$ we select, we obtain a different continuous time limit. First, consider the deterministic case $\sigma = 0$. Then
 
    \begin{align*}
    \begin{pmatrix}
        x_{n+1}\\ v_{n+1}
    \end{pmatrix}
    &= \begin{pmatrix} x_n\\ v_n\end{pmatrix} + h \begin{pmatrix} v_n - h\nabla f(x_n+hv_n) \\ -2\sqrt\mu\,v_n - (1-\sqrt\mu h) \nabla f(x_n+hv_n)\end{pmatrix}\\
    &= \begin{pmatrix} x_n\\ v_n\end{pmatrix} + h \begin{pmatrix} v_n \\ -2\sqrt\mu \,v_n - \nabla f(x_n)\end{pmatrix} + O(h^2)
    \end{align*}
    Keeping $f$ fixed and taking $h\to 0$, this is a time-stepping scheme for the coupled ODE system
    \[
    \begin{pmatrix} \dot x \\ \dot v\end{pmatrix}
        = \begin{pmatrix}
             v\\ -2\sqrt\mu\,v - \nabla f(x)
        \end{pmatrix}.
    \]
    Differentiating the first equation and using the system ODEs to subsequently eliminate $v$ from the expression, we observe that
    \[
    \ddot x = \dot v = - 2\sqrt\mu\,v - \nabla f(x) = - 2\sqrt\mu\,\dot x - \nabla f(x),
    \]
    i.e.\ we recover the heavy ball ODE. The alternative interpretation $\eta = h/\sqrt{L}$ can be analized equivalently and leads to a system
    \[
    \begin{pmatrix} \dot x \\ \dot v\end{pmatrix}
        = \begin{pmatrix}
             v - \frac1{\sqrt{L}}\,\nabla f(x)\\ -2\sqrt\mu\,v - \nabla f(x)
        \end{pmatrix}.
    \]
    which corresponds to a second order ODE 
    \begin{align*}
        \ddot x &= - \dot v - \frac1{\sqrt L} D^2f(x) \,\dot x\\
        &= - 2\sqrt\mu v - \nabla f(x) - \frac1{\sqrt L} D^2f(x) \,\dot x\\
        &= -2\sqrt\mu\,\left(\dot x + \frac1{\sqrt L}\nabla f(x)\right) - \nabla f(x)  - \frac1{\sqrt L} D^2f(x) \,\dot x\\
        &= - \left(2\sqrt\mu\,I_{m\times m} + \frac1{\sqrt L}\,D^2f(x)\right)\dot x - \left(1+ 2\,\sqrt{\frac\mu L}\right)\nabla f(x).
    \end{align*}
    This continuum limit is not a simple heavy-ball ODE, but rather a system with adaptive friction modelled by Hessian damping. A related Newton/heavy ball hybrid dynamical system was studied in greater detail by \cite{alvarez2002second}. For $L$-smooth functions, the $\ell^2$-operator norm of $D^2f(x)$ satisfies $\|D^2f(x)\|\leq L$, i.e.\ the additional friction term can be as large as $\sqrt L$ in directions corresponding to high eigenvalues of the Hessian. This provides significant regularization in directions which would otherwise be notably underdamped. 

    {Following Appendix \ref{appendix momentum parameters}, we maintain that the scaling 
     \[
            \frac{\eta(1-\rho)}{\alpha} = 2\,\sqrt{\frac \mu L}
    \]
    is `natural' as we vary $\eta,\alpha,\rho$. The same fixed ratio is maintained for the scaling choice $\eta = h/\sqrt L$ as
    \[
    \frac{\eta(1-\rho)}{\alpha} = \frac{h/\sqrt L\cdot 2\sqrt\mu\,h}{h^2} = 2\sqrt{\frac\mu L}.
    \]
    Indeed, numerical experiments in Section \ref{appendix simulations} suggest that such regularization may be observed in practice as high eigenvalues in the quadratic map do not exhibit `underdamped' behavior. We therefore believe that Hessian dampening is the potentially more instructive continuum description of AGNES.
    }
    A similar analysis can be conducted in the stochastic case with the scaling
    \[
    \gamma_1 = \gamma_2 = h, \quad \eta\in \left\{(1+\sigma^2)h^2, \:\frac h{\sqrt L}\right\}, \quad \rho = 1- 2\sqrt\mu h
    \]
    for large $\sigma$. We incorporate noise as 
    \[
    g'_n = (1+ \sigma N_n) \nabla f(x_n')
    \]
    and write
    \begin{align*}
    \begin{pmatrix}
        x_{n+1}\\ v_{n+1}
    \end{pmatrix}
    &= \begin{pmatrix} x_n\\ v_n\end{pmatrix} 
    + h \begin{pmatrix} v_n - (1+\sigma^2)h\nabla g'_n \\ -2\sqrt\mu\,v_n - (1-\sqrt\mu h)  \big(1+\sigma N_n)\nabla f(x_n+hv_n)\end{pmatrix}\\
    &= \begin{pmatrix} x_n\\ v_n\end{pmatrix} + h \begin{pmatrix} v_n \\ -2\sqrt\mu \,v_n - \nabla f(x_n) - \sqrt{h}\,\frac{\sigma\sqrt h}2\,N_n\nabla f(x)\end{pmatrix} + O(h^2)
    \end{align*}
    which can be viewed as an approximation of the coupled ODE/SDE system
    \[
    \begin{pmatrix} \d x\\ \d v\end{pmatrix} = \begin{pmatrix} v\,\d t\\ \big(-2\sqrt\mu \,v - \nabla f(x)\big)\, \d t + \sigma \sqrt h\,\d B \cdot \nabla f(x)\end{pmatrix}
    \]
    under moment bounds on the noise $N_n$. The precise noise type depends on the assumptions on the covariance structure of $N_n$ -- noise can point only in gradient direction or be isotropic on the entire space. For small $h$, the dynamics become deterministic. Again, an alternative continuous time limit is
    \[
    \begin{pmatrix} \d x\\ \d v\end{pmatrix} = \begin{pmatrix} (v-\nabla f(x)/\sqrt L)\,\d t + \frac{\sigma\sqrt h}{\sqrt L} \,\d B\cdot \nabla f(x) \\ \big(-2\sqrt\mu\, v - \nabla f(x)\big)\, \d t + \sigma \sqrt h\,\d B \cdot \nabla f(x)\end{pmatrix}
    \]
    if $\eta$ is scaled towards zero as $h/\sqrt L$. The first limiting structure is recoverd in the limit $L\to\infty$. Notably, the noise in the first equation is expected to be non-negligible if $\sigma\gg \sqrt L$. A similar analysis can be conducted in the convex case, noting that
    \[
    \frac{n+n_0}{n+n_0+3} = 1 - \frac{3}{n+n_0+3} = 1 - \frac{3}{(n+n_0+3)h} \,h
    \]
    where $(n+n_0+3)h$ roughly corresponds to the time $t$ in the continuous time setting.

	\section{Background material and auxiliary results}\label{appendix auxiliary}
	
	In this appendix, we gather a few auxiliary results that will be used in the proofs below. We believe that these will be familiar to the experts and can be skipped by experienced readers. 

    \subsection{A brief review of L-smoothness and (strong) convexity}
	Recall that if a function $f$ is $L$-smooth, then
	\begin{equation}\label{L-smoothness-equation}
	f(y) \leq f(x) + \nabla f(x)\cdot (y-x) + \frac L2\|x-y\|^2.
	\end{equation}
    For convex functions, this is in fact equivalent to $\nabla f$ being $L$-Lipschitz.
    \begin{lemma}
        If $f$ is convex and differentiable and satisfies \eqref{L-smoothness-equation}, then $\|\nabla f(x) - \nabla f(y)\| \leq L\|x - y\|$ for all $x$ and $y$.
    \end{lemma}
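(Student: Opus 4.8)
The statement to prove is the standard converse direction of $L$-smoothness: for a convex differentiable $f$ satisfying the descent inequality \eqref{L-smoothness-equation}, the gradient is $L$-Lipschitz. Let me sketch the proof.

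\medskip

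\textbf{Proof plan.} The plan is to exploit the \emph{co-coercivity} of the gradient, which is the key intermediate estimate connecting the quadratic upper bound to the Lipschitz bound. First I would fix two points $x,y\in\R^m$ and, for an auxiliary analysis, define the convex function $\phi(z) = f(z) - \nabla f(x)\cdot z$. Because $f$ is convex and $\nabla f(x)$ is a constant, $\phi$ is convex, it inherits the same $L$-smoothness inequality \eqref{L-smoothness-equation}, and it has a global minimum at $z=x$ (since $\nabla\phi(x)=0$ and $\phi$ is convex). The strategy is then to apply the descent inequality to $\phi$ and minimize the right-hand side over the perturbation.

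\medskip

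The central computation is as follows. Since $x$ minimizes $\phi$, for any point $z$ we have $\phi(x) \le \phi(z - \tfrac1L\nabla\phi(z))$, and applying \eqref{L-smoothness-equation} to $\phi$ at the point $z$ with the step $-\tfrac1L\nabla\phi(z)$ gives
\[
\phi(x) \le \phi(z) + \nabla\phi(z)\cdot\Big({-}\tfrac1L\nabla\phi(z)\Big) + \frac L2\,\Big\|\tfrac1L\nabla\phi(z)\Big\|^2 = \phi(z) - \frac1{2L}\,\|\nabla\phi(z)\|^2 .
\]
Rearranging and unpacking $\phi$ and $\nabla\phi(z) = \nabla f(z) - \nabla f(x)$ yields the co-coercivity estimate
\[
\frac1{2L}\,\|\nabla f(z)-\nabla f(x)\|^2 \le f(z) - f(x) - \nabla f(x)\cdot(z-x).
\]
Setting $z=y$ gives one inequality, and swapping the roles of $x$ and $y$ gives a symmetric companion. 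Adding the two inequalities, the function-value terms on the right telescope into $\big(\nabla f(x)-\nabla f(y)\big)\cdot(x-y)$, producing the co-coercivity bound
\[
\frac1L\,\|\nabla f(x)-\nabla f(y)\|^2 \le \big(\nabla f(x)-\nabla f(y)\big)\cdot(x-y).
\]
Finally I would apply the Cauchy--Schwarz inequality to the right-hand side, bounding it by $\|\nabla f(x)-\nabla f(y)\|\,\|x-y\|$, then divide through by $\|\nabla f(x)-\nabla f(y)\|$ (the case where this vanishes being trivial) to conclude $\|\nabla f(x)-\nabla f(y)\| \le L\,\|x-y\|$.

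\medskip

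\textbf{Main obstacle.} The only genuinely delicate step is justifying that $x$ is a global minimizer of the shifted function $\phi$ and that we are therefore entitled to compare $\phi(x)$ with $\phi$ evaluated at the gradient-step point $z-\tfrac1L\nabla\phi(z)$; this requires invoking convexity of $f$ together with the vanishing of $\nabla\phi(x)$. Everything else is a routine substitution, an application of \eqref{L-smoothness-equation}, a symmetrization, and Cauchy--Schwarz. An alternative route avoids the auxiliary function $\phi$ entirely by writing the descent inequality directly along the segment from $x$ to $y$ and integrating, but the co-coercivity argument is cleaner and more standard, so I would present that one.
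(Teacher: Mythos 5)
Your proposal is correct and follows essentially the same route as the paper's proof: both tilt $f$ by a linear term so that one of the two points becomes a global minimizer of the tilted convex function (the paper's $f_y(x) = f(x) - \nabla f(y)\cdot(x-y)$ is your $\phi$ with the roles of $x$ and $y$ exchanged), both apply the descent inequality $\phi(\text{min}) \le \phi(z) - \tfrac{1}{2L}\|\nabla\phi(z)\|^2$ to obtain co-coercivity, and both finish by symmetrizing, adding, and applying Cauchy--Schwarz. No gaps; the step you flag as delicate (that $\nabla\phi(x)=0$ plus convexity makes $x$ a global minimizer of $\phi$) is exactly the point the paper highlights as its only use of convexity.
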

    \begin{proof}
        Setting $y = x - \frac{1}{L}\nabla f(x)$ in \eqref{L-smoothness-equation} implies that $f(x) - \inf_z f(z) \geq \frac{1}{2L}\|\nabla f(x)\|^2$. Applying this to the modified function $f_y(x) = f(x) - \nabla f(y) \cdot (x - y)$, which is still convex and satifies \eqref{L-smoothness-equation}, we get 
        \[
        \begin{split}
            f_y(x) - \inf_z f_y(z) = f_y(x) - f_y(y) &= f(x) - \nabla f(y)\cdot (x-y) - f(y)\\
            &\geq \frac{1}{2L}\|\nabla f_y(x)\|^2
             = \frac{1}{2L}\|\nabla f(x) - \nabla f(y)\|^2.
        \end{split}
        \]
        Note that here we have used the convexity to conclude that $\inf_z f_y(z) = f_y(y)$, i.e. that $f_y$ is minimized at $y$, since by construction $\nabla f_y(y) = 0$ (this is the only place where we use convexity!). Swapping the role of $x$ and $y$, adding these inequalities, and applying Cauchy-Schwartz we get
        \[
            \frac{1}{L}\|\nabla f(x) - \nabla f(y)\|^2 \leq (\nabla f(x) - \nabla f(y))\cdot (x - y) \leq \|\nabla f(x) - \nabla f(y)\|\|x - y\|,
        \]
        which implies the result.
    \end{proof}
	From the first order strong convexity condition, 
	\[
	f(y) \geq f(x) + \nabla f(x)\cdot (y-x) + \frac\mu2\|x-y\|^2,
	\]
	we deduce the more useful formulation $\nabla f(x) \cdot (x-y)\geq f(x) - f(y) + \frac\mu2\,\|x-y\|^2$. The convex case arises as the special case $\mu=0$. We note a special case of these conditions when one of the points is a minimizer of $f$.
	
	\begin{lemma}\label{lemma mu-L bound}
		If $f$ is an $L$-smooth function and $x^*$ is a point such that $f(x^*)=\inf_{x\in\R^m} f(x)$ then for any $x\in \R^m$,
		\[ f(x) - f(x^*) \leq \frac L2 \norm{x - x^*}^2. \] 
		Similarly, if $f$ is differentiable and $\mu$-strongly convex then for any $x\in \R^m$,
		\[ \frac \mu2 \norm{x - x^*}^2 \leq f(x) - f(x^*). \] 
	\end{lemma}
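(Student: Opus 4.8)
The plan is to derive both inequalities directly from the respective first-order conditions, using only the fact that the gradient vanishes at the minimizer. Since $f$ is $L$-smooth (hence differentiable) and $x^*$ minimizes $f$ over all of $\R^m$, the first-order optimality condition gives $\nabla f(x^*) = 0$; this single fact is what makes both estimates collapse to the stated form.

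For the upper bound I would apply the $L$-smoothness inequality \eqref{L-smoothness-equation} with base point $x^*$ and evaluation point $x$, namely
\[ f(x) \leq f(x^*) + \nabla f(x^*)\cdot (x - x^*) + \frac L2 \norm{x - x^*}^2. \]
Substituting $\nabla f(x^*) = 0$ eliminates the linear term and yields $f(x) - f(x^*) \leq \frac L2 \norm{x - x^*}^2$ at once.

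For the lower bound I would proceed symmetrically, starting from the first-order strong convexity condition centered at $x^*$,
\[ f(x) \geq f(x^*) + \nabla f(x^*)\cdot (x - x^*) + \frac \mu2 \norm{x - x^*}^2, \]
and again use $\nabla f(x^*) = 0$ to conclude $f(x) - f(x^*) \geq \frac \mu2 \norm{x - x^*}^2$.

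There is no genuine obstacle here: both statements are one-line consequences of inequalities already recorded in this appendix, and the estimates are in a sense the ``degenerate'' instances of those inequalities in which the linear term drops out. The only point requiring a word of justification is $\nabla f(x^*) = 0$, which holds because the infimum is attained at the interior point $x^*$ of the open domain $\R^m$ and $f$ is differentiable there. I would emphasize that no uniqueness or further structure of $x^*$ is needed; the minimizer hypothesis enters solely through the vanishing of the gradient.
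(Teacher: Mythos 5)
Your proof is correct and matches the paper's argument exactly: the paper likewise derives both bounds by applying the first-order $L$-smoothness and $\mu$-strong convexity conditions centered at $x^*$ and using $\nabla f(x^*)=0$. You have merely written out explicitly what the paper states in one line, which is fine.
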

	\begin{proof}
		This follows from the two first order conditions stated above by noting that $\nabla f(x^*) = 0$ if $x^*$ is a minimizer of $f$.
	\end{proof}

    Additionally, $L$-smooth functions which are bounded from below satisfy the inequality
    \[
    \|\nabla f\|^2 \leq 2L\,(f - \inf f).
    \]
    Intuitively, if the gradient is large at a point, then we reduce $f$ quickly by walking in the gradient direction. The $L$-smoothness condition prevents the gradient from decreasing quickly along our path. Thus if the gradient is larger than a threshold at a point where $f$ is close to $\inf f$, then the inequality $f\geq \inf f$ would be violated.

	Let us record a modified gradient descent estimate, which is used only in the non-convex case. The difference to the usual estimate is that the gradient is evaluated at the terminal point of the interval rather than the initial point.
	
	\begin{lemma}\label{lemma momentum decrease}
		For any $x, v$ and $\alpha$: If $f$ is $L$-smooth, then
		\[
		f(x+\alpha v) \leq f(x) +\alpha \nabla f(x+\alpha v)\cdot v + \frac{L\alpha^2}2 \|v\|^2.
		\]
	\end{lemma}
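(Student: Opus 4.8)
The plan is to exploit the one subtlety that distinguishes this bound from the standard descent lemma \eqref{L-smoothness-equation}: here the gradient is sampled at the \emph{terminal} point $x+\alpha v$ rather than at the base point $x$. Writing $z := x+\alpha v$, the cleanest route is to apply an $L$-smoothness estimate with the two endpoints interchanged, so that the gradient naturally lands at $z$.

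Concretely, I would first record the two-sided consequence of the Lipschitz gradient assumption. Since $f$ is $L$-smooth, $\nabla f$ is $L$-Lipschitz, and the fundamental theorem of calculus gives, for any $a,b$,
\[
f(b) - f(a) - \nabla f(a)\cdot(b-a) = \int_0^1 \big(\nabla f(a+t(b-a)) - \nabla f(a)\big)\cdot(b-a)\,\d t.
\]
Bounding the integrand by Cauchy--Schwarz together with the Lipschitz estimate $\|\nabla f(a+t(b-a)) - \nabla f(a)\| \le L\,t\,\|b-a\|$ yields the symmetric bound $|f(b)-f(a)-\nabla f(a)\cdot(b-a)| \le \frac L2\|b-a\|^2$, valid for every $L$-smooth $f$ regardless of convexity.

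Next I would apply the \emph{lower} half of this bound with $a=z$ and $b=x$, using $x-z=-\alpha v$ and $\|x-z\|^2 = \alpha^2\|v\|^2$:
\[
f(x) \ge f(z) + \nabla f(z)\cdot(x-z) - \frac L2\|x-z\|^2 = f(z) - \alpha\,\nabla f(z)\cdot v - \frac{L\alpha^2}2\|v\|^2.
\]
Rearranging for $f(z)=f(x+\alpha v)$ produces exactly the claimed inequality. Alternatively, one can argue directly without passing through the symmetric bound: with $\phi(t):=f(x+t\alpha v)$ one has $f(x+\alpha v)-f(x)=\int_0^1 \alpha\,\nabla f(x+t\alpha v)\cdot v\,\d t$, and subtracting the terminal value $\alpha\,\nabla f(x+\alpha v)\cdot v$ under the integral sign leaves $\int_0^1 \alpha\big(\nabla f(x+t\alpha v)-\nabla f(x+\alpha v)\big)\cdot v\,\d t$, which the Lipschitz bound controls by $\int_0^1 L\alpha^2(1-t)\|v\|^2\,\d t = \frac{L\alpha^2}2\|v\|^2$.

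I do not anticipate a genuine obstacle here; the only point requiring care is the direction in which the smoothness inequality is applied. The naive instinct is to substitute $y=x+\alpha v$ into \eqref{L-smoothness-equation} directly, but that produces $\nabla f(x)$, not $\nabla f(x+\alpha v)$. The remedy is simply to interchange the two endpoints (equivalently, to retain the weight $(1-t)$ rather than $t$ in the integral form), after which the constant $\frac{L\alpha^2}{2}$ matches the statement exactly and the argument requires no convexity, consistent with its use only in the non-convex case.
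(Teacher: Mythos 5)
Your proposal is correct and takes essentially the same approach as the paper: the paper's proof also writes $f(x) = f(x+\alpha v) - \int_0^\alpha \nabla f(x+tv)\cdot v\,\dt$, adds and subtracts the terminal gradient $\nabla f(x+\alpha v)$ under the integral, and bounds the remainder by $\int_0^\alpha L(\alpha-t)\|v\|^2\,\dt = \frac{L\alpha^2}{2}\|v\|^2$ via Lipschitz continuity, which is precisely your direct argument up to the reparameterization $t \mapsto \alpha t$. Your first route through the two-sided descent bound with swapped endpoints is the same idea packaged as a standalone inequality, so no further comment is needed.
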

	Note that if $f$ is convex, this follows immediately from \eqref{L-smoothness-equation} and the convexity condition $(\nabla f(y) - \nabla f(x))\cdot (y-x) \geq 0$.
	\begin{proof}
		The proof is essentially identical to the standard decay estimate. We compute
		\begin{align*}
			f(x) &= f(x+\alpha v) - \int_0^\alpha \frac{d}{dt} f(x+tv)\dt\\
			&= f(x+\alpha v) - \int_0^\alpha \big[\nabla f(x+\alpha v) +  \big\{\nabla f(x+tv)- \nabla f(x+\alpha v)\big\}\big] \cdot v\,\dt\\
			&\geq f(x+\alpha v) - \nabla f(x+\alpha v)\cdot v - \int_0^\alpha L\,(\alpha-t)\|v\|^2\,\dt\\
			&= f(x+\alpha v) - \alpha \, \nabla f(x+\alpha v)\cdot v - \frac{L\alpha^2}2 \|v\|^2.\qedhere
		\end{align*}
	\end{proof}

    \subsection{Stochastic processes, conditional expectations, and a decrease property for SGD}\label{appendix probability background}
	Now, we turn towards a very brief review of the stochastic process theory used in the analysis of gradient descent type algorithms. Recall that $(\Omega, \mathcal A, \P)$ is a probablity space from which we draw elements $\omega_n$ for gradient estimates $g(x_n', \omega_n)$ (AGNES) or $g(x_n,\omega_n)$ (SGD). We consider $x_0$ as a random variable on $\R^m$ with law $\mathbb Q$. Let us introduce the probability space $(\widehat \Omega, \widehat{\mathcal A}, \widehat \P)$ where
    \begin{enumerate}
        \item $\widehat\Omega = \R^d\times \prod_{n\in\N}\Omega$,
        \item $\widehat{\mathcal A}$ is the cylindrical/product $\sigma$-algebra on $\widehat \Omega$, and
        \item $\widehat \P = \mathbb Q \times \bigotimes \P$.
    \end{enumerate}
    The product $\sigma$-algebra and product measure are objects suited to events which are defined using only {\em finitely many} variables in the product space. A more detailed introduction can be found in \cite[Example 1.63]{klenke2013probability}. We furthermore define the filtration $\{\F_n\}_{n\in\N}$ where $\F_n$ is the $\sigma$-algebra generated by sets of the form
        \[
        B\times\prod_{i=1}^nA_i \times \prod_{i\in\N} \Omega, \qquad B\subseteq \R^m\text{ Borel},\quad A_i \in \mathcal A.
        \]
    In particular, $\bigcup_{n\in\N}\F_n \subseteq \sigma\left(\bigcup_{n\in\N} \F_n\right) =  \widehat{\mathcal{A}}$ and, examining the time-stepping scheme, it is immediately apparent that $x_n, x_n', v_n$ are $\F_n$-measurable random variables on $\widehat\Omega$. In particular, they are $\mathcal A$-measurable. {Alternatively, we can consider $\F_n$ as the $\sigma$-algebra generated by the random variables $x_1,x'_1,\dots,x_n,x'_n$, i.e.\ all the information that is known after intialization and taking $n$ gradient steps.} All probabilities in the main article are with respect to $\widehat\P$.
    
	Recall that conditional expectations are a technical tool to capture the stochasticity in a random variable $X$ which can be predicted from another random quantity $Y$. This allows us to quantify the randomness in the gradient estimators $g'_n$ which comes from the fact that $x_n$ is a random variable (not known ahead of time) and which randomness comes from the fact that on top of the inherent randomness due to e.g.\ initialization, we do not compute exact gradients. In particular, even at run time when $x_n$ is known, there is additional noise in the estimators $g'_n$ in our setting due to the selection of $\omega_{n}$.
		
	In the next Lemma, we recall two important properties of conditional expectations.
	
	\begin{lemma}\cite[Theorem 8.14]{klenke2013probability} \label{conditional expectaion properties}
		Let $g$ and $h$ be $\mathcal A$-measurable random variables on a probability space $(\Omega,\mathcal A, \P)$ and $\F\subseteq \mathcal A$ be a $\sigma-$algebra. Then the conditional expectations $\E[g\mid\F]$ and $\E[h\mid\F]$ satisfy the following properties:
		\begin{enumerate}
			\item (linearity) $\E[\alpha g + \beta h \mid \F] = \alpha \,\E[g] + \beta\, \E[h\mid\F]$ for all $\alpha,\beta\in \R$ 
			\item (tower identity) $\E[\E[g\mid\F]] = \E[g]$ 
			\item If $g$ is $\F-$measurable then $\E[gh\mid\F] = g\,\E[h\mid\F]$. In particular, $\E[g\mid\F] = g$
		\end{enumerate}
	\end{lemma}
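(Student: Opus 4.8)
The plan is to work directly from the defining property of conditional expectation: for an integrable $\mathcal A$-measurable random variable $g$, the conditional expectation $\E[g\mid\F]$ is the a.s.\ unique $\F$-measurable integrable random variable $Z$ satisfying $\E[Z\,\mathbf 1_A] = \E[g\,\mathbf 1_A]$ for every $A\in\F$. Existence and a.s.\ uniqueness of such a $Z$ is the content of the Radon--Nikodym theorem, which I would invoke as a black box; all three claimed properties then reduce to checking that a proposed candidate is $\F$-measurable and matches $g$ (or $gh$) in integral against every $\mathbf 1_A$, after which uniqueness identifies it with the asserted conditional expectation.

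For linearity I would set $Z_g = \E[g\mid\F]$ and $Z_h = \E[h\mid\F]$ and observe that $\alpha Z_g + \beta Z_h$ is $\F$-measurable, while for every $A\in\F$ the linearity of the unconditional integral gives $\E[(\alpha Z_g+\beta Z_h)\mathbf 1_A] = \alpha\E[g\,\mathbf 1_A] + \beta\E[h\,\mathbf 1_A] = \E[(\alpha g+\beta h)\mathbf 1_A]$. Uniqueness then identifies $\alpha Z_g + \beta Z_h$ with $\E[\alpha g + \beta h\mid\F]$ (the natural right-hand side; note that the first displayed property should read $\alpha\E[g\mid\F]+\beta\E[h\mid\F]$). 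The tower identity is even more immediate, being the defining identity specialized to $A = \Omega$, which yields $\E[\E[g\mid\F]] = \E[g\,\mathbf 1_\Omega] = \E[g]$. The final special case $\E[g\mid\F]=g$ for $\F$-measurable $g$ holds because $g$ is itself $\F$-measurable and trivially satisfies the defining integral identity, so by uniqueness it is its own conditional expectation.

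The genuine work sits in the pull-out property $\E[gh\mid\F] = g\,\E[h\mid\F]$ for $\F$-measurable $g$, which I would prove by the standard measure-theoretic induction. First take $g = \mathbf 1_B$ with $B\in\F$: the candidate $\mathbf 1_B\,\E[h\mid\F]$ is $\F$-measurable, and for $A\in\F$ we have $A\cap B\in\F$, so $\E[\mathbf 1_B\,\E[h\mid\F]\,\mathbf 1_A] = \E[\E[h\mid\F]\,\mathbf 1_{A\cap B}] = \E[h\,\mathbf 1_{A\cap B}] = \E[\mathbf 1_B h\,\mathbf 1_A]$, which verifies the identity for indicators. Linearity extends it to $\F$-simple functions $g$, monotone convergence extends it to nonnegative $\F$-measurable $g$, and splitting a general integrable $g$ into its positive and negative parts completes the argument.

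The main obstacle I anticipate is bookkeeping rather than conceptual: ensuring integrability of the products $gh$ at each stage of the approximation so that the monotone (or dominated) convergence step is legitimate, and keeping the a.s.\ qualifiers consistent since conditional expectations are defined only up to null sets. Because this is a textbook fact (Klenke, Theorem 8.14), in the paper itself I would simply cite it; the sketch above indicates the route one would take to reconstruct it from the Radon--Nikodym definition.
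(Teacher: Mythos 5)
Your proposal is correct: the paper offers no proof of this lemma at all, citing it directly as \cite[Theorem 8.14]{klenke2013probability}, and your reconstruction from the Radon--Nikodym definition --- uniqueness of the $\F$-measurable representative, the defining identity with $A=\Omega$ for the tower property, and the indicator/simple/monotone-convergence induction for the pull-out property --- is precisely the standard argument the cited reference uses. You also correctly flag the typo in the stated linearity property, where $\alpha\,\E[g]$ should read $\alpha\,\E[g\mid\F]$.
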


    For a more thorough introduction to filtrations and conditional expectations, see e.g.\ \cite[Chapter 8]{klenke2013probability}.
    $\E[g'_n|\F_n]$ is the mean of $g'_n$ if all previous steps are already known. 
 
	\begin{lemma}\label{sufficient decrease}
		Suppose $g'_n, x_n,$ and $x'_n$ satisfy the assumptions laid out in Section \ref{section assumptions}, then the following statements hold
		\begin{enumerate}
            \item $\E\big[ g'_n |\F_n\big] = \nabla f(x_n')$
            \item $\E\big[ \|g'_n - \nabla f(x_n')\|^2\big] \leq \sigma^2\,\E\big[\|\nabla f(x_n')\|^2\big]$.
			\item $\E[\norm{g'_n}^2] = (1+\sigma^2)\E[\norm\df^2]$
			\item $\E[\df\cdot g'_n] = \E[\norm\df^2]$
		\end{enumerate}
	\end{lemma}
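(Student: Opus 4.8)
The plan is to prove each of the four statements in Lemma~\ref{sufficient decrease} directly from the assumptions in Section~\ref{section assumptions}, using the tower identity and the measurability of the iterates with respect to the filtration $\{\F_n\}$. The central observation, which I would establish first, is that $x_n'$ is $\F_n$-measurable while the gradient estimate $g_n = g(x_n',\omega_n)$ depends on the fresh sample $\omega_n$, which is independent of $\F_n$. This is the key structural fact that lets us treat $x_n'$ as ``known'' when conditioning and apply the unbiasedness and multiplicative noise bounds pointwise in $x$.

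For part (1), I would condition on $\F_n$ and use that, given $x_n'$, the randomness in $g_n$ comes only from $\omega_n$; since $\E_\omega[g(x,\omega)] = \nabla f(x)$ holds for every fixed $x$, we obtain $\E[g_n\mid\F_n] = \nabla f(x_n')$. Part (2) follows the same way: conditionally on $\F_n$, the multiplicative noise bound $\E_\omega[\|g(x,\omega)-\nabla f(x)\|^2]\leq \sigma^2\|\nabla f(x)\|^2$ gives $\E[\|g_n - \nabla f(x_n')\|^2\mid\F_n]\leq \sigma^2\|\nabla f(x_n')\|^2$, and then I would take the tower identity (property~2 of Lemma~\ref{conditional expectaion properties}) to pass to the unconditional expectation, noting that $\|\nabla f(x_n')\|^2$ is $\F_n$-measurable so its conditional expectation is itself.

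For part (3), I would write $\|g_n\|^2 = \|g_n - \nabla f(x_n') + \nabla f(x_n')\|^2$ and expand, so that $\|g_n\|^2 = \|g_n-\nabla f(x_n')\|^2 + 2(g_n-\nabla f(x_n'))\cdot\nabla f(x_n') + \|\nabla f(x_n')\|^2$. Conditioning on $\F_n$, the cross term vanishes by part (1) since $\nabla f(x_n')$ is $\F_n$-measurable (pull it out using property~3) and $\E[g_n-\nabla f(x_n')\mid\F_n] = 0$. Combining with part (2) in its conditional form yields $\E[\|g_n\|^2\mid\F_n] = (1+\sigma^2)\|\nabla f(x_n')\|^2$, and the tower identity gives the stated equality in unconditional expectation. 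Part (4) is the simplest: since $\nabla f(x_n')$ is $\F_n$-measurable, property~3 gives $\E[\nabla f(x_n')\cdot g_n\mid\F_n] = \nabla f(x_n')\cdot\E[g_n\mid\F_n] = \|\nabla f(x_n')\|^2$ by part (1), and again the tower identity finishes it.

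I do not expect a genuine obstacle here, as all four parts reduce to conditioning, applying the pointwise assumptions, and invoking the tower identity; the statements are essentially bookkeeping built on the independence of $\omega_n$ from $\F_n$. The one subtlety worth stating carefully is the justification that $\E[g(x_n',\omega_n)\mid\F_n]$ can be computed by freezing $x_n'$ and integrating only over $\omega_n$ — this uses the independence of $\omega_n$ from $\F_n$ together with the measurable-function structure of $g$, and is the point where I would refer to the probabilistic setup of Appendix~\ref{appendix auxiliary} to make the interchange rigorous.
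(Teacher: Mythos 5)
Your proposal is correct and follows essentially the same route as the paper: the paper disposes of parts (1) and (2) by citing Fubini's theorem (which is precisely the ``freeze $x_n'$, integrate over the independent $\omega_n$'' argument you spell out), proves part (3) by the same expansion of $\|g_n\|^2$ into $\|g_n-\nabla f(x_n')\|^2$, a cross term, and $\|\nabla f(x_n')\|^2$ combined with the tower identity, and proves part (4) by pulling the $\F_n$-measurable factor $\nabla f(x_n')$ out of the conditional expectation. The only (shared) caveat is that part (3) is really an inequality $\leq$ under the stated noise assumption, a slight imprecision already present in the paper's statement and proof, not a defect introduced by your argument.
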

	
	\begin{proof}
    {\bf First and second claim.} This follows from Fubini's theorem.
 
    {\bf Third claim.} The third result then follows by an application of the tower identity with $\F_n$, expanding the square of the norm as a dot product, and then using the linearity of conditional expectation:
		\begin{align*} 
			\E \left[\norm{g'_n}^2\right] &= \E \left[ \E\left[\norm{g'_n}^2\mid \F_n \right]\right]\\
			& = \E\left[\E\left[ \norm{g'_n - \df}^2 + 2 g \cdot \df - \norm\df^2 \mid \F_n \right]\right]\\
			& = \E\left[ \E \left[\norm{g'_n - \df}^2 \mid \F_n \right] + 2 \E \left[ g \cdot \df \mid \F_n \right] - \E \left[ \norm{\df}^2 \mid \F_n\right] \right]\\
			& \leq \E \left[ \sigma^2 \df + 2 \norm{\df}^2 - \norm{\df}^2  \right]\\
			& = (1+\sigma^2)\E\left[ \norm{\df}^2  \right].
		\end{align*} 
	{\bf Fourth claim.} For the fourth result, we observe that since $f$ is a deterministic function and $x'_n$ is $\F_n$-measurable, $\df$ is also measurable with respect to the $\sigma$-algebra. Then using the tower identity followed by the third property in Lemma \ref{conditional expectaion properties},
		\begin{align*}
			\E \left[ \df \cdot g'_n \right] & = \E\left[\E\left[\df \cdot g'_n \mid \F_n\right]\right]\\
			& = \E \left[ \df\cdot \E\left[g'_n \mid \F_n\right]\right]\\
			& = \E\left[\df\cdot \df\right]\\
			& = \E\left[\norm{\df}^2\right].
		\end{align*}
	\end{proof}
	
	As a consequence, we note the following decrease estimate.
	
	\begin{lemma}\label{lemma gd decrease}
		Suppose that $f, x'_n$, and $g'_n=g(x'_n, \omega_n)$ satisfy the conditions laid out in Section \ref{section assumptions}, then
		\[
		\E\big[ f(x'_n-\eta g'_n)\big] \leq \E\big[ f(x'_n)\big] - \eta\left(1-\frac{L(1+\sigma^2)\eta}2\right)\,\E\left[\norm\df^2\right].
		\]
	\end{lemma}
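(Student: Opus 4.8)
The plan is to derive this purely from the $L$-smoothness upper bound \eqref{L-smoothness-equation} applied at the point $x_n'$, followed by taking expectations and substituting the two moment identities from Lemma \ref{sufficient decrease}. Concretely, I would first apply \eqref{L-smoothness-equation} with $x = x_n'$ and $y = x_n' - \eta g_n$, so that $y - x = -\eta g_n$, yielding the pointwise inequality
\[
f(x_n' - \eta g_n) \leq f(x_n') - \eta\,\df\cdot g_n + \frac{L\eta^2}{2}\,\norm{g_n}^2.
\]
This holds $\widehat\P$-almost surely and requires no randomness assumptions beyond smoothness of $f$; it is the only place the $L$-smoothness inequality enters.

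Next I would take the expectation $\E[\cdot]$ of both sides, using linearity. The two stochastic terms on the right are then handled directly by Lemma \ref{sufficient decrease}: the cross term satisfies $\E[\df\cdot g_n] = \E[\norm{\df}^2]$ (item 4), and the second moment satisfies $\E[\norm{g_n}^2] = (1+\sigma^2)\,\E[\norm{\df}^2]$ (item 3). Substituting these gives
\[
\E[f(x_n' - \eta g_n)] \leq \E[f(x_n')] - \eta\,\E[\norm{\df}^2] + \frac{L\eta^2(1+\sigma^2)}{2}\,\E[\norm{\df}^2].
\]
Factoring out $\eta\,\E[\norm{\df}^2]$ from the last two terms collapses this to the claimed bound
\[
\E[f(x_n' - \eta g_n)] \leq \E[f(x_n')] - \eta\left(1 - \frac{L(1+\sigma^2)\eta}{2}\right)\E[\norm{\df}^2],
\]
which completes the argument.

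There is no genuine obstacle here; the content is entirely absorbed into Lemma \ref{sufficient decrease}, which in turn relies on the tower identity and the measurability of $\df$ with respect to $\F_n$. The one point deserving care is that the multiplicative noise scaling enters only through the second-moment identity, producing the factor $(1+\sigma^2)$ that inflates the quadratic term; it is precisely this factor that forces the step size restriction $\eta < 2/(L(1+\sigma^2))$ needed to guarantee an actual decrease, and that motivates treating $L(1+\sigma^2)$ as the effective smoothness constant for the plain gradient step.
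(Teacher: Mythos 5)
Your proposal is correct and follows essentially the same route as the paper's own proof: apply the $L$-smoothness inequality \eqref{L-smoothness-equation} pointwise at $x_n'$ with displacement $-\eta g_n$, take expectations, and substitute items 3 and 4 of Lemma \ref{sufficient decrease} before factoring out $\eta\,\E\left[\norm{\df}^2\right]$. The only cosmetic difference is that you spell out the roles of the tower identity and $\F_n$-measurability, which the paper leaves implicit by citing the previous lemma.
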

	
	\begin{proof}
		Using $L$-smoothness of $f$,
		\begin{align*}
			f(x'_n - \eta g'_n) &\leq f(x'_n) - \eta g'_n\cdot\df + \frac{L\eta^2}{2}\norm{g'_n}^2.
		\end{align*} Then taking the expectation and using the results of the previous lemma,
		\begin{align*}
			\E\left[f(x'_n - \eta g'_n)\right] &\leq \E[f(x'_n)] - \eta \E\left[ \norm\df^2 \right] + \frac{L\eta^2}{2}(1+\sigma^2)\E\left[ \norm\df^2 \right] \\
			&\leq \E\big[ f(x'_n)\big] - \eta\left(1-\frac{L(1+\sigma^2)\eta}2\right)\,\E\left[\norm\df^2\right]
		\end{align*}
	\end{proof}
	
	In particular, if $\eta \leq \frac1{L(1+\sigma^2)}$, then
	\[
	\E\big[ f(x'_n-\eta g'_n)\big] \leq \E\big[f(x'_n)\big] - \frac\eta2 \,\E\left[\|\nabla f(x'_n)\|^2\right].
	\]
	
	\section{Convergence proofs: convex case}\label{appendix convex}
	\subsection{Gradient Descent (GD)}
	
	We first present a convergence result for stochastic gradient descent for convex functions with multiplicative noise scaling. To the best of our knowledge, convergence proofs for this type of noise which degenerates at the global minimum have been given by \cite{DBLP:journals/corr/abs-1811-02564, wojtowytsch2021stochasticdiscrete} under a Polyak-Lojasiewicz (or PL) condition (which holds automatically in the strongly convex case), but not for functions which are merely convex. We note that, much like AGNES, SGD achieves the same rate of convergence in stochastic convex optimization with multiplicative noise as in the deterministic case (albeit with a generally much larger constant). In particular, SGD with multiplicative noise is more similar to deterministic gradient descent than to SGD with additive noise in this way.
	
	Analyses of SGD with non-standard noise under various conditions are given by \cite{10.5555/3455716.3455953, 2019arXiv190704232S}.
	
	\begin{theorem}[GD, convex case]\label{theorem convex sgd}
		Assume that $f$ is a convex function and that the assumptions laid out in Section \ref{section assumptions} are satisfied. If the sequence $x_n$ is generated by the gradient descent scheme
		\[
		g_n= g(x_n,\omega_n), \qquad x_{n+1} = x_n - \eta g_n, \qquad \eta \leq \frac1{L(1+\sigma^2)},
		\]
		then for any $x^*\in \R^m$ and any $n_0\geq 1+\sigma^2$, 
		\[
		\E[f(x_n) - f(x^*)] \leq \frac{\eta n_0 \,\E[f(x_0) - f(x^*)] + \frac12\,\E\big[\|x_0-x^*\|^2\big]}{\eta (n+n_0)}.
		\]
		In particular, if $\eta = \frac1{L(1+\sigma^2)}$, $n_0 = 1+\sigma^2$, and $x^*$ is a point such that $f(x^*)=\inf_{x\in\R^m} f(x)$, then
		\[
		\E[f(x_n) - f(x^*)] \leq \, \frac{L(1+\sigma^2) \,\E\big[\|x_0-x^*\|^2\big]}{2( n+1+\sigma^2)}.
		\]
	\end{theorem}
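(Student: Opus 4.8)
The plan is to run a weighted Lyapunov (potential) argument for the sequence
\[
E_n := \tfrac12\, D_n + \eta(n+n_0)\,\delta_n,\qquad \delta_n:=\E[f(x_n)-f(x^*)],\quad D_n:=\E\big[\norm{x_n-x^*}^2\big],
\]
and to show $E_{n+1}\le E_n$, so that telescoping yields the bound. First I would expand $\norm{x_{n+1}-x^*}^2 = \norm{x_n-x^*}^2 - 2\eta\,g_n\cdot(x_n-x^*) + \eta^2\norm{g_n}^2$, take expectations, and simplify using the analogues of Lemma \ref{sufficient decrease} at the point $x_n$: the tower property gives $\E[g_n\cdot(x_n-x^*)] = \E[\nabla f(x_n)\cdot(x_n-x^*)]$ since $x_n$ is $\F_n$-measurable, and $\E[\norm{g_n}^2] = (1+\sigma^2)\,\E[\norm{\nabla f(x_n)}^2]$. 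Convexity then supplies $\nabla f(x_n)\cdot(x_n-x^*)\ge f(x_n)-f(x^*)$, so that
\[
D_{n+1} \leq D_n - 2\eta\,\delta_n + \eta^2(1+\sigma^2)\,\E\big[\norm{\nabla f(x_n)}^2\big].
\]

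The main obstacle is the final term: the multiplicative noise inflates $\E[\norm{g_n}^2]$ by the factor $(1+\sigma^2)$, and any crude estimate of $\E[\norm{\nabla f(x_n)}^2]$ would destroy the $O(1/n)$ rate. The key step is to absorb this term into a telescoping decrease of the objective. Applying the descent estimate (Lemma \ref{lemma gd decrease}, at $x_n$) under the step-size restriction $\eta\le \tfrac1{L(1+\sigma^2)}$ gives $\tfrac\eta2\,\E[\norm{\nabla f(x_n)}^2]\le \delta_n-\delta_{n+1}$; in particular $\delta_{n+1}\le\delta_n$, so the objective is nonincreasing in expectation. Substituting $\E[\norm{\nabla f(x_n)}^2]\le \tfrac2\eta(\delta_n-\delta_{n+1})$ converts the quadratic term into $2\eta(1+\sigma^2)(\delta_n-\delta_{n+1})$, which rearranges to
\[
\tfrac12 D_{n+1} + \eta(1+\sigma^2)\,\delta_{n+1} \;\leq\; \tfrac12 D_n + \eta(1+\sigma^2)\,\delta_n - \eta\,\delta_n.
\]

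With these pieces in place I would verify $E_{n+1}\le E_n$ by comparing the above with the definition of $E_n$: the coefficient of $\delta_{n+1}$ in $E_{n+1}$ is $\eta(n+1+n_0)\ge\eta(1+\sigma^2)$, which is nonnegative precisely because $n_0\ge 1+\sigma^2$, so the monotonicity $\delta_{n+1}\le\delta_n$ may be invoked to close the one-step estimate; the weight increment $\eta(n+1+n_0)-\eta(n+n_0)=\eta$ exactly matches the surplus $-\eta\,\delta_n$. Telescoping from $0$ to $n$ then gives $\eta(n+n_0)\,\delta_n \le E_n \le E_0 = \tfrac12 D_0 + \eta n_0\,\delta_0$, which is the claimed general bound after dividing through. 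Finally, for the ``in particular'' statement I would set $\eta=\tfrac1{L(1+\sigma^2)}$ and $n_0=1+\sigma^2$, so that $\eta n_0 = 1/L$, and use Lemma \ref{lemma mu-L bound} in the form $\delta_0\le\tfrac L2 D_0$ to control the numerator $\tfrac12 D_0 + \tfrac1L\,\delta_0$ purely in terms of $D_0$, yielding the stated rate. The only delicate bookkeeping is tracking the sign of the $\delta_{n+1}$ coefficient so that the monotonicity substitution is legitimate; everything else is routine algebra.
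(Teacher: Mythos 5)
Your proof is correct and takes essentially the same route as the paper's: you use the identical Lyapunov sequence $\L_n = \eta(n+n_0)\,\E[f(x_n)-f(x^*)] + \frac12\,\E[\|x_n-x^*\|^2]$ with the same ingredients (the descent estimate of Lemma \ref{lemma gd decrease}, convexity, unbiasedness, and the multiplicative noise bound), and your two-inequality bookkeeping --- bounding $\E[\|\nabla f(x_n)\|^2]$ by $\frac2\eta(\delta_n-\delta_{n+1})$ and invoking $\delta_{n+1}\le\delta_n$ --- is just a reorganization of the paper's single-pass computation, in which the descent lemma is applied at full weight $\eta(n+n_0+1)$ and the condition $n_0\ge 1+\sigma^2$ makes the coefficient of $\E[\|\nabla f(x_n)\|^2]$ nonpositive. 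One caveat, inherited from the paper rather than introduced by you: the concluding substitution (yours and the paper's alike) yields numerator $\E[\|x_0-x^*\|^2]$ and hence the bound $L(1+\sigma^2)\,\E[\|x_0-x^*\|^2]/(n+1+\sigma^2)$, which is a factor of $2$ weaker than the ``in particular'' display in the theorem statement.
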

	
	\begin{proof}
		Let $n_0\geq 0$ and consider the Lyapunov sequence
		\[
		\L_n =  \E\left[ \eta (n+n_0)\big( f(x_n) - \inf f \big) + \frac12\,\|x_n-x^*\|^2\right]
		\]
		We find that
		\begin{align*}
			\L_{n+1} &= \E\left[ \eta(n+n_0+1) \big\{f(x_n - \eta g_n) - \inf f\big\} + \frac12\,\|x_n - \eta g_n-x^*\|^2\right]\\
			&\leq \E\bigg[ \eta (n+n_0+1)\left\{f(x_n) - \frac \eta2 \|\nabla f(x_n)\|^2 - \inf f\right\}\\
            &\qquad+ \frac12\,\|x_n -x^*\|^2 - \eta\,(x_n-x^*)\cdot g_n + \frac{\eta^2}2\,\|g_n\|^2\bigg]\\
			&= \E\bigg[\eta (n+n_0)\big\{f(x_n) -\inf f\big\} + \frac12\|x_n-x^*\|^2 + f(x_n) - \inf f +\eta \,\nabla f(x_n) \cdot (x^*-x_n) \\
			&\qquad- \frac{\eta^2(n+n_0)}2\|\nabla f(x_n)\|^2 + \frac{\eta^2}2 \,\|g_n\|^2\bigg]\\
			&\leq \L_n + 0 - \frac{\eta^2}2 \big(n+n_0 - (1+\sigma^2)\big) \,\E\big[\|\nabla f(x_n)\|^2\big]
		\end{align*}
		by the convexity of $f$. The result therefore holds if $n_0$ is chosen large since
		\[
		\E[f(x_n) - f(x^*)] \leq \frac{\L_n}{\eta (n+n_0)} \leq \frac{\L_0}{\eta (n+n_0)} = \frac{\eta n_0 \,\E[f(x_0) - f(x^*)] + \frac12\,\E\big[\|x_0-x^*\|^2\big]}{\eta (n+n_0)}.
		\]
		{If $x^*$ is a minimizer of $f$ then the last claim in the theorem follows by using the upper bound $f(x_0) - f(x^*) \leq \frac L2 \|x_0-x^*\|^2 $ from Lemma \ref{lemma mu-L bound} and substituting $\eta = \frac{1}{L(1+\sigma)^2}, n_0 = 1+\sigma^2$.}
	\end{proof}

    \subsection{AGNES and NAG}
     The proofs of Theorems \ref{theorem nesterov convex} and \ref{theorem convex} in this section are constructed in analogy to the simplest setting of deterministic continuous-time optimization. As noted by \cite{su2014differential}, Nesterov's time-stepping scheme can be seen as a non-standard time discretization of the heavy ball ODE
     \[
     \begin{pde}\ddot x &= - \frac3t\,\dot x - \nabla f(x)&t>0\\\dot x&=0&t=0\\ x&= x_0 &t=0\end{pde}
     \]
     with a decaying friction coefficient.
     The same is true for AGNES, which reduces to Nesterov's method in the determinstic case.
     Taking the derivative and exploiting the first-order convexity condition, we see that the {\em Lyapunov function}
     \begin{equation}\label{eq continuous time lyapunov convex}
     \L(t) := t^2\big(f(x(t)) - f(x^*)\big) + \frac12\,\big\|t\dot x + 2\big(x(t) - x^*\big)\big\|^2
     \end{equation}
     is decreasing in time along the heavy ball ODE, see e.g.\ \cite[Theorem 3]{su2014differential}. Here $x^*$ is a minimizer of the convex function $f$. In particular
     \[
     f(x(t)) - f(x^*) \leq \frac{\L(t)}{t^2} \leq \frac{\L(0)}{t^2} = \frac{2\,\|x_0-x^*\|^2}{t^2}.
     \]
     To prove Theorems \ref{theorem nesterov convex} and \ref{theorem convex}, we construct an analogue to $\L$ in \eqref{eq continuous time lyapunov convex}. Note that $\alpha v_n = x_n' - x_n$ is a discrete analogue of the velocity $\dot x$ in the continuous setting. Both the proofs follow the same outline. Since Nesterov's algorithm is a special case of AGNES, we first prove Theorem \ref{theorem convex}.
	We present the Lyapunov sequence in a fairly general form, which allows us to reuse calculations for both proofs and suggests the optimality of our approach for Nesterov's original algorithm. 
	
For details on the probalistic set-up and useful properties of gradient estimators, see Appendix \ref{appendix probability background}. Let us recall the two-step formulation of AGNES, which we use for the proof,
\begin{equation}\tag{\ref{eq agnesterov}}
	x_0 = x'_0, \qquad x_{n+1} = x'_n - \eta g'_n, \qquad x'_{n+1} = x_{n+1} + \rho_n \big(x'_n -\alpha g'_n -x_n\big).
\end{equation} 
We first prove the alternative version mentioned after Theorem \ref{theorem convex} in the main text. Both proofs proceed initially identically and only diverge in Step 3. The reader interested mainly in Theorem \ref{theorem convex} is invited to read the first two steps of the proof of Theorem \ref{theorem convex appendix} and then skip ahead to the proof of Theorem \ref{theorem convex} below.

\begin{theorem}[AGNES, convex case, $n_0$ version]\label{theorem convex appendix}
	Suppose that $x_n$ and $x'_n$ are generated by the time-stepping scheme (\ref{eq agnes}), $f$ and $g'_n = g(x'_n, \omega_n)$ satisfy the conditions laid out in Section \ref{section assumptions}, $f$ is convex, and $x^*$ is a point such that $f(x^*) = \inf_{x\in\R^m} f(x)$. If the parameters are chosen such that
	 \[
	    \eta \leq \frac1{L(1+\sigma^2)}, \qquad \alpha < \frac{\eta}{1+\sigma^2}, \qquad n_0 \geq \frac{2\sigma^2\eta}{\eta - \alpha(1+\sigma^2)}, \qquad \rho_n = \frac{n+n_0}{n+n_0+3},
	 \]
	 then
	 \[
	 \E\big[ f(x_n) - f(x^*)\big] \leq \frac{(\alpha n_0+2\eta)n_0\,\E\big[ f(x_0) - \inf f\big] + 2\,\E\big[\,\|x_0-x^*\|^2\big]}{\alpha\,(n+n_0)^2}.
	 \]
	 In particular, if $\alpha \leq \frac{\eta}{1+2\sigma^2}$ then it suffices to choose $n_0 \geq 2\eta/\alpha \geq 2(1+2\sigma^2)$.
\end{theorem}

\begin{proof}
	{\bf Set-up.} Mimicking the continuous time model in \eqref{eq continuous time lyapunov convex}, we consider the Lyapunov sequence given by
	\[ \L_n = P(n) \,\E\left[f(x_n) - f(x^*)\right] + \frac12 \E \left[\norm{b(n)(x'_n - x_n) + a(n)(x'_n - x^*)}^2\right] \] 
	where $P(n)$ some function of $n$, $a(n) = a_0 + a_1 n$, and $b(n) = b_0 + b_1 n$ for some coefficients $a_0,a_1,b_0,b_1$. Our goal is to choose these in such a way that $\L_n$ is a decreasing sequence.

	    \textbf{Step 1.} If we denote the first half of the Lyapunov sequence as \(\L^1_n = P(n) \E\left[f(x_n) - f(x^*)\right] \), then
	\begin{align*}
		\L^1_{n+1} - \L^1_n &= P(n+1)\E[f(x_{n+1}) - f(x^*)] - P(n)\E[f(x_{n}) - f(x^*)] \\
		&\leq (P(n+1) + k)\E[f(x_{n+1}) - f(x^*)] - P(n)\,\E[f(x_{n}) - f(x^*)],
	\end{align*}
	where $k$ is a positive constant that can be chosen later to balance out other terms.
	Using Lemma \ref{sufficient decrease},
	\begin{align*}
		\L^1_{n+1} - \L^1_n
		&\leq  (P(n+1)+k)\E\left[f(x'_n) - c_{\eta,\sigma,L} \norm{\nabla f(x'_n)}^2 - f(x^*)\right] - P(n) \E[f(x_n) - f(x^*)]\\
		&= P(n) \E\left[f(x'_n) - f(x_n)\right] + \left(P(n+1)+k-P(n)\right)\E[f(x'_n)-f(x^*)] \\ & \quad- (P(n+1)+k)c_{\eta,\sigma,L} \E[\norm{\nabla f(x'_n)}^2]
	\end{align*} where $c_{\eta,\sigma,L} = \eta\left(1-\frac{L(1+\sigma^2)\eta}2\right)$. Using convexity,
	\begin{align}
		\L^1_{n+1} - \L^1_n &\leq P(n) \E\left[\nabla f(x'_n)\cdot (x'_n - x_n)\right] + \left(P(n+1)+k-P(n)\right)\E[\nabla f(x'_n)\cdot (x'_n - x^*)] \nonumber \\  
		& \quad- (P(n+1)+k)c_{\eta,\sigma,L} \E[\norm{\nabla f(x'_n)}^2]. \label{nest_first_half}
	\end{align}

	\textbf{Step 2.} We denote
	\[
	w_n = b(n)(x'_n - x_n) + a(n)(x'_n - x^*)
	\]
	and use the definition of \(x'_{n+1}\) from (\ref{eq agnesterov}),
	\begin{align*}
		w_{n+1} &=  b(n+1)(x'_{n+1} - x_{n+1}) + a(n+1)(x'_{n+1} - x^*) \\
		&=  b(n+1)\rho_n \left(x'_{n} - \alpha g'_n - x_n\right) + a(n+1)\left(x_{n+1} + \rho_n (x'_{n} - \alpha g'_n - x_n) - x^*\right) \\
		&= (b(n+1)+a(n+1))\rho_n \left(x'_{n} - \alpha g'_n - x_n\right) + a(n+1)(x_{n+1} - x^*).
	\end{align*}
	We will choose \[ \rho_n = \frac{b(n)}{b(n+1)+a(n+1)}, \] 
    such that the expression becomes
	\begin{align*}
		w_{n+1} &= b(n)\left(x'_{n} - \alpha g'_n - x_n\right) + a(n+1)(x_{n+1} - x^*) \\
		&= b(n)\left(x'_{n} - \alpha g'_n - x_n\right) + (a_0 + a_1n+a_1)(x'_{n} - \eta g'_n - x^*) \\
		&= w_n + a_1 (x'_n - x^*) - (\alpha b(n) + \eta a(n+1))g'_n.
	\end{align*}
	Then
	\begin{align*}
		\frac12 \norm{w_{n+1}}^2 - \frac12 \norm{w_n}^2 &= w_n\cdot (w_{n+1}-w_n) +  \frac12\norm{w_{n+1}-w_n}^2 \\
		&= w_n\cdot \left(a_1 (x'_n - x^*) - (\alpha b(n) + \eta a(n+1))g'_n\right) \\&\qquad + \frac 12 \norm{a_1 (x'_n - x^*) - (\alpha b(n) + \eta a(n+1))g'_n}^2.
	\end{align*}
	We want the terms in this expression to balance the terms in $\L^1_{n+1} - \L^1_n$, so we choose $a_1 = 0$, i.e.\ $a(n) = a_0$ is a constant. This implies,
	\begin{align}
		\E\bigg[\frac12 \norm{w_{n+1}}^2 - &\frac12 \norm{w_n}^2\bigg] = \E \left[- (\alpha b(n) + \eta a_0) w_n\cdot g'_n + \frac 12 (\alpha b(n) + \eta a_0)^2 \norm{g'_n}^2\right]  \nonumber\\
		&\leq - (\alpha b(n) + \eta a_0) \E[w_n\cdot \nabla f(x'_n)] + \frac 12 (\alpha b(n) + \eta a_0)^2(1+\sigma^2) \E[\norm{\nabla f(x'_n)}^2] \nonumber\\
		&= - (\alpha b(n) + \eta a_0)b(n) \E[(x'_n-x_n)\cdot \nabla f(x'_n)] \nonumber
		\\& \quad - (\alpha b(n) + \eta a_0)a_0 \E[(x'_n - x^*)\cdot \nabla f(x'_n)] \nonumber\\& \quad
		+ \frac 12 (\alpha b(n) + \eta a_0)^2(1+\sigma^2) \E[\norm{\nabla f(x'_n)}^2]. \label{nest_second_half}
	\end{align}

	\textbf{Step 3.} Combining the estimates (\ref{nest_first_half}) and (\ref{nest_second_half}) from the last two steps,
	\begin{align*}
		\L_{n+1} - \L_n &\leq \left(P(n) - (\alpha b(n) + \eta a_0)b(n)\right)\E\left[\nabla f(x'_n)\cdot (x'_n - x_n)\right] \\
		& \quad + \left(P(n+1)+k-P(n) - (\alpha b(n) + \eta a_0)a_0 \right)\E[\nabla f(x'_n)\cdot (x'_n - x^*)] \nonumber \\  
		& \quad + \left( \frac12 (\alpha b(n) + \eta a_0)^2(1+\sigma^2) - (P(n+1)+k)c_{\eta,\sigma,L}\right) \E[\norm{\nabla f(x'_n)}^2].
	\end{align*}
    Since $\norm{\nabla f(x'_n)}^2\geq 0$ and $\nabla f(x'_n)\cdot (x'_n - x^*) \geq f(x'_n) - f(x^*) \geq 0 $, we require the coefficients of these two terms to be non-positive and the coefficient of $\nabla f(x'_n)\cdot (x'_n - x_n)$ to be zero. That gives us the following system of inequalities,
	\begin{align}
		P(n) &= (\alpha b(n) + \eta a_0)b(n) \label{convex_coeff_1}\\
		P(n+1)+k-P(n) &\leq (\alpha b(n) + \eta a_0)a_0 \label{convex_coeff_2}\\
		\frac12 (\alpha b(n) + \eta a_0)^2(1+\sigma^2) &\leq (P(n+1)+k)\eta\left(1-\frac{L(1+\sigma^2)\eta}2\right). \label{convex_coeff_3}
	\end{align}

	\textbf{Step 4.} Now we can choose values that will satisfy the above system of inequalities. We substitute $a_0 = 2, b_1 = 1, b_0 = n_0$, and $k=2\eta-\alpha $.
	From (\ref{convex_coeff_1}), we get \(P(n) = \left(\alpha(n+n_0) + 2\eta\right)(n+n_0) \). Next, we observe that \[P(n+1) = P(n) + \alpha + 2\alpha(n+n_0) + 2 \eta.\] Then (\ref{convex_coeff_2}) holds because
	\begin{align*}
		P(n+1) + k - P(n) &= \alpha + 2\alpha(n+n_0) + 2 \eta + 2\eta - \alpha \\
		&= 2(\alpha(n+n_0) + 2 \eta) \\
		&= (\alpha b(n) + \eta a_0)a_0.
	\end{align*}
	We now choose $\eta$ to satisfy $\eta \leq \frac{1}{L(1+\sigma^2)}$, which ensures that $\frac \eta2 \leq \eta\left( 1 - \frac{L(1+\sigma^2)\eta}{2} \right)$. Consequently, for (\ref{convex_coeff_3}), it suffices to ensure that 
	\[(\alpha b(n) + \eta a_0)^2(1+\sigma^2) \leq (P(n+1)+k)\eta,\]
	which is equivalent to showing that the polynomial,
	\begin{align*}
		q(z) &= \left( \alpha z^2 + 2\eta z + \alpha + 2\alpha z + 2\eta + 2\eta - \alpha \right)\eta\\
		&\quad - \left(\alpha^2  z^2 + 4\eta^2 + 4\alpha\eta z\right)(1+\sigma^2),
	\end{align*}
	is non-negative for all $z\geq n_0$. $q(z)$ simplifies to
	\begin{align*}
		q(z) = \alpha(\eta - \alpha(1+\sigma^2))z^2 + 2\eta(\eta + \alpha - 2\alpha(1+\sigma^2))z - 4\eta^2\sigma^2.
	\end{align*}
    To guarantee that $q$ is non-negative for $z = n+n_0 \geq n_0$, we require that 
    \begin{enumerate}
        \item the leading order coefficient is strictly positive\footnote{\ In principle, it would suffice if the quadratic coefficient vanished and the linear coefficient were strictly positive. However, if $\eta - \alpha(1+\sigma^2) =0$, then the coefficient of the linear term is negative since $\eta +\alpha - 2\alpha(1+\sigma^2) = - \alpha \sigma^2<0$. We therefore require the coefficient of the quadratic term to be positive.} and
        \item $n_0\geq0$, $q(n_0)\geq 0$.
    \end{enumerate}
    Since $q(0) < 0$ and $q$ is quadratic, this suffices to guarantee that $q$ is increasing on $[n_0,\infty)$. The first condition reduces to the fact that
    \[
    \eta - \alpha(1+\sigma^2) >0. 
    \]
    We can find the minimal admissible value of $n_0$ by the quadratic formula. We first consider the term {\em outside} the square root:
    \begin{align*}
        -&\frac{2\eta(\eta + \alpha - 2\alpha(1+\sigma^2))}{2\alpha(\eta - \alpha(1+\sigma^2))} 
        = -\frac\eta\alpha \left(1 - \frac{\alpha\,\sigma^2}{\eta-\alpha(1+\sigma^2)}\right)
    \end{align*}
    and thus
    \begin{align*}
    &n_0 \geq -\frac\eta\alpha \left(1 - \frac{\alpha\,\sigma^2}{\eta-\alpha(1+\sigma^2)}\right) + \sqrt{\frac{\eta^2}{\alpha^2} \left(1 - \frac{\alpha\,\sigma^2}{\eta-\alpha(1+\sigma^2)}\right)^2 + \frac{4\eta^2\sigma^2}{\alpha(\eta-\alpha(1+\sigma^2))}}\\
    &= \frac\eta\alpha\left\{\sqrt{1 - 2\frac{\alpha\sigma^2}{\eta-\alpha(1+\sigma^2)} + \left(\frac{\alpha\sigma^2}{\eta-\alpha(1+\sigma^2)}\right)^2 + 4\,\frac{\alpha\sigma^2}{\eta-\alpha(1+\sigma^2)}} -\left(1 - \frac{\alpha\,\sigma^2}{\eta-\alpha(1+\sigma^2)}\right) \right\}\\
    &= \frac\eta\alpha\left\{\sqrt{\left(1 + \frac{\alpha\,\sigma^2}{\eta-\alpha(1+\sigma^2)}\right)^2 }
    -\left(1 - \frac{\alpha\,\sigma^2}{\eta-\alpha(1+\sigma^2)}\right) \right\}\\
    &= \frac\eta\alpha \,\frac{2\alpha\sigma^2}{\eta-\alpha(1+\sigma^2)}\\
    &= \frac{2\eta\,\sigma^2}{\eta-\alpha(1+\sigma^2)}.
    \end{align*}
    In particular, in the deterministic case $\sigma=0$, the choice $n_0=0$ is admissible. Notably, we require $n_0 \geq 2\sigma^2\,\frac\eta\eta = 2\sigma^2$. Furthermore, if \( \alpha \leq \frac{\eta}{1+2\sigma^2}, \)
    then
    \[ \frac{2\sigma^2\eta}{\eta - \alpha (1+\sigma^2)} \leq \frac{2\sigma^2 \eta}{\alpha \sigma^2} = \frac{2\eta}{\alpha}, \]
    so it suffices to choose $n_0 \geq 2\eta/\alpha$ in this case.

	\textbf{Step 5.} We have shown that the Lyapunov sequence,
	\[ \L_n = ((n+n_0)\alpha + 2\eta)(n+n_0) \E\left[f(x_n) - f(x^*)\right] + \frac12 \E \left[\norm{(n+n_0)(x'_n - x_n) + 2(x'_n - x^*)}^2\right], \] is monotone decreasing. It follows that
	\[
	\E\big[ f(x_n) - f(x^*)\big] \leq \frac{\L_n}{P(n)} \leq \frac{\L_0}{P(n)} \leq \frac{\E\big[ (n_0\alpha+2\eta)n_0\,\big( f(x_0) - f(x^*)\big) + 2\,\|x_0-x^*\|^2\big]}{\alpha\,(n+n_0)^2}.
	\]
	If $2\eta \leq \alpha n_0$, we get 
	\[
	\E\big[ f(x_n) - f(x^*)\big] \leq \frac{ 2\alpha n_0^2\,\E\big[ f(x_0) - \inf f\big] + 2\,\E\big[\,\|x_0-x^*\|^2\big]}{\alpha\,(n+n_0)^2}.
	\]
	Finally, if $\eta = \frac1{L(1+\sigma^2)},\; \alpha = \frac{1}{L(1+\sigma^2)(1+2\sigma^2)},$ and $n_0 = 2(1+2\sigma^2)$, then using Lemma \ref{lemma mu-L bound}, the expression above simplifies to
	\[ \E\big[ f(x_n) - f(x^*)\big] \leq \frac{2L(1+2\sigma^2)(3+5\sigma^2)\E\left[\norm{x_0 - x^*}^2\right]}{n^2}. \qedhere\] 
	\end{proof}

    \begin{remark}
        Note that SGD arises as a special case of this analysis if we consider $\alpha=0, n_0 \geq 2\sigma^2$ since $P(n)$ is a {\em linear} polynomial in this case.
    \end{remark}

    \begin{remark}\label{remark eventual decrease}
        Note that the proof of Theorem \ref{theorem convex appendix} implies more generally that
        \[
        \L_{n+1} \leq \L_n - q(n+n_0) \,E\big[\|\nabla f(x_n')\|^2\big],
        \]
        even if $n_0$ is not chosen such that $q(n+n_0)\geq 0$ for all $n$. However, $q(n+n_0)\geq 0$ for all {\em sufficiently large} $n\in\N$, i.e.\ $\L_n$ decreases eventually (assuming that $\L_n<\infty$ for all finite $n$) . More precisely, for given $\eta,\alpha$ if 
        \[
        n+ n_0 \geq n^* := \left\lceil\frac{\eta\sigma^2}{\eta - \alpha (1+\sigma^2)}\right\rceil, \qquad\text{then }\quad \L_{n} \leq \frac{\L_{n^*}}{\alpha\,(n+n_0)^2}.
        \]
        Thus a poor choice of $n_0$ will not prevent convergence, but it may delay it.
    \end{remark}

    We now prove the version of this result stated in the main text, for which $\rho_n = \frac{n}{n+a_0+1}$ with $a_0>2$, i.e.\ with slightly more friction.

\convex*
\begin{proof}
The proof for this version of Theorem \ref{theorem convex} is identical to the proof of Theorem \ref{theorem convex appendix} until Step 3, after which we take an alternate approach. Let us recall the expression we got in the beginning of Step 3.

    \textbf{Step 3.} We want to show that the bound on the right hand side of the inequality
	\begin{align}\label{eq convex lyapunov second version}
	\L_{n+1} - \L_n &\leq \left(P(n) - (\alpha b(n) + \eta a_0)b(n)\right)\E\left[\nabla f(x'_n)\cdot (x'_n - x_n)\right]\nonumber \\
		& \quad + \left(P(n+1)+k-P(n) - (\alpha b(n) + \eta a_0)a_0 \right)\E[\nabla f(x'_n)\cdot (x'_n - x^*)]  \\  \nonumber
		& \quad + \left( \frac12 (\alpha b(n) + \eta a_0)^2(1+\sigma^2) - (P(n+1)+k)c_{\eta,\sigma,L}\right) \E[\norm{\nabla f(x'_n)}^2] 
	\end{align}
    is non-positive.
    Using convexity and $L$-smoothness in the form of \cite[Lemma B.1]{wojtowytsch2021stochasticdiscrete}, we get the inequality
    \[
        \nabla f(x'_n)\cdot (x'_n - x^*) \geq  f(x'_n) - f(x^*) \geq  \frac{1}{2L}\norm{\df}^2,
    \]
    which allows us to combine the second and third line in \eqref{eq convex lyapunov second version}, assuming that the coefficient in the second line is non-positive. If this is the case, then the entire right hand side of \eqref{eq convex lyapunov second version} is bounded from above by
    \begin{align*}
	&\left(P(n) - (\alpha b(n) + \eta a_0)b(n)\right)\E\left[\nabla f(x'_n)\cdot (x'_n - x_n)\right]\\
    &\quad + \bigg\{\left\{P(n+1)+k-P(n) - (\alpha b(n) + \eta a_0)a_0 \right\}\frac{1}{2L}\\
    &\qquad+ \left( \frac12 \big(\alpha b(n) + \eta a_0\big)^2(1+\sigma^2) - (P(n+1)+k)c_{\eta,\sigma,L}\right) \bigg\}\E[\norm{\nabla f(x'_n)}^2].
	\end{align*}
    As $\E\left[\nabla f(x'_n)\cdot (x'_n - x_n)\right]$ does not have a sign, we choose to set its coefficient to zero, and we require both the coefficient in the second line of \eqref{eq convex lyapunov second version} and the coefficient of $\E[\norm{\df}^2]$ in the combined version to be non-positive.
    Noting that $c_{\eta,\sigma,L} \geq \eta/2$ if $\eta\leq 1/L(1+\sigma^2)$, this leads to the  system of inequalities
        \begin{align}
		P(n) &= (\alpha b(n) + \eta a_0)b(n) \label{convex_coeff_01}\\
		P(n+1)+k-P(n) &\leq (\alpha b(n) + \eta a_0)a_0 \label{convex_coeff_02}\\
		P(n+1)+k-P(n) - (\alpha b(n) + \eta a_0)a_0 &\leq L\big( (P(n+1)+k)\eta - (\alpha b(n) + \eta a_0)^2(1+\sigma^2) \big). \label{convex_coeff_03}
	\end{align}
    In principle, this approach is more general than that of Theorem \ref{theorem convex} as we do not require two terms to be individually non-positive, but only one of them and their weighted sum. In the proof of Theorem \ref{theorem convex}, a similar role was played by the parameter $k$, which allowed to shift a small positive term between expressions.
 
 \textbf{Step 4.} Now we can choose the parameters and variables so as to satisfy the inequalities above. We begin by setting $b_1 = 1, b_0 = 0, k=0$, and choosing $\alpha, \eta, a_0$ as in the theorem statement. Using \eqref{convex_coeff_01} as the definition of $P(n)$, we note that
 \begin{align*}
 %P(n) &= n(\alpha n + \eta a_0), \\
 P(n+1) - P(n) &= 2\alpha n + \alpha + \eta a_0.
 \end{align*}
 Thus, \eqref{convex_coeff_02} simplifies to
 \[ 2\alpha n + \alpha + \eta a_0 \leq a_0(\alpha n + \eta a_0), \]
 which holds since $a_0 \geq 2$ and $\eta \geq \alpha$.
The right hand side of \eqref{convex_coeff_03} simplifies to
 \begin{align*}
  L\Big( \eta(n+1)&(\alpha (n+1) + \eta a_0) - L\big( \alpha n + \eta a_0)^2(1+\sigma^2) \Big) \\
     & = L\left( \big\{\eta \alpha - (1+\sigma^2)\alpha^2 \big\}\,n^2 + \big\{\eta(2\alpha +\eta a_0) - 2\eta a_0\alpha(1+\sigma^2) \big\}\,n -\eta^2a_0^2(1+\sigma^2) + \eta(\alpha +\eta a_0)\right)\\
     & = L\left(\big\{\eta(2\alpha +\eta a_0) - 2\eta a_0\alpha(1+\sigma^2) \big\}\,n -\eta^2a_0^2(1+\sigma^2) + \eta(\alpha +\eta a_0)\right),
\end{align*}
where the last equality holds since $\alpha = \eta/(1+\sigma^2)$. Thus for \eqref{convex_coeff_03} to hold, it suffices that
\begin{align*}
    2\alpha n + \alpha + \eta a_0 - a_0(\alpha n+\eta a_0) \leq L\left(\big\{\eta(2\alpha +\eta a_0) - 2\eta a_0\alpha(1+\sigma^2) \big\}\,n -\eta^2a_0^2(1+\sigma^2) + \eta(\alpha +\eta a_0)\right),
\end{align*} 
which is equivalent to
\begin{align}
    \big\{\alpha(2-a_0) - L\eta (2\alpha +\eta a_0) + 2L\eta  a_0\alpha(1+\sigma^2) \big\}\; n  + \big\{ \alpha + \eta a_0 - a_0^2\eta +  L\eta^2 a_0^2(1+\sigma^2) - L\eta (\alpha +\eta a_0) \big\} \leq 0. \label{decrease_1}
\end{align}
A linear polynomial is non-negative for all $n\geq 0$ if and only if both of its coefficients are.
The leading order coefficient in \eqref{decrease_1} is
\begin{align*}
    \alpha(2-a_0) - L\eta (2\alpha +\eta a_0) + 2L\eta a_0\alpha(1+\sigma^2)
    &=  \alpha(2-a_0) - L\eta (2\alpha +\eta a_0) + 2L\eta^2 a_0 \\
    &=  2\alpha - 2L\eta\alpha + a_0(-\alpha + L \eta^2) \\
    &=  \frac{\eta}{1+\sigma^2}\big( 2(1-L\eta) + a_0(L \eta (1+\sigma^2) - 1) \big),
\end{align*}
which is non-positive if and only if $a_0 \geq \frac{2(1-L\eta)}{1-L\eta (1+\sigma^2)}$. We remark that it is this part of the computation that forces us to choose $\eta$ strictly smaller than $\frac{1}{L(1+\sigma^2)}$. In the deterministic case $\sigma=0$, we would encounter no such limitation as the term would be automatically zero for $\eta = 1/L$. 
Finally, we consider the constant term in \eqref{decrease_1} and use the fact that $1< a_0$ and $\alpha \leq \eta$
\begin{align*}
    \alpha + \eta a_0 - a_0^2\eta +  L\eta^2 a_0^2(1+\sigma^2) - L\eta (\alpha +\eta a_0) &= 
    (\alpha + \eta a_0)(1-L\eta) + a_0^2\eta(L\eta(1+\sigma^2)-1) \\
    &\leq 2\eta a_0 (1-L\eta) + a_0^2\eta(L\eta(1+\sigma^2)-1) \\
    &\leq \eta a_0 \left( 2(1-L\eta) + a_0(L\eta(1+\sigma^2)-1)  \right) \\
    &\leq 0,
\end{align*}
using again that $a_0 \geq 2\frac{1-L\eta}{1-L\eta(1+\sigma^2)}$.
This shows that $\L_{n+1} \leq \L_{n}$.

{\bf Step 5.} The conclusion again follows as in the proof of Theorem \ref{theorem convex appendix}.
\end{proof}

	In addition to convergence in expectation, we get almost sure convergence as well.
    \almostsurely*

    The same is of course true for Theorem \ref{theorem convex appendix}.
    
	\begin{proof}
	The conclusion follows by standard arguments from the fact that the sequence of expectations $\E[f(x_n) - \inf f]$ is summable: By the previous argument, the estimate 
	\[
	\E\big[ \big|f(x_n) - f(x^*)\big|\big] = \E\big[ f(x_n) - f(x^*)\big] \leq \frac{C}{n^2}
	\]
	holds for some $C>0$. Since
	\begin{align*}
		\P\left( \lim_{n\to\infty} f(x_n) \neq \inf f\right) 
		&= \P\left( \limsup_{n\to\infty} |f(x_n) - \inf f|>0\right)\\
		&= \P\left( \bigcup_{k=1}^\infty \left\{\limsup_{n\to\infty} |f(x_n) - \inf f|>\frac1k\right\}\right)\\
		&\leq \sum_{k=1}^\infty \P\left( \limsup_{n\to\infty} |f(x_n) - \inf f|>\frac1k\right),
	\end{align*}
	it suffices to show that $\P\left( \limsup_{n\to\infty} |f(x_n) - \inf f|>\eps\right) = 0$ for any $\eps>0$. We further note that for any $N\in \N$ we have
	\begin{align*}
		\P\left( \limsup_{n\to\infty} |f(x_n) - \inf f|>\eps\right)         &\leq \P\left( \exists\ n\geq N \text{ s.t. }|f(x_n) - \inf f|>\eps\right)\\
		&= \P\left(\bigcup_{n=N}^\infty \left\{ |f(x_n) - \inf f| >\eps\right\}\right)\\
		&\leq \sum_{n=N}^\infty \P\left(|f(x_n) - \inf f| >\eps\right)\\
		&\leq \sum_{n=N}^\infty \frac{\E\big[|f(x_n) - \inf f|\big]}\eps\\
		&\leq \frac{C}\eps \sum_{n=N}^\infty \frac1{n^2}
	\end{align*}
	by Markov's inequality. 
	As the series over $n^{-2}$ converges, the expression on the right can be made arbitrarily small by choosing $N$ sufficiently large. Thus the quantity on the left must be zero, which concludes the proof. In the strongly convex case, the series $\sum_{n=1}^\infty \left(1 - \sqrt{\frac\mu L} \frac1{1+\sigma^2}\right)^n$ converges and thus the same argument applies there as well.
\end{proof}

Next we turn to NAG. Let us recall the statement of Theorem \ref{theorem nesterov convex}.

\nesterovconv*

\begin{proof}

   We consider a Lyapunov sequence of the same form as before,
    \[ \L_n = P(n) \E\left[f(x_n) - f(x^*)\right] + \frac12 \E \left[\norm{b(n)(x'_n - x_n) + a(n)(x'_n - x^*)}^2\right] \] 
    where $P(n)$ is some function of $n$, $a(n) = a_0 + a_1 n$, and $b(n) = b_0 + b_1 n$.
    
	Since Nesterov's algorithm is a special case of AGNES, after substituting $\alpha=\eta$, the analysis in steps 1, 2, and 3 of the proof of Theorem \ref{theorem convex} remains valid. With that substitution, we get the following system of inequalities corresponding to step 3,
	\begin{align}
		P(n) &= \eta (b(n) + a_0)b(n) \label{nest_coeff_1}\\
		P(n+1)+k-P(n) &\leq \eta (b(n) + a_0)a_0 \label{nest_coeff_2}\\
		\frac{\eta^2}{2} (b(n) + a_0)^2(1+\sigma^2) &\leq (P(n+1)+k)\eta\left(1-\frac{L(1+\sigma^2)\eta}2\right). \label{nest_coeff_3}
	\end{align}
	Using the definition of $P(n)$ from (\ref{nest_coeff_1}), (\ref{nest_coeff_3}) is equivalent to
	\begin{align*}
		 (1+\sigma^2) &\leq \frac{2(b_1n + b_1 + b_0 + a_0 + k)(b_1n + b_0)\left(1-\frac{L(1+\sigma^2)\eta}2\right)}{(b_1n + b_0 + a_0)^2}
	\end{align*}
	which should still hold in limit as $n \rightarrow \infty$,
	\begin{align*}
		(1+\sigma^2) &\leq \lim_{n\rightarrow \infty} \frac{2(b_1n + b_1 + b_0 + a_0 + k)(b_1n + b_0)\left(1-\frac{L(1+\sigma^2)\eta}2\right)}{(b_1n + b_0 + a_0)^2}\\
		& = 2\left(1-\frac{L(1+\sigma^2)\eta}2\right).
	\end{align*}
	This implies
	\[ \eta \leq \frac{1-\sigma^2}{L(1+\sigma^2)}. \]
	We can choose $a_0 = 2$, $b(n) = n$, and $k=\eta$. Then (\ref{nest_coeff_1}) implies that $P(n) = \eta n(n+2)$. (\ref{nest_coeff_2}) holds because
	\begin{align*}
		P(n+1) + k - P(n) = \eta(2n + 4) = \eta (b(n) + a_0)a_0
	\end{align*} and (\ref{nest_coeff_3}) holds because
	\begin{align*}
		\frac{\eta}{2} (b(n) + a_0)^2(1+\sigma^2) &= \frac{\eta(n+2)^2(1+\sigma^2)}{2} \\
		&\leq \frac{\eta((n+1)(n+3) + 1)(1+\sigma^2)}{2} \\
		&= (P(n+1)+k)\left(1-\frac{L(1+\sigma^2)\eta}2\right).
	\end{align*}
	We have shown that the Lyapunov sequence
	\[ \L_n = \eta n(n+2)\E[f(x_n) - f(x^*) ] + \frac1{2}\E[\norm{n(x'_n - x_n) + 2(x'_n - x^*)}^2], \]
	where $\eta \leq \frac{1-\sigma^2}{L(1+\sigma^2)}$, is monotonically decreasing. It follows that 
	\[ \eta n(n+2)\E[f(x_n) - f(x^*) ] \leq \L_n \leq \L_0 = 2\E[\norm{x_0 - x^*}^2].\qedhere\]
\end{proof}

We emphasize again that this analysis works only if $\sigma < 1$. The condition that $\eta \leq \frac{1-\sigma^2}{L(1+\sigma^2)}$ is imposed by (\ref{nest_coeff_3}) and does not depend on any specific choice of $a_0, b_0$, or $b_1$. On the other hand, (\ref{nest_coeff_1}) forces the rate of convergence to be inversely proportional to $\eta$. This means that as $\sigma$ approaches 1, the step size $\eta$ decreases to zero, and the rate of convergence blows up to infinity. On the other hand, as the proof of Theorem \ref{theorem convex} shows, AGNES does not suffer from this problem. Having an additional parameter enables AGNES to converge even if the noise $\sigma$ is arbitrarily large.

Let us point out how the same techniques used in Theorem \ref{theorem convex} can be adapted to prove convergence $f(x_n)\to \inf f$, even if a global minimizer does not exist. We recall the main statement.

\weaklyconvex*

\begin{proof}
	The first step follows along the same lines as the proof of Theorem \ref{theorem convex appendix} with minor modifications. Note that we did not use the minimizing property of $x^*$ except for Step 5.2. Assume for the moment that $\inf f>-\infty$.
	
	Assume first that $\eps:= \liminf_{n\to \infty} \E[f(x_n)] - \inf f>0$. Select $x^*$ such that $f(x^*) < \inf f + \eps/4$ and define the Lyapunov sequence $\L_n$ just as in the proof of Theorem \ref{theorem convex} with the selected point $x^*$.
	
	We distinguish between two situations. First, assume that $n$ satisfies $\E \big[f(x_n')\big] \geq f(x^*)$. In this case we find that also $\E\big[ f(x_{n+1}) \leq \E \big[f(x_n')\big] \leq f(x^*)$.
	
	On the other hand, assume that $\E \big[f(x_n')\big] \geq f(x^*)$ for $n= 0, \dots, N$. In that case, the proof of Theorem \ref{theorem convex} still applies, meaning that $\E[f(x_N)]$ cannot remain larger than $f(x^*) + \eps/2 $ indefinitely. In either case, we find that there exists $N\in \N$ such that $\E\big[ f(x_N)\big] \leq f(x^*) + \eps/2 < \liminf_{n\to\infty} \E[f(x_n)]$.
	
	Note that the proof of Theorem \ref{theorem convex} applies with $n'\geq n_0$ as a starting point and a non-zero initial velocity $v_n$. The argument therefore shows that, for every $n'\in \N$ there exists $N\in \N$ such that $\E[f(x_N)] \leq \liminf_{n\to\infty} \E[f(x_n)]$. Inserting the definition of the lower limit, we have reached a contradiction.
\end{proof}

    We conjecture that the statement holds with the limit in place of the lower limit, but that it is impossible to guarantee a rate of convergence $O(n^{-\beta})$ {for any $\beta>0$} in this setting. 
When following this strategy, the key question is how far away the point $x^*$ must be chosen. For very flat functions such as
\[
f_\alpha:\R\to \R, \qquad f_\alpha(x) = \begin{cases}
	x^{-\alpha}& x>1\\
	1 +\alpha(1-x) &x\leq 1,
\end{cases}
\]
$x^*$ may be very far away from the initial point $x_0$, and the rate of decay can be excrutiatingly slow if minimizers do not exist. For an easy example, we turn to the continuous time model. The solution to the heavy ball ODE
\[
\begin{pde} x'' &= - \frac 3t\,x' - f_\alpha'(x) &t>1\\ x&= 1 &t=1\\ x' &=-\beta & t=1\end{pde}
\]
is given by 
\[
x(t) = \left(\frac{ 4\,(3+\alpha)}{\alpha(2+\alpha)^2}\right)^\frac{2}{2+\alpha}\, t^{\frac{2}{2+\alpha}}
\]
for $\beta= \frac{2}{2+\alpha} \left(\frac{ 4\,(3+\alpha)}{\alpha(2+\alpha)^2}\right)^\frac{2}{2+\alpha}>0$.
Ignoring the complicated constant factor, we see that 
\[
f_\alpha(x(t)) = x(t) ^{-\alpha} \sim t^{- \frac{2\alpha}{2+\alpha}},
\]
the decay rate can be as close to zero as desired for $\alpha$ close to zero, and indeed \citet{siegel2023qualitative} show that no rate of decay can be guaranteed even beyond the situation of algebraic rates. For comparison, the solution of the gradient flow equation
\[\begin{pde}
	z' &= - f_\alpha'(z) &t>0\\ z&=1 &t=0
\end{pde}\qquad\text{is given by } z(t) = \big(1+ \alpha(2+\alpha)t\big)^{\frac1{2+\alpha}}\quad \Ra\quad f_\alpha(z(t)) \sim t^{- \frac{\alpha}{2+\alpha}}.
\]
Thus, while both the heavy ball ODE and the gradient flow can be made arbitrarily slow in this setting, the heavy ball remains much faster in comparison.

\section{Convergence proofs: strongly convex case}\label{appendix strongly convex}

\subsection{Gradient Descent}

\cite{DBLP:journals/corr/abs-1811-02564, wojtowytsch2021stochasticdiscrete} analyze stochastic gradient descent under the PL condition
\begin{equation}\label{eq pl condition}
    \mu \big( f(x) - \inf f\big) \leq \frac12\,\|\nabla f(x)\|^2\qquad\forall\ x\in \R^m
\end{equation}
and the noise scaling assumption
\[
    \E_\omega\big[ \|g(x,\omega) - \nabla f(x)\|^2\big]\leq \sigma \big( f(x) - \inf f\big)
\]
motivated by Lemma \ref{lemma noise scaling}. The assumption is equivalent to multiplicative noise scaling within a constant since every $L$-smooth function which satisfies a PL condition satisfies
\[
    2\mu\, \big( f(x) - \inf f\big) \leq \|\nabla f(x)\|^2 \leq 2L\, \big( f(x) - \inf f\big).
\]
For completeness, we provide a statement and proof directly in the multiplicative noise scaling regime which attains the optimal constant.

Additionally, we note that strong convexity implies the PL condition. The PL condition holds in many cases where convexity is false, e.g.\
\[
f(x,y) = (y-\sin x)^2, \qquad \|\nabla f\|^2 \geq |\partial_yf|^2 = 4\,f.
\]
The set of minimizers $\{(x,y): y=\sin x\}$ is non-convex, so $f$ cannot be convex. While this result is well-known to the experts, we have been unable to locate a reference and hence provide a proof.

\begin{lemma}
    Assume that $f:\R^m\to\R$ is $\mu$-strongly convex and $C^2$-smooth. Then $f$ satisfies the PL-condition with constant $\mu>0$.
\end{lemma}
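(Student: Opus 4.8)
The plan is to show that $\mu$-strong convexity of a $C^2$ function $f$ implies the PL-condition $\mu(f(x)-\inf f)\le \frac12\|\nabla f(x)\|^2$. The key observation is that a $\mu$-strongly convex $C^2$ function has a unique global minimizer $x^*$ (with $\nabla f(x^*)=0$), so $\inf f = f(x^*)$ is attained. I would therefore fix an arbitrary point $x\in\R^m$ and aim to bound $f(x)-f(x^*)$ from above using the gradient at $x$, which is the reverse direction of the usual strong convexity lower bound on function values.

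First I would start from the first-order strong convexity inequality recorded earlier in the excerpt, applied in the form $\nabla f(x)\cdot(x-y)\ge f(x)-f(y)+\frac\mu2\|x-y\|^2$. Setting $y=x^*$ gives
\[
\nabla f(x)\cdot(x-x^*)\ge f(x)-f(x^*)+\frac\mu2\,\|x-x^*\|^2.
\]
The strategy is now to bound the left-hand side using Cauchy--Schwarz, $\nabla f(x)\cdot(x-x^*)\le \|\nabla f(x)\|\,\|x-x^*\|$, and then eliminate the distance $\|x-x^*\|$. Writing $t=\|x-x^*\|\ge 0$ and $D=f(x)-f(x^*)\ge 0$, the displayed inequality reads $\|\nabla f(x)\|\,t\ge D+\frac\mu2 t^2$, i.e.\ $\frac\mu2 t^2-\|\nabla f(x)\|\,t+D\le 0$. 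Since this quadratic in $t$ must take a nonpositive value for some real $t\ge 0$, its discriminant must be nonnegative, yielding $\|\nabla f(x)\|^2\ge 2\mu D=2\mu\,(f(x)-f(x^*))$, which is exactly the PL-condition.

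The main obstacle, such as it is, lies not in any single estimate but in justifying that $\inf f$ is attained at a genuine minimizer $x^*$ with $\nabla f(x^*)=0$, so that the quantity $f(x)-\inf f$ appearing in the PL-condition equals $f(x)-f(x^*)$. Strong convexity with $\mu>0$ gives coercivity (from $f(y)\ge f(x)+\nabla f(x)\cdot(y-x)+\frac\mu2\|y-x\|^2$ one sees $f(y)\to\infty$ as $\|y\|\to\infty$), and a coercive continuous function attains its infimum; uniqueness and the stationarity $\nabla f(x^*)=0$ then follow from convexity and differentiability. I would state this coercivity argument briefly and then present the discriminant computation above. An alternative route to the final inequality, avoiding the discriminant trick, is to minimize the right-hand side $D+\frac\mu2 t^2$ jointly against the Cauchy--Schwarz bound over the feasible $t$, but the discriminant argument is the cleanest and I would favor it.
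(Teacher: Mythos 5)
Your proof is correct and takes essentially the same approach as the paper's: the first-order strong convexity inequality at the minimizer, Cauchy--Schwarz, and optimization of the resulting quadratic in the distance --- your discriminant condition is algebraically identical to the paper's explicit maximization $\max_{z}\bigl(\|\nabla f(x)\|z - \frac{\mu}{2}z^2\bigr) = \frac{1}{2\mu}\|\nabla f(x)\|^2$. The only difference is your explicit coercivity justification for the existence of $x^*$, which the paper takes for granted (and which could be avoided altogether, since the same bound holds for arbitrary $y$ and one may simply take a supremum over $y$).
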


\begin{proof}
    Let $x,y\in \mathbb{R}^d$. Strong convexity combined with the Cauchy-Schwartz inequality means that
    \begin{equation}
    \begin{split}
        f(x) - f(y) \leq -\langle \nabla f(x),y-x\rangle - \frac{\mu}{2}\|x - y\|^2 &\leq \|\nabla f(x)\|\|y-x\| - \frac{\mu}{2}\|x - y\|^2\\
        &\leq \max_{z\in\mathbb{R}} \|\nabla f(x)\|z - \frac{\mu}{2}z^2\\
        &= \frac{1}{2\mu} \|\nabla f(x)\|^2.
    \end{split}
    \end{equation}
    Since this is true for $y = x^*$, the result follows.
\end{proof}

Several results in this vein are also collected in \cite[Theorem 2]{karimi2016linear} together with additional generalizations of convexity, but with a suboptimal implication ($\mu$-strongly convex \& $L$-smooth) $\Ra$ $\mu/L$-PL. The additional implication (convexity \& PL) $\Ra$ strong convexity can also be found there.

\begin{theorem}[GD, PL condition]\label{theorem gd strongly convex}
		Assume that $f$ satisfies the PL-condition \eqref{eq pl condition} and that the assumptions laid out in Section \ref{section assumptions} are satisfied. Let $x_n$ be the sequence generated by the gradient descent scheme
		\[
		g_n= g(x_n,\omega_n), \qquad x_{n+1} = x_n - \eta g'_n, \qquad \eta \leq \frac1{L(1+\sigma^2)},
		\]
		where $\omega_1, \omega_2,\dots$ are elements of $\Omega$ which are drawn independently of each other and the initial condition $x_0$.
		Then the estimate 
		\[
		\E\left[f(x_n) - \inf_{x\in\R^m}f(x)\right] \leq \big(1-\mu\eta\big)^n \,\E\big[f(x_0) - \inf f\big]
		\]
		holds for any $n\in\N$. Additionally, the sequence $x_n$ converges to a limiting random variable $x_\infty$ almost surely and in $L^2$ such that $f(x_\infty)\equiv \inf f$ almost surely. 
\end{theorem}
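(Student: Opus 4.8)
The plan is to split the statement into two essentially independent pieces: the geometric decay of $\E[f(x_n) - \inf f]$, and the almost-sure and $L^2$ convergence of the iterates $x_n$ to a random limit $x_\infty$. The essential subtlety is that the PL condition gives no unique minimizer (as the example $f(x,y)=(y-\sin x)^2$ in the excerpt shows, the zero set of $f-\inf f$ may be non-convex), so I must \emph{not} anchor the iterates to a fixed deterministic $x^*$; instead $x_\infty$ has to be produced as a genuine random variable whose value may vary across realizations and lies on the minimizer set only a posteriori.

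For the rate I would specialize Lemma \ref{lemma gd decrease} to the gradient-descent iteration $x_{n+1}=x_n-\eta g_n$ with $g_n=g(x_n,\omega_n)$; the lemma is stated for $x_n'$, but the same computation applies because $x_n$ is $\F_n$-measurable and $\omega_n$ is independent of $\F_n$. With $\eta\leq\frac1{L(1+\sigma^2)}$ this yields $\E[f(x_{n+1})-\inf f]\leq \E[f(x_n)-\inf f]-\frac\eta2\,\E[\|\nabla f(x_n)\|^2]$. Inserting the PL condition \eqref{eq pl condition} in the form $\|\nabla f(x_n)\|^2\geq 2\mu(f(x_n)-\inf f)$ gives $\E[f(x_{n+1})-\inf f]\leq(1-\mu\eta)\,\E[f(x_n)-\inf f]$, and iterating proves the first claim.

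The main work is iterate convergence with no minimizer to latch onto. The idea is to show the random path has finite expected length: since $x_{n+1}-x_n=-\eta g_n$, it suffices to bound $\E\big[\sum_n\|g_n\|\big]$. Telescoping the decrease estimate only gives $\sum_n\E[\|\nabla f(x_n)\|^2]<\infty$, which is too weak (square-summability does not force summability of the norms). So instead I would feed the rate from the first part into the elementary bound $\|\nabla f(x_n)\|^2\leq 2L(f(x_n)-\inf f)$, valid for $L$-smooth functions bounded below (recorded in Appendix \ref{appendix auxiliary}), and combine it with Lemma \ref{sufficient decrease} to get $\E[\|g_n\|^2]=(1+\sigma^2)\,\E[\|\nabla f(x_n)\|^2]\leq 2L(1+\sigma^2)(1-\mu\eta)^n\,\E[f(x_0)-\inf f]$. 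Jensen's inequality then controls $\E[\|g_n\|]$ by a constant times $(1-\mu\eta)^{n/2}$, and this is summable precisely because $\sqrt{1-\mu\eta}<1$.

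With $\E\big[\sum_n\|x_{n+1}-x_n\|\big]<\infty$ established, the tail sums $S_n:=\sum_{k\geq n}\|x_{k+1}-x_k\|$ are finite almost surely and satisfy $S_n\to 0$ a.s.\ (tails of an a.s.\ convergent series), so $x_n$ is Cauchy almost surely and converges to some $x_\infty$. For $L^2$ convergence I would bound $\E[\|x_n-x_\infty\|^2]\leq\E[S_n^2]$ and expand $\E[S_n^2]=\eta^2\sum_{j,k\geq n}\E[\|g_j\|\,\|g_k\|]\leq\eta^2\big(\sum_{k\geq n}\sqrt{\E[\|g_k\|^2]}\big)^2$ via Cauchy--Schwarz, whereupon the geometric bound on $\E[\|g_k\|^2]$ makes the tail vanish. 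Finally I would identify the limit without invoking any uniqueness: Fatou's lemma gives $\E[f(x_\infty)-\inf f]\leq\liminf_n\E[f(x_n)-\inf f]=0$, and nonnegativity of the integrand forces $f(x_\infty)=\inf f$ almost surely. The delicate point throughout is exactly this separation of function-value control (where the rate lives) from iterate control (which needs \emph{summable expected step lengths}, obtainable only from the geometric decay), together with the acceptance that $x_\infty$ is random and need not be the same point on the minimizer set for different $\omega$.
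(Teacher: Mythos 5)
Your proposal is correct and follows essentially the same route as the paper: the geometric rate is obtained exactly as in the paper's proof, by combining the decrease estimate of Lemma \ref{lemma gd decrease} (valid verbatim at $x_n$) with the PL inequality \eqref{eq pl condition}. For the convergence of the iterates the paper simply defers to the argument of \cite[Theorem 2.2]{wojtowytsch2021stochasticdiscrete}, and your reconstruction — feeding the geometric decay into $\|\nabla f\|^2 \leq 2L(f-\inf f)$ and $\E[\|g_n\|^2]=(1+\sigma^2)\E[\|\nabla f(x_n)\|^2]$ to get summable expected step lengths, hence an a.s.\ Cauchy sequence, with the Cauchy--Schwarz tail bound for $L^2$ convergence and Fatou to identify $f(x_\infty)=\inf f$ — is precisely that cited argument, correctly spelled out.
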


\begin{proof}
    We denote
    \[
    \L_n := \E\big[ f(x_n) - \inf f\big]
    \]
    and compute by Lemma \ref{lemma gd decrease} that
    \begin{align*}
    \L_{n+1} &\leq \E\left[f(x_n) - \frac\eta2 \,\|\nabla f(x_n)\|^2 - \inf f\right]\\
        &\leq \E\left[f(x_n) - \mu\eta \,\big(f(x_n) - f(x^*)\big) - \inf f\right]\\
        &= \big(1-\mu\eta\big) \L_n.
    \end{align*}
    The proof of almost sure convergence is identical to the corresponding argument in \citep[Theorem 2.2]{wojtowytsch2021stochasticdiscrete} and similar in spirit to that of Corollary \ref{corollary almost surely}.
\end{proof}

As usual, the optimal step-size is $\eta = \frac1{L(1+\sigma^2)}$ as used in Figure \ref{figure time complexity}.

\subsection{AGNES and NAG}
Just like the convex case, we first prove Theorem \ref{theorem strongly convex} and set up the Lyapunov sequence with variable coefficients that can be chosen as per the time-stepping scheme. The continuous time analogue in this case is the heavy-ball ODE
     \[
     \begin{pde}\ddot x &= - 2\sqrt\mu\,\dot x - \nabla f(x)&t>0\\\dot x&=0&t=0\\ x&= x_0 &t=0\end{pde}
     \]
     For $\mu$-strongly convex $f$, a simple calculation shows that the Lyapunov function
     \[
     \L(t) = f(x(t)) - f(x^*) + \frac12 \left\|\dot x + \sqrt\mu\big(x(t) - x^*\big)\right\|^2
     \]
     satisfies $\L'(t) \leq -\sqrt\mu \,\L(t)$ and thus
     \[
     f(x(t)) - f(x^*) \leq \L(t) \leq e^{-\sqrt\mu\,t} \L(0) = e^{-\sqrt\mu\,t} \left(f(x_0) - f(x^*) + \frac\mu2 \,\|x_0-x^*\|^2\right).
     \]
     See for instance \cite[Theorem 1]{siegel2019accelerated} for details.

Here, we state and prove a slightly generalized version of Theorem \ref{theorem strongly convex} in the main text. While we assumed an optimal choice of parameters in the main text, we allow for a suboptimal selection here.
%\agnesstrongly*

\begin{theorem4}[AGNES, strongly convex case -- general version]
In addition to the assumptions in Theorem 3, suppose that $f$ is $\mu$-strongly convex and that
\[
0<\eta \leq \frac1{L(1+\sigma^2)}, \qquad 0 < \psi \leq \sqrt{\frac{\eta}{1+\sigma^2}}, \qquad \rho =  \frac{1-\sqrt\mu\psi}{1+\sqrt\mu\psi}, \qquad\alpha =  \frac{\psi - \eta \sqrt\mu}{1 - \sqrt\mu \psi}\,\psi,
\]
then
\[
\E\big[f(x_n) - f(x^*)\big] \leq \big(1-\sqrt\mu\,\psi\big)^n \,\E\left[f(x_0)-f(x^*) + \frac\mu2 \norm{x_0 - x^*}^2 \right].
\]
\end{theorem4}

	Note that \( \E\left[f(x_0)-f(x^*) + \frac\mu2 \norm{x_0 - x^*}^2 \right] \leq 2 \E\left[f(x_0)-f(x^*) \right] \) due to Lemma \ref{lemma mu-L bound}. A discussion about the set of admissible parameters is provided after the proof. We note several special cases here.
	%always assuming that $\mu'=\mu$.
\begin{enumerate}
	\item If $\psi$ is selected optimally as $\sqrt{\eta/ (1+\sigma^2)}$ for $\eta$, the order of decay is $1- \sqrt{\frac{\mu \eta}{1+\sigma^2}}$, strongly resembling Theorem \ref{theorem convex}.
	
	\item If additionally $\eta= 1/(L(1+\sigma^2))$ is chosen optimally, then we recover the decay rate $1- \sqrt{\mu/L} \,/ (1+\sigma^2)$ claimed in the main text.
	
	\item We recover the gradient descent algorithm with the choice $\alpha = 0$ which is achieved for $\psi = \eta\sqrt\mu$. This selection is admissible in our analysis since
	\[
	\sqrt{\mu\eta} \leq \sqrt{\frac\mu L \,\frac{1}{1+\sigma^2}}\leq \sqrt{\frac1{1+\sigma^2}} \quad \Ra\quad
	\eta \sqrt\mu \leq \sqrt{\frac\eta{1+\sigma^2}}. 
	\]
	As expected, the constant of decay is $1- \sqrt\mu\,\psi = 1- \mu\eta$, as achieved in Theorem \ref{theorem gd strongly convex}. In this sense, our analysis of AGNES interpolates fully between the optimal AGNES scheme (a NAG-type scheme in the deterministic case) and (stochastic) gradient descent. However, this proof only applies in the strongly convex setting, but not under a mere PL assumption.
	
	\item If $\mu < L$ -- i.e.\ if $f(x)\not\equiv A + \mu\|x-x^*\|^2$ for some $A\in\R$ and $x_0\in\R^m$ -- then we can choose $0<\psi < \sqrt\mu\,\eta$, corresponding to $\alpha<0$. In this case, the gradient step is sufficiently strong to compensate for momentum taking us in the wrong direction. Needless to say, this is a terrible idea and the rate of convergence is worse than that of gradient descent.
\end{enumerate}	

\begin{proof}
	{\bf Set-up.} Consider the Lyapunov sequence
	\[
	\L_n = \E\big[ f(x_n) - f(x^*)\big] + \frac12\,\E\left[\big\| b(x_n'-x_n) + a(x_n'-x^*)\|^2\right]
	\]
	for constants $b, a$ to be chosen later. We want to show that there exists some decay factor $0<\delta<1$ such that $\L_{n+1} \leq \delta \L_n$.
	
	{\bf Step 1.} Let us consider the first term. Note that
	\begin{align*}
		\E\big[ f(x_{n+1})\big] &= \E\big[ f(x_n' - \eta g'_n)\big] \\
		&\leq \E\big[ f(x_n')\big] - c_{\eta,\sigma,L} \E\big[\|\nabla f(x_n')\|^2\big]
	\end{align*}
	where $c_{\eta,\sigma,L} = \eta \left(1-\frac{L\eta(1+\sigma^2)}2\right)\geq \eta/2$ if $\eta\leq \frac1{L(1+\sigma^2)}$.
	
	{\bf Step 2.} We now turn to the second term and use the definition of $x'_{n+1}$ from (\ref{eq agnesterov}),
\begin{align*}
	b(x'_{n+1} - x_{n+1}) + a(x'_{n+1} - x^*) &= b\rho (x'_{n} - \alpha g'_n - x_n) + a( x_{n+1} + \rho (x'_n - \alpha g'_n - x_n) - x^*)\\
	&= (b+a)\rho (x'_n - \alpha g'_n - x_n) + a(x'_n - \eta g'_n - x^*) \\
	&= (b+a)\rho(x'_n - x_n) + a(x'_n - x^*) -  ((b+a)\rho\alpha + \eta a)g'_n.
\end{align*}
	To simplify notation, we introduce two new dependent variables:
	\[
	c := (b+a)\rho, \qquad \psi:= (b+a)\rho\alpha+\eta a = \alpha c +\eta a.
	\]
	With these variables, we have
	\[
	b\big(x_{n+1}' - x_{n+1}\big) + a \big(x_{n+1}' - x^*\big)  =  c\,(x'_n - x_n) + a(x_n'-x^*) - \psi g'_n.
	\]
	Taking expectation of the square, we find that
	\begin{align*}
		\E&\left[\big\|b\big(x_{n+1}' - x_{n+1}\big) + a \big(x_{n+1}' - x^*\big)\big\|^2\right] \\
		&= c^2\, \E\big[\|x'_n - x_n\|^2\big] + 2a c \,\E\big[ (x'_n - x_n) \cdot (x_n'-x^*)\big] + a^2\E\big[\|(x_n'-x^*)\|^2\big]\\
		&\quad - 2 c\psi\, \E\big[ g'_n\cdot (x_n'-x_n)\big] - 2a\psi \,\E\big[g'_n\cdot (x_n'-x^*)\big] + \psi^2\,\E\big[\|g'_n\|^2\big]\\
		&\leq  c^2\, \E\big[\|x'_n - x_n\|^2\big] + 2 ac \,\E\big[ (x'_n - x_n) \cdot (x_n'-x^*)\big] + a^2\E\big[\|(x_n'-x^*)\|^2\big]\\
		&\quad - 2 c\psi\, \E\big[ \nabla f(x_n')\cdot (x_n'-x_n)\big] - 2a\psi \,\E\big[\nabla f(x_n')\cdot (x_n'-x^*)\big] + \psi^2(1+\sigma^2)\,\E\big[\|\nabla f(x_n')\|^2\big]
	\end{align*}
	
	{\bf Step 3.} We now use strong convexity to deduce that
	\begin{align*}
		\E&\left[\big\|b\big(x_{n+1}' - x_{n+1}\big) + a \big(x_{n+1}' - x^*\big)\big\|^2\right] \\
		& \leq  c^2\, \E\big[\|x'_n - x_n\|^2\big] + 2 ac \,\E\big[ (x'_n - x_n) \cdot (x_n'-x^*)\big] + a^2\E\big[\|(x_n'-x^*)\|^2\big]\\
		&\quad - 2 c\psi \E\left[ f(x_n') - f(x_n) + \frac\mu2 \,\|x_n'-x_n\|^2\right] - 2a\psi \,\E\left[ f(x_n') - f(x^*) + \frac\mu2 \,\|x_n'-x^*\|^2\right]\\
		&\quad + \psi^2(1+\sigma^2)\,\E\big[\|\nabla f(x_n')\|^2\big]\\
		& = ( c^2- c\psi\mu)\,\E\big[\|x'_n - x_n\|^2\big] +2 ac\,\E\big[(x'_n - x_n)\cdot (x_n'-x^*)\big] + \left(a^2- a\psi\mu\right)\,\E\big[\|x_n'-x^*\|^2\big]\\
		&\quad - 2 c\psi \E\left[ f(x_n') - f(x_n) \right] - 2a\psi \,\E\left[ f(x_n') - f(x^*) \right] + \psi^2(1+\sigma^2)\,\E\big[\|\nabla f(x_n')\|^2\big].
	\end{align*}
	
	{\bf Step 4.} We now add the estimates of Steps 1 and 3:
	\begin{align*}
		\L_{n+1} &= \E\left[ f(x_{n+1}) - f(x^*) + \frac12 \,\|b(x'_{n+1} - x_{n+1}) + a(x_{n+1}'-x^*)\|^2\right]\\
		&\leq \left(1 - c\psi - a\psi\right)\,\E\big[ f(x_n')\big] +  c\psi\,\E\big[f(x_n)\big] - (1-a\psi)\,\E\big[f(x^*)\big] \\
		&\quad + \frac12 ( c^2- c\psi\mu)\,\E\big[\|x'_n - x_n\|^2\big] + ac\,\E\big[(x'_n - x_n)\cdot (x_n'-x^*)\big]  \\
		&\quad + \frac12 \left( a^2 - a\psi\mu \right)\,\E\big[\|x_n'-x^*\|^2 \big] + \left(\frac{\psi^2(1+\sigma^2)}2 - c_{\eta,\sigma,L}\right) \,\E\big[ \|\nabla f(x_n')\|^2\big]
	\end{align*}
	We require the coefficient of $\E[f(x'_n)]$ to be zero, i.e.\ \(1 - a\psi = c\psi\), so the inequality simplifies to
	\begin{align*}
		\L_{n+1} &\leq  c\psi\,\E\big[f(x_n) - f(x^*) \big] + \frac12 ( c^2- c\psi\mu)\,\E\big[\|x'_n - x_n\|^2\big]\\
		&\quad + ac\,\E\big[(x'_n - x_n)\cdot (x_n'-x^*)\big]  + \frac12 \left( a^2 - a\psi\mu \right)\,\E\big[\|x_n'-x^*\|^2 \big]\\
		&\quad + \left(\frac{\psi^2(1+\sigma^2)}2 - c_{\eta,\sigma,L}\right) \,\E\big[ \|\nabla f(x_n')\|^2\big].
	\end{align*}
	The smallest decay factor we can get at this point is the coefficient of $\E [f(x_n) - f(x^*)]$. So we hope to show that $\L_{n+1} \leq c\psi \L_n$, which leads to the following system of inequalities on comparing it with the coefficients in the upper bound that we obtained in the previous step,
	\begin{align}
		c &= (b+a)\rho \label{str1}\\
		\psi &= \alpha c + \eta a \label{str2}\\
		(c+a)\psi &= 1 \label{str3}\\
		c^2 -  c\psi\mu &\leq  c\psi b^2 \label{str4}\\
		ac &=  c\psi ab \label{str5}\\
		a^2 - a\psi\mu &\leq  c\psi a^2 \label{str6}\\
		\frac{(1+\sigma^2)\psi^2}{2} &\leq \eta\left(1 - \frac{L(1+\sigma^2)\eta}{2}\right) \label{str7}
	\end{align}
	\textbf{Step 5.} Now we try to choose constants such that the system of inequalities holds. We assume that $\eta \leq \frac{1}{L(1+\sigma^2)}$. Then since $\frac\eta2 \leq \eta\left(1 - \frac{L(1+\sigma^2)\eta}{2}\right)$, for \eqref{str7} it suffices that $(1+\sigma^2)\psi^2 \leq \eta$, i.e.\ 
	\[\psi \leq \sqrt{\frac{\eta}{1+\sigma^2}}. \] 
	Note that \eqref{str5} implies $\psi = 1/b$ and substituting that into \eqref{str3}, we get $c=b-a$. Using this, \eqref{str6} is equivalent to
	\begin{equation*}
		a^2 - a\psi \mu \leq c\psi a^2 = (\frac1\psi - a)\psi a^2 = a^2 - a^2\psi,
	\end{equation*}
	which holds with equality if $a = \sqrt\mu$. \eqref{str4} holds because
	\begin{equation*}
		c - \psi\mu = b - a - \psi \mu \leq b = \psi b^2,
	\end{equation*}
	if $\mu, \psi > 0$. Finally \eqref{str1} implies
	\[ \rho = \frac{b-a}{b+a} = \frac{1-\sqrt\mu\psi}{1+\sqrt\mu\psi}, \]
	and \eqref{str2} implies 
	\[ \alpha = \frac{\frac1b - \eta a}{b-a} = \frac{\psi^2 - \eta \sqrt\mu \psi}{1 - \sqrt\mu \psi}
	 .\]
	With these choices of parameters, $	\L_{n+1} \leq c\psi \L_{n} = (1 - \sqrt\mu\psi)\L_{n}$, and thus
	 \begin{align*}
	  \E[f(x_n) - f(x^*)] &\leq (1 - \sqrt\mu\psi)^n \L_0 \\
	 	&= (1 - \sqrt\mu\psi)^n \E \left[ f(x_0) - f(x^*) + \frac\mu2\norm{x_0 - x^*}^2 \right] \\
	 	&\leq  2(1 - \sqrt\mu\psi)^n \E \left[ f(x_0) - f(x^*)\right],
	 \end{align*} where we have used Lemma \ref{lemma mu-L bound} for strong convexity in the last step. When the parameters are chosen optimally, i.e.\ $\eta = \frac{1}{L(1+\sigma^2)}$ and $\psi = \sqrt{\frac{\eta}{1+\sigma^2}} = \frac{1}{\sqrt{L}(1+\sigma^2)}$, we get $\rho, \alpha$ and the convergence rate as stated in the theorem.
	\end{proof}

	We focus on the meaningful case in which $\sqrt\mu\psi>0$. 
	As discussed in Section \ref{section algorithm}, for given $f,g$ we can replace $L, \sigma$ by larger values $L', \sigma'$ and $\mu$ by a smaller value $\mu'$. Let us briefly explore the effect of these substitutions. 
	The parameter range described in this version of Theorem \ref{theorem strongly convex} can be understood as a three parameter family of AGNES parameters $\eta,\alpha,\rho$ parametrized by $\eta,\psi,\mu'$ and constraints given by $L, \mu,\sigma$ as
	\[
	D:= \left\{(\eta,\psi, \mu') \:\bigg|\: 0 < \eta \leq \frac1{L(1+\sigma^2)}, \:\: 0< \psi \leq \sqrt{\frac{\eta}{1+\sigma^2}}, \:\: 0 < \mu' \leq \mu\right\}.
	\]
	The parameter map is given by
	\[
	(\eta,\psi,\mu') \mapsto (\eta,\rho, \alpha) = \left(\eta, \:\frac{1- \sqrt{\mu'}\psi}{1+\sqrt{\mu'}\psi}, \:\frac{\psi - \eta\sqrt{\mu'}}{1-\sqrt{\mu'}\,\psi}\,\psi\right).
	\]
	We can converesely obtain $\sqrt{\mu'\,}\psi$ from $\rho$ since the function $z\mapsto (1-z)/(1+z)$ is its own inverse and thus $\sqrt{\mu'\,}\psi = \frac{1-\rho}{1+\rho}$. In particular, in terms of the algorithms parameters, the decay rate is
	\[
	1 - \sqrt{\mu'}\psi = 1- \frac{1-\rho}{1+\rho} = \frac{2\rho}{1+\rho}.
	\]
	Furthermore, we see that
	\[
	\alpha = \frac{\psi^2 - \eta\,\sqrt{\mu'}\psi}{1-\sqrt{\mu'}\psi} = \frac{\psi^2 - \eta\,\frac{1-\rho}{1+\rho}}{1- \frac{1-\rho}{1+\rho}} = \frac{1+\rho}{2\rho}\left(\psi^2 - \eta\,\frac{1-\rho}{1+\rho}\right) \quad\LRa \quad \psi = \sqrt{\frac{2\rho}{1+\rho}\alpha + \eta\,\frac{1-\rho}{1+\rho}}
	\]
	since $\psi>0$. Thus, at the cost of a more complicated representation, we could work directly in the parameter variables rather than using the auxiliary quantities $\psi, \mu'$. In particular, both the parameter map and its inverse are continuous on $D$ and its image respectively. Hence,despite the rigid appearance of the parameter selection in Theorem \ref{theorem strongly convex}, there exists an open set of admissible parameters $\eta,\alpha,\rho$ for which we obtain exponentially fast convergence.

	We provide a more general version of Theorem \ref{theorem nesterov strongly} as well. Just as in the convex case, as $\sigma \nearrow 1$, the step size $\eta$ decreases to zero and the theorem fails to guarantee convergence for $\sigma>1$.
\nesterovstrongly*
\begin{proof}
	Consider the Lyapunov sequence
	\[ \L_n = \E\left[f(x_n) - f(x^*)\right] + \frac12 \E\left[\norm{b(x'_n - x_n) + a(x'_n - x^*)}^2\right], \]
	where $a$ and $b$ are to be determined later. Since NAG is a special case of AGNES with $\alpha=\eta$, the first four steps are identical to the proof of Theorem \ref{theorem strongly convex}. We get the following system of inequalities,	
	\begin{align}
		c &= (a+b)\rho \label{nestr1}\\
		\psi &= \eta(a+c) \label{nestr2}\\
		(c+a)\psi &= 1 \label{nestr3}\\
		c^2 - c\psi\mu &\leq b^2 \psi \label{nestr4}\\
		a^2 - a\psi\mu & \leq a^2 \psi \label{nestr5}\\
		ac &= ab c\psi \label{nestr6}\\
		\frac{\psi^2(1+\sigma^2)}2 &\leq \eta \left(1-\frac{L\eta(1+\sigma^2)}2 \right) \label{nestr7}
	\end{align}	
	
	Substituting \eqref{nestr2} into \eqref{nestr3}, we get $(a+c)^2 = \frac1\eta$ and $\psi = \eta/\sqrt\eta = \sqrt\eta$. Thus (\ref{nestr6}) simplifies to
	\[ \frac{1-\sigma^2}{2} \leq 1 - \frac{L\eta(1+\sigma^2)}{2}, \]
	which is equivalent to
	\[ \eta \leq \frac{1-\sigma^2}{L(1+\sigma^2)}. \]
	From \eqref{nestr6}, $b=1/\psi = 1/\sqrt\eta$. The rest of the inequalities can be verified to work with $a=\sqrt\mu, c = b-a, \rho = \frac{b-a}{b+a}$. This shows that $\L_{n+1} \leq c\psi \L_n = (1-\sqrt{\mu\eta})\L_n$. Finally, we get 
	 \begin{align*}
	 	\E[f(x_n)-f(x^*)] &\leq (1-\sqrt{\mu\eta})^n\, \E \left[ f(x_0) - f(x^*) + \frac\mu2\norm{x_0 - x^*}^2 \right] \\
	 	&\leq  2(1 - \sqrt{\mu\eta})^n\, \E \left[ f(x_0) - f(x^*)\right].
	 \end{align*}
\end{proof}

    \subsection{On the role of momentum parameters}\label{appendix momentum parameters}

    Two different AGNES parameters are associated with momentum: $\alpha$ and $\rho$. In this section, we disentangle their respective contributions to keeping AGNES stable for highly stochastic noise.

    For simplicity, first consider the case $f:\R\to\R$, $f(x) = x$ and $g(x) = (1+\sigma N)\,f'(x)$ where $N$ is a standard normal random variable. Then
    \[
    v_{n+1} = \rho(v_n - g'_n) = \dots = -\rho \sum_{i=0}^{n}\rho^{n-i}g'_i.
    \]
    since $v_0=0$. In particular, we note that
    \[
    \E[v_{n+1}] = -\rho\sum_{i=1}^{n}\rho^{n-i}\E[g'_i] = -\rho \sum_{i=1}^{n}\rho^{n-i} = -\rho \frac{1- \rho^{n+1}}{1-\rho}. 
    \]
    and
    \begin{align*}
    \E\left[\left|v_{n+1} - \left(-\rho \frac{1- \rho^{n+1}}{1-\rho}\right)\right|^2\right] & = \rho^2\E\left[\left|\sum_{i=0}^n \rho^{n-i}(g'_i - 1)\right|^2\right] = \sigma^2\rho^2\sum_{i=0}^n \rho^{2(n-i)}\E\big[ |g'_i -1|^2\big]\\
    &= \sigma^2\rho^2\sum_{i=0}^n \rho^{2(n-i)} = \sigma^2\rho^2\,\frac{1- \rho^{2(n+1)}}{1-\rho^2}
    \end{align*}
    due to the independence of different gradient estimators between time steps. In particular, we see that
    \begin{enumerate}
        \item as $\rho$ becomes closer to $1$, the eventual magnitude of the velocity variable increases as $\lim_{n\to\infty}\E\|v_n\|= \frac\rho{1-\rho}$. 

        \item as $\rho$ becomes closer to $1$, the eventual variance of the velocity variable increases as $\lim_{n\to\infty}\E\big[\|v_n - \E[v_n]\|^2\big]= \frac{\rho^2}{1-\rho^2}$. 

        \item the noise in the normalized velocity estimate asymptotically satisfies
        \[
        \lim_{n\to\infty}\E\left[\left\|\frac{v_n - \E[v_n]}{\E[\|v_n\|]}\right\|^2\right] = \sigma^2\frac{(1-\rho)^2}{1-\rho^2} = \sigma^2\frac{(1-\rho)^2}{(1-\rho)(1+\rho)} = \sigma^2\frac{1-\rho}{1+\rho}
        \]
    \end{enumerate}

    Thus, if $\rho$ is closer to $1$, both the magnitude and the variance of the velocity variable increase, but the the relative importance of noise approaches zero as $\rho\to 1$. This is not surprising -- if $\rho$ is close to $1$, the sequence $\rho^n$ decays much slower than if $\rho$ is small. Gradient estimates from different times enter at a similar scale and cancellations can occur easily. As the influence of past gradients remains large, we say that the momentum variable has a `long memory'.
    
    Of course, when minimizing a non-linear function $f$, the gradient is not constant, and we face a trade-off:
    \begin{enumerate}
        \item A long memory allows us to cancel random oscillations in the gradient estimates more easily.
        \item A long memory also means we compute with more out-of-date gradient estimates from points much further in the past along the trajectory.
    \end{enumerate}

    Naturally, the relative importance of the first point increases with the stochasticity $\sigma$ of the gradient estimates. Even if the gradient evaluations are deterministic, we benefit from integrating historic information gained throughout the optimization process, but the rate at which we `forget' outdated information is much higher.

    Thus the parameter $\rho$ corresponds to the rate at which we forget old information. It also impacts the magnitude of the velocity variable. The parameter $\alpha$ compensates for the scaling of $v_n$ with $1/(1-\rho)$. We can think of $\rho$ as governing the rate at which we forget past gradients, and $\alpha$ as a measure of the confidence with which we integrate past gradient information into time-steps for $x$.
    
    Let us explore this relationship in strongly convex optimization. In Theorem \ref{theorem strongly convex}, the optimal choice of hyper-parameters is given by $\eta = \frac1{L(1+\sigma^2)}$ and
    \[
     \alpha = \frac{1-\sqrt{\mu/L}}{1-\sqrt{\mu/L}+ \sigma^2}\eta, \qquad \rho = \frac{\sqrt L\,(1+\sigma^2) -\sqrt\mu}{\sqrt L(1+\sigma^2) + \sqrt\mu}= 1 - \frac{2\sqrt\mu}{\sqrt{L}(1+\sigma^2)+\sqrt\mu}.
    \]
    Let us consider the simplified regime $\mu\ll L$ in which
    \[
    \alpha\approx \frac{\eta}{1+\sigma^2}, \qquad \rho \approx 1 - 2\,\sqrt{\frac\mu L}\,\frac{1}{1+\sigma^2} \quad \Ra\quad \frac{\alpha}{1-\rho} = \frac{\eta}{2\,\sqrt{\mu/L}}.
    \]
    In particular, we note: The larger $\sigma$, the closer $\rho$ is to $1$, i.e.\ the longer the memory we keep. The relative importance of the momentum step compared to the gradient step, on the other hand, remains constant, depending only on the `condition number' $L/\mu$.

    We note that also in the convex case, high stochasticity forces $n_0$ to be large, meaning that $\rho_n$ is always close to $1$.
    Notably for generic non-convex objective functions, it is unclear that past gradients along the trajectory would carry useful information, as there is no discernible geometric relationship between gradients at different points. This mirrors an observation of Appendix \ref{appendix non-convex}, just after Theorem \ref{theorem non-convex}.

	\section{AGNES in non-convex optimization}\label{appendix non-convex}

    We consider the case of non-convex optimization. In the deterministic setting, momentum methods for non-convex optimization have recently been studied by \cite{diakonikolas2021generalized}. We note that the algorithm may perform worse than stochastic gradient descent, but that for suitable parameters, the performance is comparable to that of SGD within a constant factor. 

\begin{restatable}[Non-convex case]{theorem}{nonconvex}\label{theorem non-convex}
	%\begin{theorem}[Non-convex case]\label{theorem non-convex}
	Assume that $f$ satisfies the assumptions laid out in Section \ref{section assumptions}.
	Let $\eta, \alpha,\rho$ be such that 
	\[
	\eta \leq \frac1{L(1+\sigma^2)}, \qquad \alpha < \frac{\eta}{1+\sigma^2}, \qquad (L\alpha+1)\rho^2\leq 1.
	\]
	Then
	\[
	\min_{0\leq i\leq n} \E\big[\|\nabla f(x_i)\|^2\big] \leq \frac{ 2\,\E\left[ f(x_0) - \inf f + \frac1{\alpha\rho^2} \,\|v_0\|^2\right]}{(n+1)\,\left(\eta - \alpha (1+\sigma^2)\right)}.
	\]
	%\end{theorem}
\end{restatable}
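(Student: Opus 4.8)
The plan is to build a Lyapunov sequence $\L_n = \E\big[f(x_n)\big] + c\,\E\big[\norm{v_n}^2\big]$ for a suitable constant $c>0$ and to show that it decreases by a fixed positive multiple of $\E\big[\norm{\df}^2\big]$ at every step. Since no minimizer $x^*$ is available in the non-convex setting, the velocity penalty $c\norm{v_n}^2$ plays the role that $\norm{x_n'-x^*}^2$ played in the convex proofs: it prevents the momentum variable from accumulating kinetic energy faster than the objective can decrease.

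First I would estimate the objective after one step. Applying the momentum descent bound of Lemma \ref{lemma momentum decrease} with $x=x_n$ and $v=v_n$ (so that $x_n' = x_n+\alpha v_n$), followed by the stochastic descent bound of Lemma \ref{lemma gd decrease} together with $\eta\le\frac1{L(1+\sigma^2)}$, gives
\[
\E\big[f(x_{n+1})\big] \le \E\big[f(x_n)\big] + \alpha\,\E\big[\df\cdot v_n\big] + \frac{L\alpha^2}2\,\E\big[\norm{v_n}^2\big] - \frac\eta2\,\E\big[\norm{\df}^2\big].
\]
Next I would expand the velocity term. From $v_{n+1}=\rho(v_n-g_n)$, the $\F_n$-measurability of $v_n$, and the moment identities of Lemma \ref{sufficient decrease} (namely $\E[v_n\cdot g_n]=\E[v_n\cdot\df]$ and $\E[\norm{g_n}^2]=(1+\sigma^2)\E[\norm{\df}^2]$), I obtain
\[
\E\big[\norm{v_{n+1}}^2\big] = \rho^2\Big(\E\big[\norm{v_n}^2\big] - 2\,\E\big[v_n\cdot\df\big] + (1+\sigma^2)\,\E\big[\norm{\df}^2\big]\Big).
\]

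Forming $\L_{n+1}$ by adding the first estimate to $c$ times the second, the cross term $\E[v_n\cdot\df]$ carries the coefficient $\alpha-2c\rho^2$, which I annihilate by choosing $c=\frac{\alpha}{2\rho^2}$. With this value the coefficient of $\E[\norm{v_n}^2]$ equals $\frac{L\alpha^2}2+c\rho^2=\frac\alpha2(L\alpha+1)$, and this is $\le c$ exactly when $(L\alpha+1)\rho^2\le1$ --- the third hypothesis --- so the velocity penalty does not increase; meanwhile the coefficient of $\E[\norm{\df}^2]$ equals $-\frac\eta2+c\rho^2(1+\sigma^2)=-\frac12\big(\eta-\alpha(1+\sigma^2)\big)$, which is strictly negative by the hypothesis $\alpha<\frac\eta{1+\sigma^2}$. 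Hence $\L_{n+1}\le\L_n-\frac12\big(\eta-\alpha(1+\sigma^2)\big)\,\E[\norm{\df}^2]$.

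Finally I would telescope from $0$ to $n$ and use $\L_{n+1}\ge\inf f$ (valid since $c>0$) to obtain $\frac12(\eta-\alpha(1+\sigma^2))\sum_{i=0}^n\E[\norm{\nabla f(x_i')}^2]\le\L_0-\inf f$, where $\L_0-\inf f$ is the initial objective gap plus a multiple of $\E[\norm{v_0}^2]$ (which vanishes for the algorithm's start $v_0=0$, in which case $x_0'=x_0$). Bounding the sum below by $(n+1)\min_i\E[\norm{\nabla f(x_i')}^2]$ then yields the claim. The main obstacle is not any individual estimate but the balancing of the single constant $c$: it must simultaneously kill the cross term and keep the velocity coefficient from exceeding $c$, and it is precisely this tension that forces the three stated parameter constraints --- with $\alpha<\eta/(1+\sigma^2)$ guaranteeing genuine per-step decrease and $(L\alpha+1)\rho^2\le1$ taming the long-memory momentum.
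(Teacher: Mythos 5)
Your proof is correct and takes essentially the same route as the paper: your Lyapunov sequence $\E[f(x_n)] + \frac{\alpha}{2\rho^2}\E\big[\|v_n\|^2\big]$ is exactly the paper's $\E\big[f(x_n) + \frac{\lambda}{2}\|x_n'-x_n\|^2\big]$ with $\lambda = 1/(\alpha\rho^2)$ (since $x_n'-x_n=\alpha v_n$), and the same three parameter constraints arise from the same balancing of the cross, velocity, and gradient coefficients, followed by the same telescoping. The only (immaterial) discrepancies are that your initial-velocity constant is $\frac{\alpha}{2\rho^2}$ rather than the stated $\frac{1}{\alpha\rho^2}$, and that, like the paper's own proof, you in fact bound the gradients at the query points $x_i'$ rather than at $x_i$ --- both moot since the algorithm initializes $v_0=0$.
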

{If $v_0=0$, the bound is minimal for gradient descent (i.e.\ $\alpha=0$) since the decay factor $\eps = \eta - \alpha(1+\sigma^2)$ is maximal.}

\begin{proof}
	Consider
	\[
	\L_n =\E\left[ f(x_n) + \frac \lambda2 \,\|x'_n - x_n\|^2\right].
	\]
	for a parameter $\lambda>0$ to be fixed later. We have
	\begin{align*}
		\E\big[f(x_{n+1})\big] &\leq \E\big[f(x_n')\big] - \frac\eta2 \,\E\big[\|\nabla f(x_n')\|^2\big]\\
		&\leq \E\left[ f(x_n) + \nabla f(x_n') \cdot(x'_n - x_n) + \frac{L\,\alpha^2}2\|v_n\|^2 - \frac\eta2 \,\|\nabla f(x_n')\|^2\right]\\
		\E\big[ \|x_{n+1}' - x_{n+1}\|^2\big] &= \rho^2 \E\big[ \| (x'_n - x_n)\|^2 - 2\alpha\,(x'_n - x_n)\cdot \,g'_n + \alpha^2\,\|g'_n\|^2\big]
	\end{align*}
	by Lemmas \ref{lemma momentum decrease} and \ref{lemma gd decrease}. We deduce that
	\begin{align*}
		\L_{n+1} &\leq \E\big[f(x_n)\big] + \left(1-\lambda\alpha\rho^2\right) \E\big[\nabla f(x_n') \cdot (x'_n - x_n)\big] + \frac{L + \lambda\rho^2}{2} \,\E\big[\|x'_n - x_n\|^2\big]\\
		&\qquad \qquad + \frac{\lambda \rho^2\alpha \cdot\alpha (1+\sigma^2)-\eta}2 \,\E\big[\|\nabla f(x_n')\|^2\big]\\
		&\leq \L_n  + \frac{\lambda \rho^2\alpha \cdot\alpha (1+\sigma^2) -\eta}2 \,\E\big[\|\nabla f(x_n')\|^2\big]
	\end{align*}
	under the conditions 
	\[
	1-\lambda\alpha\rho^2 = 0, \qquad L + \lambda\rho^2 \leq \lambda.
	\]
	The first condition implies that $\lambda = (\alpha \rho^2)^{-1}$, so the second one reduces to
	\[
	(1-\rho^2)\lambda = \frac{1-\rho^2}{\rho^2\alpha } \geq L \quad\LRa\quad 1- \rho^2 \geq L\rho^2\alpha \quad\LRa\quad 1 \geq (1+L\alpha)\rho^2.
	\]
	Finally, we consider the last equation. If
	\[
	\eps:= \eta - \lambda \rho^2\alpha \cdot\alpha (1+\sigma^2) = \eta - \alpha (1+\sigma^2)>0,
	\] 
	then we find that
	\[
	\E\left[ f(x_0) + \frac1{\alpha\rho^2} \,\|v_0\|^2- \inf f\right] \geq \L_1 - \L_{n+1} = \sum_{i=0}^n (\L_i - \L_{i+1}) \geq \frac\eps2 \sum_{i=1}^n \E\big[\|\nabla f(x_i)\|^2\big]
	\]
	and hence
	\[
	\min_{0\leq i\leq n} \E\big[\|\nabla f(x_i)\|^2\big] \leq \frac1{n+1} \sum_{i=1}^n \E\big[\|\nabla f(x_i)\|^2\big] \leq \frac{ 2\E\left[ f(x_0) - \inf f + \frac1{\alpha\rho^2} \,\|v_0\|^2\right]}{\eps (n+1)}.\qedhere
	\]
\end{proof}

    \section{Proof of Lemma \ref{lemma noise scaling}: Scaling intensity of minibatch noise}\label{appendix noise}

    In this appendix, we provide theoretical justification for the multiplicative noise scaling regime considered in this article. Recall our main statement:

    \noise*

    \begin{proof}
    Since $\nabla \Risk = \frac1n\sum_{i=1}^n \nabla \ell_i$, we observe that
    \begin{align*}
    \frac1n\sum_{i=1}^n\left\|\nabla \ell_i - \nabla \Risk\right\|^2 &\leq \frac1n\sum_{i=1}^n\left\|\nabla \ell_i\right\|^2
    \end{align*}
    as the average of a quantity is the unique value which minimizes the mean square discrepancy: $\E X = \argmin_{a\in \R} \E\big[ |X-a|^2\big]$. We further find by H\"older's inequality that
    \begin{align*}
    \frac1n\sum_{i=1}^n\left\|\nabla \ell_i\right\|^2 &= \frac1n \sum_{i=1}^n \left\|\sum_{j=1}^k 2\big(h_j(w, x_i) - y_{i,j}\big)\nabla_wh_j(w,x_i)\right\|^2\\
    &\leq \frac4n \sum_{i=1}^n \left(\sum_{j=1}^k 
    \big(h_j(w, x_i) - y_{i,j}\big)^2\right)\left( \sum_{j=1}^k\big\|\nabla_w h_j(w,x_i)\big\|_2^2\right)\\
    &= \frac4n \sum_{i=1}^n \|h(w,x_i)-y_i\|_2^2\,\|\nabla_w h(w,x_i)\|^2\\
    &\leq 4C^2\big(1+\|w\|^2\big)^{2p}\,\frac1n \sum_{i=1}^n \|h(w,x_i) - y_i\|^2_2\\
    &=  4C^2\big(1+\|w\|^2\big)^{2p}\,\Risk(w).
    \end{align*}
    \end{proof}

    \section{Implementation aspects}\label{appendix implementation}

    We discuss some implementation in this section. All the code used for the experiments in the paper has been provided in the supplementary materials.
    The experiments in section Section \ref{section numerical} and Appendix \ref{appendix simulations} were run on Google Colab for compute time less than an hour. The experiments in Section \ref{section regression} were run on a laptop CPU with compute time less than an hour. The experiments in Sections \ref{section image classification} and \ref{hyperparameter comparison} were run on a single current generation GPU in a local cluster for up to 50 hours.  An additional compute of no more than 200 hours on a single GPU was used for experiments which were ultimately not used in the submitted version.
    
    \subsection{The last iterate}
    
    All neural-network based experiments were performed using the PyTorch library. Gradient-based optimizers in PyTorch and TensorFlow are implemented in such a way that gradients are computed outside of the optimizer and the point returned by an optimizer step is the point for the next gradient evaluation. This strategy facilitates the manual manipulation of gradients by scaling, clipping or masking to train only a subset of the network parameters.

    The approach is theoretically justified for SGD. Guarantees for NAG and AGNES, on the other hand, are given for $f(x_n)$ rather than $f(x_n')$, i.e.\ not at the point where the gradient is evaluated. A discrepancy arises between theory and practice.\footnote{\ For instance, the implementations of NAG in PyTorch and Tensorflow return $x_n'$ rather than $x_n$.}\ In Algorithm \ref{descent algorithm}, this discrepancy is resolved by taking a final gradient descent step in the last time step and returning the sequence $x_n'$ at intermediate steps. In our numerical experiments, we did not include the final gradient descent step. Skipping the gradient step in particular allows for an easier continuation of simulations beyond the initially specified stopping time, if so desired. We do not anticipate major differences under realistic circumstances. This can be justified analytically in convex and strongly convex optimization, at least for a low learning rate.

        \begin{lemma}
        If $\eta< \frac1{3L}$, then
        \[
        \E\big[ f(x_n') - f(x^*)\big]\leq  \frac{\E[f(x_{n+1})-f(x^*)]}{1-3L\eta}.
        \]
        \end{lemma}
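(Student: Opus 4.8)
The plan is to exploit the fact that the returned point $x_n'$ and the gradient-step point $x_{n+1}$ differ by exactly one (stochastic) gradient step, $x_{n+1} = x_n' - \eta g_n$, and to relate $f(x_n')$ to $f(x_{n+1})$ using the \emph{first-order convexity lower bound} rather than a full $L$-smoothness expansion. Applying $f(x_{n+1}) \geq f(x_n') + \nabla f(x_n')\cdot(x_{n+1}-x_n')$ and substituting $x_{n+1}-x_n' = -\eta g_n$ gives, for every realization,
\[
f(x_n') \leq f(x_{n+1}) + \eta\,\df\cdot g_n .
\]
The crucial point is that convexity produces no $\|g_n\|^2$ term at all, so no factor of $(1+\sigma^2)$ will enter.

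Next I would take expectations and invoke the fourth claim of Lemma \ref{sufficient decrease}, namely $\E[\df\cdot g_n] = \E[\norm{\df}^2]$, to obtain $\E[f(x_n')] \leq \E[f(x_{n+1})] + \eta\,\E[\norm{\df}^2]$. Subtracting $f(x^*)$ and using the bound $\|\nabla f\|^2 \leq 2L(f-\inf f)$ recorded in Appendix \ref{appendix auxiliary} at the point $x_n'$, which yields $\E[\norm{\df}^2] \leq 2L\,\E[f(x_n')-f(x^*)]$, one arrives at
\[
\big(1 - 2L\eta\big)\,\E[f(x_n')-f(x^*)] \leq \E[f(x_{n+1})-f(x^*)].
\]
The conclusion then follows because $\eta < \tfrac{1}{3L}$ forces $1 - 2L\eta \geq 1 - 3L\eta > 0$, and both sides are nonnegative since $x^*$ minimizes $f$; dividing by $1-2L\eta$ and weakening the denominator to $1-3L\eta$ gives the stated inequality.

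The only real subtlety is the choice of which inequality to use in the first step, and this is where I expect the main friction. A naive route via \eqref{L-smoothness-equation} applied ``backwards'' (base point $x_{n+1}$, target $x_n'$) would expand to $f(x_n') \leq f(x_{n+1}) + \eta\,\nabla f(x_{n+1})\cdot g_n + \tfrac{L\eta^2}{2}\|g_n\|^2$; controlling the cross term then requires the Lipschitz estimate $\|\nabla f(x_{n+1})-\df\| \leq L\eta\|g_n\|$ and leaves $\|g_n\|^2$ terms whose expectation carries a factor $(1+\sigma^2)$ through $\E[\norm{g_n}^2] = (1+\sigma^2)\E[\norm{\df}^2]$. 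That would only close under the stronger smallness condition $3L\eta(1+\sigma^2)\leq 1$, not the clean threshold $\eta<\tfrac{1}{3L}$. Recognizing that convexity sidesteps the quadratic-in-$g_n$ terms entirely — and hence removes the noise dependence from this particular estimate — is the key observation, and I would make the convexity assumption (already in force throughout this part of the appendix) explicit at the outset.
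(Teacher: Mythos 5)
Your proof is correct, and it takes a genuinely different route from the paper's. The paper never invokes convexity here: it runs the two-sided version of the argument in Lemma \ref{lemma gd decrease}, expanding $f(x_{n+1}) = f(x_n'-\eta g_n)$ around $x_n'$ with the $L$-smoothness correction bounded from below as well as above, which yields the sandwich $\E\big[f(x_n') - \tfrac{3\eta}{2}\norm{\nabla f(x_n')}^2\big] \leq \E[f(x_{n+1})] \leq \E\big[f(x_n') - \tfrac{\eta}{2}\norm{\nabla f(x_n')}^2\big]$, and then applies $\norm{\nabla f}^2 \leq 2L(f-\inf f)$ exactly as you do. The coefficient $\tfrac{3\eta}{2}$ there is not noise-free: it arises from absorbing the quadratic term $\tfrac{L\eta^2}{2}\E[\norm{g_n}^2] = \tfrac{L\eta^2(1+\sigma^2)}{2}\E[\norm{\nabla f(x_n')}^2]$ via the standing step-size restriction $\eta \leq \tfrac{1}{L(1+\sigma^2)}$ --- precisely the friction you identified in your final paragraph, except that the paper tolerates it because that restriction is in force throughout, and it is what turns your constant $2$ into the stated $3$. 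Your route buys two things: the hypothesis $\eta < \tfrac{1}{3L}$ becomes self-contained, with no hidden dependence on $\sigma$, and you obtain the sharper constant $1-2L\eta$ before weakening the denominator to $1-3L\eta$. What it costs is convexity, which the paper's smoothness-only argument does not need (it applies to any $L$-smooth $f$ with $f(x^*)=\inf f$, hence also near generic critical points in the non-convex setting); since this subsection is explicitly about justifying the returned iterate in convex and strongly convex optimization, that cost is immaterial here, but you are right that on your route the convexity hypothesis should be stated explicitly in the lemma.
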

        
        \begin{proof}
        By essentially the same proof as Lemma \ref{lemma gd decrease}, we have
        \[
        \E\left[f(x_n') - \frac{3\eta}2 \|\nabla f(x_n')\|^2 \right]\leq\E\big[ f(x_{n+1})\big] \leq \E\left[f(x_n') - \frac\eta2 \|\nabla f(x_n')\|^2\right],
        \]
        since the correction term to linear approximation is bounded by the $L$-Lipschitz continuity of $\nabla f$ both from above and below. Recall furthermore that
        \[
        \|\nabla f(x)\|^2 \leq 2L\,\big(f(x) - f(x^*)\big)
        \]
        for all $L$-smooth functions. Thus 
        \[
        (1-3L\eta) \E \big[f(x_n')-f(x^*)\big] \leq \E\left[ f(x_n') - \frac{3\eta}2 \|\nabla f(x_n')\|^2 \right]\leq \E\big[f(x_{n+1})\big].
        \]
        In particular, if $1-3L\eta>0$, then
        \[
        \E\big[ f(x_n') -f(x^*)\big] \leq \frac1{1-3L\eta}\,\E[f(x_{n+1})-f(x^*)]. \qedhere
        \]
        \end{proof}

        The condition $\eta<1/(3L)$ is guaranteed if the stochastic noise scaling satisfies $\sigma> \sqrt 2$ since then $1-3L\eta \geq 1-\frac3{1+\sigma^2}$. For $\eta = 1/((1+\sigma^2)L$, we than find that
        \[
        \E\big[ f(x_n') - f(x^*)\big]\leq  \frac{\E[f(x_{n+1})-f(x^*)]}{1-\frac{3}{1+\sigma^2}} = \frac{\sigma^2 + 1}{\sigma^2-2} \, \E[f(x_{n+1})-f(x^*)].
        \]

    \subsection{Weight decay}\label{appendix weight decay}

    Weight decay is a machine learning tool which controls the magnitude of the coefficients of a neural network. In the simplest SGD setting, weight decay takes the form of a modified update step
    \[
    x_{n+1} = (1-\lambda\eta)x_n - \eta g_n
    \]
    for $\lambda>0$. 
    A gradient flow is governed by (1) an energy to be minimized and (2) an energy dissipation mechanism \citep{peletier2014variational}. It is known that different energy/dissipation pairings may induce the same dynamics -- for instance, \cite{doi:10.1137/S0036141096303359} show that the heat equation is both the $L^2$-gradient flow of the Dirichlet energy and the Wasserstein gradient flow of the entropy function. 

    In this language, weight decay can be interpreted in two different ways:
    \begin{enumerate}
        \item We minimize a modified objective function $x\mapsto f(x) + \frac{\lambda}{2} \,\|x\|^2$ which includes a Tikhonov regularizer. The gradient estimates are stochastic for $f$ and deterministic for the regularizer. This perspective corresponds to including weight decay as part of the {\em energy}.

        \item We dynamically include a confinement into the optimizer which pushes back against large values of $x_n$. This perspective corresponds to including weight decay as part of the {\em dissipation}.
    \end{enumerate}    
    In GD, both perspectives lead to the same optimization algorithm.
    In advanced minimizers, the two perspectives no longer coincide. For Adam, \cite{loshchilov2018fixing,loshchilovdecoupled} initiated a debate on the superior strategy of including weight decay. We note that the two strategies do not coincide for AGNES, but do not comment on the superiority of one over the other:

\begin{enumerate}
    \item Treating weight decay as a dynamic property of the optimizer leads to an update rule like
    \[
    x_n' = x_n +\alpha v_n, \qquad v_{n+1} = \rho\big(v_n -  g'_n\big), \qquad x_{n+1} = (1-\lambda\eta) x_n' - \eta g'_n.
    \]

    \item Treating weight decay as a component of the objective function to be minimized leads to the update rule
    \[
        x_n' = x_n +\alpha v_n, \qquad v_{n+1} = \rho\left(v_n -  g'_n - {\lambda}\, \,x_n'\right), \qquad x_{n+1} = (1-\lambda\eta) x_n' - \eta g'_n.
    \]
\end{enumerate}
In our numerical experiments, we choose the second approach, viewing weight decay as a property of the objective function rather than the dissipation. This coincides with the approach taken by the SGD (and SGD with momentum) optimizer as well as Adam (but not AdamW).

\newpage
\section*{NeurIPS Paper Checklist}

\begin{enumerate}

\item {\bf Claims}
    \item[] Question: Do the main claims made in the abstract and introduction accurately reflect the paper's contributions and scope?
    \item[] Answer: \answerYes{} % Replace by \answerYes{}, \answerNo{}, or \answerNA{}.
    \item[] Justification: We wrote the abstract and introduction with the goal to summarize our main contributions accurately and precisely.
    \item[] Guidelines:
    \begin{itemize}
        \item The answer NA means that the abstract and introduction do not include the claims made in the paper.
        \item The abstract and/or introduction should clearly state the claims made, including the contributions made in the paper and important assumptions and limitations. A No or NA answer to this question will not be perceived well by the reviewers. 
        \item The claims made should match theoretical and experimental results, and reflect how much the results can be expected to generalize to other settings. 
        \item It is fine to include aspirational goals as motivation as long as it is clear that these goals are not attained by the paper. 
    \end{itemize}

\item {\bf Limitations}
    \item[] Question: Does the paper discuss the limitations of the work performed by the authors?
    \item[] Answer: \answerYes{} % Replace by \answerYes{}, \answerNo{}, or \answerNA{}.
    \item[] Justification: We compare the algorithm proposed to commonly used methods both in convex optimization and deep learning. We dedicate Section 3 to the derivation of the noise modelling assumption and illustrate the heuristics which are being made.
    \item[] Guidelines:
    \begin{itemize}
        \item The answer NA means that the paper has no limitation while the answer No means that the paper has limitations, but those are not discussed in the paper. 
        \item The authors are encouraged to create a separate "Limitations" section in their paper.
        \item The paper should point out any strong assumptions and how robust the results are to violations of these assumptions (e.g., independence assumptions, noiseless settings, model well-specification, asymptotic approximations only holding locally). The authors should reflect on how these assumptions might be violated in practice and what the implications would be.
        \item The authors should reflect on the scope of the claims made, e.g., if the approach was only tested on a few datasets or with a few runs. In general, empirical results often depend on implicit assumptions, which should be articulated.
        \item The authors should reflect on the factors that influence the performance of the approach. For example, a facial recognition algorithm may perform poorly when image resolution is low or images are taken in low lighting. Or a speech-to-text system might not be used reliably to provide closed captions for online lectures because it fails to handle technical jargon.
        \item The authors should discuss the computational efficiency of the proposed algorithms and how they scale with dataset size.
        \item If applicable, the authors should discuss possible limitations of their approach to address problems of privacy and fairness.
        \item While the authors might fear that complete honesty about limitations might be used by reviewers as grounds for rejection, a worse outcome might be that reviewers discover limitations that aren't acknowledged in the paper. The authors should use their best judgment and recognize that individual actions in favor of transparency play an important role in developing norms that preserve the integrity of the community. Reviewers will be specifically instructed to not penalize honesty concerning limitations.
    \end{itemize}

\item {\bf Theory Assumptions and Proofs}
    \item[] Question: For each theoretical result, does the paper provide the full set of assumptions and a complete (and correct) proof?
    \item[] Answer: \answerYes{} % Replace by \answerYes{}, \answerNo{}, or \answerNA{}.
    \item[] Justification: All assumptions are summarized in Section \ref{section assumptions}. Wherever additional assumptions are made, they are stated clearly in the proof. Complete and correct proofs for all the lemmas and theorems are provided in the appendices.
    \item[] Guidelines:
    \begin{itemize}
        \item The answer NA means that the paper does not include theoretical results. 
        \item All the theorems, formulas, and proofs in the paper should be numbered and cross-referenced.
        \item All assumptions should be clearly stated or referenced in the statement of any theorems.
        \item The proofs can either appear in the main paper or the supplemental material, but if they appear in the supplemental material, the authors are encouraged to provide a short proof sketch to provide intuition. 
        \item Inversely, any informal proof provided in the core of the paper should be complemented by formal proofs provided in appendix or supplemental material.
        \item Theorems and Lemmas that the proof relies upon should be properly referenced. 
    \end{itemize}

    \item {\bf Experimental Result Reproducibility}
    \item[] Question: Does the paper fully disclose all the information needed to reproduce the main experimental results of the paper to the extent that it affects the main claims and/or conclusions of the paper (regardless of whether the code and data are provided or not)?
    \item[] Answer: \answerYes{} % Replace by \answerYes{}, \answerNo{}, or \answerNA{}.
    \item[] Justification: All code from experiments is provided in the supplementary materials.
    \item[] Guidelines:
    \begin{itemize}
        \item The answer NA means that the paper does not include experiments.
        \item If the paper includes experiments, a No answer to this question will not be perceived well by the reviewers: Making the paper reproducible is important, regardless of whether the code and data are provided or not.
        \item If the contribution is a dataset and/or model, the authors should describe the steps taken to make their results reproducible or verifiable. 
        \item Depending on the contribution, reproducibility can be accomplished in various ways. For example, if the contribution is a novel architecture, describing the architecture fully might suffice, or if the contribution is a specific model and empirical evaluation, it may be necessary to either make it possible for others to replicate the model with the same dataset, or provide access to the model. In general. releasing code and data is often one good way to accomplish this, but reproducibility can also be provided via detailed instructions for how to replicate the results, access to a hosted model (e.g., in the case of a large language model), releasing of a model checkpoint, or other means that are appropriate to the research performed.
        \item While NeurIPS does not require releasing code, the conference does require all submissions to provide some reasonable avenue for reproducibility, which may depend on the nature of the contribution. For example
        \begin{enumerate}
            \item If the contribution is primarily a new algorithm, the paper should make it clear how to reproduce that algorithm.
            \item If the contribution is primarily a new model architecture, the paper should describe the architecture clearly and fully.
            \item If the contribution is a new model (e.g., a large language model), then there should either be a way to access this model for reproducing the results or a way to reproduce the model (e.g., with an open-source dataset or instructions for how to construct the dataset).
            \item We recognize that reproducibility may be tricky in some cases, in which case authors are welcome to describe the particular way they provide for reproducibility. In the case of closed-source models, it may be that access to the model is limited in some way (e.g., to registered users), but it should be possible for other researchers to have some path to reproducing or verifying the results.
        \end{enumerate}
    \end{itemize}

\item {\bf Open access to data and code}
    \item[] Question: Does the paper provide open access to the data and code, with sufficient instructions to faithfully reproduce the main experimental results, as described in supplemental material?
    \item[] Answer: \answerYes{} % Replace by \answerYes{}, \answerNo{}, or \answerNA{}.
    \item[] Justification: All code as well as the synthetically generated data used for the regression experiments are provided in the supplementary materials.
    \item[] Guidelines:
    \begin{itemize}
        \item The answer NA means that paper does not include experiments requiring code.
        \item Please see the NeurIPS code and data submission guidelines (\url{https://nips.cc/public/guides/CodeSubmissionPolicy}) for more details.
        \item While we encourage the release of code and data, we understand that this might not be possible, so “No” is an acceptable answer. Papers cannot be rejected simply for not including code, unless this is central to the contribution (e.g., for a new open-source benchmark).
        \item The instructions should contain the exact command and environment needed to run to reproduce the results. See the NeurIPS code and data submission guidelines (\url{https://nips.cc/public/guides/CodeSubmissionPolicy}) for more details.
        \item The authors should provide instructions on data access and preparation, including how to access the raw data, preprocessed data, intermediate data, and generated data, etc.
        \item The authors should provide scripts to reproduce all experimental results for the new proposed method and baselines. If only a subset of experiments are reproducible, they should state which ones are omitted from the script and why.
        \item At submission time, to preserve anonymity, the authors should release anonymized versions (if applicable).
        \item Providing as much information as possible in supplemental material (appended to the paper) is recommended, but including URLs to data and code is permitted.
    \end{itemize}

\item {\bf Experimental Setting/Details}
    \item[] Question: Does the paper specify all the training and test details (e.g., data splits, hyperparameters, how they were chosen, type of optimizer, etc.) necessary to understand the results?
    \item[] Answer: \answerYes{} % Replace by \answerYes{}, \answerNo{}, or \answerNA{}.
    \item[] Justification: All experimental settings are described in the article and its supplementary materials. They can also be inferred in the code provided.
    \item[] Guidelines:
    \begin{itemize}
        \item The answer NA means that the paper does not include experiments.
        \item The experimental setting should be presented in the core of the paper to a level of detail that is necessary to appreciate the results and make sense of them.
        \item The full details can be provided either with the code, in appendix, or as supplemental material.
    \end{itemize}

\item {\bf Experiment Statistical Significance}
    \item[] Question: Does the paper report error bars suitably and correctly defined or other appropriate information about the statistical significance of the experiments?
    \item[] Answer: \answerYes{}{} % Replace by \answerYes{}, \answerNo{}, or \answerNA{}.
    \item[] Justification: All experiments were repeated multiple times. We provide means and standard deviations over all runs.
    \item[] Guidelines:
    \begin{itemize}
        \item The answer NA means that the paper does not include experiments.
        \item The authors should answer "Yes" if the results are accompanied by error bars, confidence intervals, or statistical significance tests, at least for the experiments that support the main claims of the paper.
        \item The factors of variability that the error bars are capturing should be clearly stated (for example, train/test split, initialization, random drawing of some parameter, or overall run with given experimental conditions).
        \item The method for calculating the error bars should be explained (closed form formula, call to a library function, bootstrap, etc.)
        \item The assumptions made should be given (e.g., Normally distributed errors).
        \item It should be clear whether the error bar is the standard deviation or the standard error of the mean.
        \item It is OK to report 1-sigma error bars, but one should state it. The authors should preferably report a 2-sigma error bar than state that they have a 96\% CI, if the hypothesis of Normality of errors is not verified.
        \item For asymmetric distributions, the authors should be careful not to show in tables or figures symmetric error bars that would yield results that are out of range (e.g. negative error rates).
        \item If error bars are reported in tables or plots, The authors should explain in the text how they were calculated and reference the corresponding figures or tables in the text.
    \end{itemize}

\item {\bf Experiments Compute Resources}
    \item[] Question: For each experiment, does the paper provide sufficient information on the computer resources (type of compute workers, memory, time of execution) needed to reproduce the experiments?
    \item[] Answer: \answerYes{} % Replace by \answerYes{}, \answerNo{}, or \answerNA{}.
    \item[] Justification: The information on compute resources is provided in the appendix.
    \item[] Guidelines:
    \begin{itemize}
        \item The answer NA means that the paper does not include experiments.
        \item The paper should indicate the type of compute workers CPU or GPU, internal cluster, or cloud provider, including relevant memory and storage.
        \item The paper should provide the amount of compute required for each of the individual experimental runs as well as estimate the total compute. 
        \item The paper should disclose whether the full research project required more compute than the experiments reported in the paper (e.g., preliminary or failed experiments that didn't make it into the paper). 
    \end{itemize}
    
\item {\bf Code Of Ethics}
    \item[] Question: Does the research conducted in the paper conform, in every respect, with the NeurIPS Code of Ethics \url{https://neurips.cc/public/EthicsGuidelines}?
    \item[] Answer: \answerYes{} % Replace by \answerYes{}, \answerNo{}, or \answerNA{}.
    \item[] Justification: The work presented here is primarily theoretical. No human subjects were involved. The datasets used are standard benchmark datasets (MNIST, CIFAR-10) or purely synthetic. No direct social consequences are anticipated.
    \item[] Guidelines:
    \begin{itemize}
        \item The answer NA means that the authors have not reviewed the NeurIPS Code of Ethics.
        \item If the authors answer No, they should explain the special circumstances that require a deviation from the Code of Ethics.
        \item The authors should make sure to preserve anonymity (e.g., if there is a special consideration due to laws or regulations in their jurisdiction).
    \end{itemize}

\item {\bf Broader Impacts}
    \item[] Question: Does the paper discuss both potential positive societal impacts and negative societal impacts of the work performed?
    \item[] Answer: \answerNA{} % Replace by \answerYes{}, \answerNo{}, or \answerNA{}.
    \item[] Justification: The main contribution of the work is an algorithm for smooth convex optimization. Foundational as the topic at large may be in various fields, it is impossible to link directly to societal impact.
    \item[] Guidelines:
    \begin{itemize}
        \item The answer NA means that there is no societal impact of the work performed.
        \item If the authors answer NA or No, they should explain why their work has no societal impact or why the paper does not address societal impact.
        \item Examples of negative societal impacts include potential malicious or unintended uses (e.g., disinformation, generating fake profiles, surveillance), fairness considerations (e.g., deployment of technologies that could make decisions that unfairly impact specific groups), privacy considerations, and security considerations.
        \item The conference expects that many papers will be foundational research and not tied to particular applications, let alone deployments. However, if there is a direct path to any negative applications, the authors should point it out. For example, it is legitimate to point out that an improvement in the quality of generative models could be used to generate deepfakes for disinformation. On the other hand, it is not needed to point out that a generic algorithm for optimizing neural networks could enable people to train models that generate Deepfakes faster.
        \item The authors should consider possible harms that could arise when the technology is being used as intended and functioning correctly, harms that could arise when the technology is being used as intended but gives incorrect results, and harms following from (intentional or unintentional) misuse of the technology.
        \item If there are negative societal impacts, the authors could also discuss possible mitigation strategies (e.g., gated release of models, providing defenses in addition to attacks, mechanisms for monitoring misuse, mechanisms to monitor how a system learns from feedback over time, improving the efficiency and accessibility of ML).
    \end{itemize}
    
\item {\bf Safeguards}
    \item[] Question: Does the paper describe safeguards that have been put in place for responsible release of data or models that have a high risk for misuse (e.g., pretrained language models, image generators, or scraped datasets)?
    \item[] Answer: \answerNA{} % Replace by \answerYes{}, \answerNo{}, or \answerNA{}.
    \item[] Justification: The main contribution of the work is theoretical and no data or models with a high risk for misuse are produced.
    \item[] Guidelines:
    \begin{itemize}
        \item The answer NA means that the paper poses no such risks.
        \item Released models that have a high risk for misuse or dual-use should be released with necessary safeguards to allow for controlled use of the model, for example by requiring that users adhere to usage guidelines or restrictions to access the model or implementing safety filters. 
        \item Datasets that have been scraped from the Internet could pose safety risks. The authors should describe how they avoided releasing unsafe images.
        \item We recognize that providing effective safeguards is challenging, and many papers do not require this, but we encourage authors to take this into account and make a best faith effort.
    \end{itemize}

\item {\bf Licenses for existing assets}
    \item[] Question: Are the creators or original owners of assets (e.g., code, data, models), used in the paper, properly credited and are the license and terms of use explicitly mentioned and properly respected?
    \item[] Answer: \answerYes{} % Replace by \answerYes{}, \answerNo{}, or \answerNA{}.
    \item[] Justification: We use the MNIST and CIFAR-10 datasets, which are cited accurately. We also use an implementation of ResNets, for which we cite the GitHub repository and reproduce the license terms in the code provided.
    \item[] Guidelines:
    \begin{itemize}
        \item The answer NA means that the paper does not use existing assets.
        \item The authors should cite the original paper that produced the code package or dataset.
        \item The authors should state which version of the asset is used and, if possible, include a URL.
        \item The name of the license (e.g., CC-BY 4.0) should be included for each asset.
        \item For scraped data from a particular source (e.g., website), the copyright and terms of service of that source should be provided.
        \item If assets are released, the license, copyright information, and terms of use in the package should be provided. For popular datasets, \url{paperswithcode.com/datasets} has curated licenses for some datasets. Their licensing guide can help determine the license of a dataset.
        \item For existing datasets that are re-packaged, both the original license and the license of the derived asset (if it has changed) should be provided.
        \item If this information is not available online, the authors are encouraged to reach out to the asset's creators.
    \end{itemize}

\item {\bf New Assets}
    \item[] Question: Are new assets introduced in the paper well documented and is the documentation provided alongside the assets?
    \item[] Answer: \answerNA{}{} % Replace by \answerYes{}, \answerNo{}, or \answerNA{}.
    \item[] Justification: No new assets are released.
    \item[] Guidelines:
    \begin{itemize}
        \item The answer NA means that the paper does not release new assets.
        \item Researchers should communicate the details of the dataset/code/model as part of their submissions via structured templates. This includes details about training, license, limitations, etc. 
        \item The paper should discuss whether and how consent was obtained from people whose asset is used.
        \item At submission time, remember to anonymize your assets (if applicable). You can either create an anonymized URL or include an anonymized zip file.
    \end{itemize}

\item {\bf Crowdsourcing and Research with Human Subjects}
    \item[] Question: For crowdsourcing experiments and research with human subjects, does the paper include the full text of instructions given to participants and screenshots, if applicable, as well as details about compensation (if any)? 
    \item[] Answer: \answerNA{} % Replace by \answerYes{}, \answerNo{}, or \answerNA{}.
    \item[] Justification: The paper does not involve crowdsourcing nor research with human subjects.
    \item[] Guidelines:
    \begin{itemize}
        \item The answer NA means that the paper does not involve crowdsourcing nor research with human subjects.
        \item Including this information in the supplemental material is fine, but if the main contribution of the paper involves human subjects, then as much detail as possible should be included in the main paper. 
        \item According to the NeurIPS Code of Ethics, workers involved in data collection, curation, or other labor should be paid at least the minimum wage in the country of the data collector. 
    \end{itemize}

\item {\bf Institutional Review Board (IRB) Approvals or Equivalent for Research with Human Subjects}
    \item[] Question: Does the paper describe potential risks incurred by study participants, whether such risks were disclosed to the subjects, and whether Institutional Review Board (IRB) approvals (or an equivalent approval/review based on the requirements of your country or institution) were obtained?
    \item[] Answer: \answerNA{} % Replace by \answerYes{}, \answerNo{}, or \answerNA{}.
    \item[] Justification: The paper does not involve crowdsourcing nor research with human subjects.
    \item[] Guidelines:
    \begin{itemize}
        \item The answer NA means that the paper does not involve crowdsourcing nor research with human subjects.
        \item Depending on the country in which research is conducted, IRB approval (or equivalent) may be required for any human subjects research. If you obtained IRB approval, you should clearly state this in the paper. 
        \item We recognize that the procedures for this may vary significantly between institutions and locations, and we expect authors to adhere to the NeurIPS Code of Ethics and the guidelines for their institution. 
        \item For initial submissions, do not include any information that would break anonymity (if applicable), such as the institution conducting the review.
    \end{itemize}

\end{enumerate}

\end{document}